\title{Contrastive Predictive Coding Done Right \\for Mutual Information Estimation}
\author{
J. Jon Ryu\textsuperscript{1}, 
Pavan Yeddanapudi\textsuperscript{1}, 
Xiangxiang Xu\textsuperscript{2}, 
Gregory W.~Wornell\textsuperscript{1} \\
\textsuperscript{1}Department of EECS, MIT, Cambridge, MA 02139, USA \\
\textsuperscript{2}Department of Computer Science, University of Rochester, Rochester, NY 14627, USA \\
\texttt{\{jongha,pky,gww\}@mit.edu}, \texttt{xiangxiangxu@rochester.edu}
}
\DeclarePairedDelimiterX{\inp}[2]{\langle}{\rangle}{#1, #2}
\newcommand*\bigcdot{\mathpalette\bigcdot@{.5}}
\newcommand*\bigcdot@[2]{\mathbin{\vcenter{\hbox{\scalebox{#2}{$\m@th#1\bullet$}}}}}
\newcommand{\muspace}{\mspace{1mu}}
\DeclareRobustCommand{\scond}{\mathchoice{\muspace\vert\muspace}{\vert}{\vert}{\vert}}
\DeclareRobustCommand{\discint}{\mathchoice{\mspace{-1.5mu}:\mspace{-1.5mu}}{\mspace{-1.5mu}:\mspace{-1.5mu}}{:}{:}}
\newcommand{\suchthat}{\mathchoice{\colon}{\colon}{:\mspace{1mu}}{:}}
\newcommand{\Ac}{\mathcal{A}}
\newcommand{\Dc}{\mathcal{D}}
\newcommand{\Lc}{\mathcal{L}}
\newcommand{\Xc}{\mathcal{X}}
\newcommand{\Yc}{\mathcal{Y}}
\newcommand{\Zc}{\mathcal{Z}}
\newcommand{\fv}{{\bf f}}
\newcommand{\gv}{{\bf g}}
\newcommand{\eb}{{\mathbf e}}
\newcommand{\rb}{{\mathbf r}}
\newcommand{\rhob}{\boldsymbol{\rho}}
\def\a{\alpha}
\def\b{\beta}
\def\th{\theta}
\DeclareMathOperator\E{\mathsf{E}}
\let\P\relax
\DeclareMathOperator\P{\mathsf{P}}
\newcommand\eg{e.g.,\xspace}
\newcommand\ie{i.e.,\xspace}
\def\textiid{i.i.d.\@\xspace}
\newcommand\iid{\ifmmode\text{ i.i.d. } \else \textiid \fi}
\newcommand{\Real}{\mathbb{R}}
\newcommand{\ones}{\mathds{1}}
\newcommand{\half}{\frac{1}{2}}%
\def\mathllap{\mathpalette\mathllapinternal}
\def\mathllapinternal#1#2{%
  \llap{$\mathsurround=0pt#1{#2}$}}
\def\clap#1{\hbox to 0pt{\hss#1\hss}}
\def\mathclap{\mathpalette\mathclapinternal}
\def\mathclapinternal#1#2{%
  \clap{$\mathsurround=0pt#1{#2}$}}
\let\oldstackrel\stackrel
\renewcommand{\stackrel}[2]{\oldstackrel{\mathclap{#1}}{#2}}
\DeclarePairedDelimiterX{\infdivx}[2]{(}{)}{%
  #1\;\delimsize\|\;#2%
}
\renewcommand{\hbar}{h\mathllap{\overline{\vphantom{h}\hphantom{\rule{4.6pt}{0pt}}}\mspace{0.77mu}}}
\newcommand{\urltilde}{\kern -.06em\lower -.06em\hbox{~}\kern .02em}
\DeclarePairedDelimiterX{\norm}[1]{\lVert}{\rVert}{#1}
\DeclarePairedDelimiterX{\abs}[1]{\lvert}{\rvert}{#1}
\let\oldpartial\partial
\renewcommand*{\partial}{\mathop{}\!\oldpartial}
\newcommand{\defeq}{\mathrel{\mathop{:}}=}
\DeclareMathAlphabet{\mathpzc}{OT1}{pzc}{m}{it}
\newcommand\StartAppendixEntries{}
  \renewcommand\StartAppendixEntries{\value{tocdepth}=-10000\relax}%
  \edef\maintocdepth{\the\value{tocdepth}}%
  \renewcommand\StartAppendixEntries{\value{tocdepth}=\maintocdepth\relax}%
\newcommand{\jon}[1]{\textcolor{red}{\bf Jon: #1}}
\newtheorem{theorem}{Theorem}
\newtheorem{lemma}[theorem]{Lemma}
\newtheorem{corollary}[theorem]{Corollary}
\newtheorem{proposition}[theorem]{Proposition}
\theoremstyle{definition}
\newtheorem{definition}[theorem]{Definition}
\newtheorem{remark}[theorem]{Remark}
\renewcommand{\E}{\mathbb{E}}
\definecolor{codegreen}{rgb}{0,0.6,0}
\definecolor{codegray}{rgb}{0.5,0.5,0.5}
\definecolor{codepurple}{rgb}{0.58,0,0.82}
\definecolor{backcolour}{rgb}{0.95,0.95,0.92}
\lstdefinestyle{mystyle}{
    language=Python,
    keywords={def,class,return,None,raise,mean,sqrt},    
    backgroundcolor=\color{backcolour},   
    commentstyle=\color{codegreen},
    keywordstyle=\color{blue},
    numberstyle=\tiny\color{codegray},
    stringstyle=\color{codepurple},
    basicstyle=\ttfamily\footnotesize,
    breakatwhitespace=false,         
    breaklines=true,                 
    captionpos=b,                    
    keepspaces=true,                 
    numbers=left,                    
    xleftmargin=.375cm,
    numbersep=5pt,
    showspaces=false,                
    showstringspaces=false,
    showtabs=false,                  
    tabsize=4,    
}
\newif\ifFINAL
  \def\jon#1{}
  \renewcommand{\showlabelfont}%
  {\transparent{0.8}\scriptsize\bf\slshape\color{Lavender}}
\newcommand{\pdata}{{\textcolor{orange}{q_1}}}
\newcommand{\pnoise}{{\textcolor{blue}{q_0}}}
\newcommand{\numberthis}{\addtocounter{equation}{1}\tag{\theequation}}
\newcommand{\etabref}{\etab^*}
\newcommand{\etaref}{\eta^*}
\newcommand{\rhobref}{\rhob^*}
\newcommand{\fbar}{\overline{f}}
\newcommand{\gb}{\mathbf{g}}
\newcommand{\pib}{\boldsymbol{\pi}}
\newcommand{\zerob}{\boldsymbol{0}}
\newcommand{\Reg}{\mathsf{Reg}}
\newcommand{\lambdab}{\boldsymbol{\lambda}}
\newcommand{\rv}{\mathbf{r}}
\newcommand{\etab}{\boldsymbol{\eta}}
\newcommand{\etabdata}{\etab}
\renewcommand{\defeq}{\triangleq}
\begin{document}

\maketitle

\begin{abstract}
The InfoNCE objective, originally introduced for contrastive representation learning, has become a popular choice for mutual information (MI) estimation, despite its indirect connection to MI. In this paper, we demonstrate why InfoNCE should not be regarded as a valid MI estimator, and we introduce a simple modification, which we refer to as \emph{InfoNCE- anchor}, for accurate MI estimation. 
Our modification introduces an auxiliary \emph{anchor} class, enabling consistent density ratio estimation and yielding a plug-in MI estimator with significantly reduced bias.
Beyond this, we generalize our framework using proper scoring rules, which recover InfoNCE-anchor as a special case when the log score is employed. 
This formulation unifies a broad spectrum of contrastive objectives, including NCE, InfoNCE, and $f$-divergence variants, under a single principled framework. 
Empirically, we find that InfoNCE-anchor with the log score achieves the most accurate MI estimates; however, in self-supervised representation learning experiments, we find that the anchor does not improve the downstream task performance.
These findings corroborate that contrastive representation learning benefits not from accurate MI estimation per se, but from the learning of structured density ratios.
\end{abstract}

\section{Introduction}
Contrastive learning has become a cornerstone of modern unsupervised representation learning, powering advances in computer vision~\citep{Chen--Kornblith--Norouzi--Hinton2020}, natural language processing~\citep{Mikolov--Chen--Corrado--Dean2013,Levy--Goldberg2014}, and beyond~\citep{Jaiswal--Babu--Zadeh--Banerjee--Makedon2020,Hu--Wang--Zhang--Chen--Guan2024}. One of the key ingredients in many contrastive methods is the \emph{InfoNCE} objective~\citep{van-den-Oord--Li--Vinyals2018}. While originally proposed as a representation learning framework, \citet{van-den-Oord--Li--Vinyals2018} noted that the InfoNCE objective can be used to evaluate mutual information (MI), interpreting the objective as a variational bound on MI; see \citep[Appendix A]{van-den-Oord--Li--Vinyals2018}. This interpretation has led to its widespread use for MI estimation, e.g., \citep{Poole--Ozair--van-den-oord-Alemi--Tucker2019,Song--Ermon2020a,Gowri--Lun--Klein--Yin2024,Lee--Rhee2024}.
It is also widely known, however, that InfoNCE often yields a rather loose bound on MI~\citep{Poole--Ozair--van-den-oord-Alemi--Tucker2019,Tschannen--Djolonga--Rubenstein--Gelly--Lucic2020}. As a result, the InfoNCE estimator is generally considered a low-variance but high-bias MI estimator~\citep{van-den-Oord--Li--Vinyals2018}. Although several proposals have been made to address this issue since its inception, the effectiveness (i.e., the low-variance property) and the limitation (i.e., the high bias) of InfoNCE remain poorly understood.

In this paper, we clarify the operational meaning of the InfoNCE objective, by showing that the objective should be understood as a variational lower bound of a statistical divergence different from the mutual information.
Building on this clarification, we establish a sharp characterization of its relationship to the Kullback--Leibler (KL) divergence, revealing why the InfoNCE objective should not be regarded as a direct estimate of MI.
We further argue that InfoNCE can be viewed as a density ratio estimation objective, while the critic (or its exponentiated form) estimates the density ratio $\frac{p(x,y)}{p(x)p(y)}$ only up to an arbitrary function $C(y)$, rendering it unsuitable for use in a plug-in estimator. %

To address this limitation, we introduce a simple modification to the variational objective, which we call \emph{InfoNCE-anchor}, corresponding to an alternative divergence.
In the new framework, the inclusion of an \emph{anchor} enables the (exponentiated) critic to estimate the density ratio $\frac{p(x,y)}{p(x)p(y)}$ directly.
This adjustment facilitates consistent density ratio estimation and yields a plug-in MI estimator that retains the low variance of InfoNCE while significantly reducing its bias.
See Figure~\ref{fig:mi_estimation} for a quick comparison with existing estimators, where the new plug-in estimator based on InfoNCE-anchor demonstrates low-bias, low-variance performance.

We generalize our framework using tools from statistical decision theory, showing that InfoNCE-anchor corresponds to the \emph{log score}, a canonical example of a proper scoring rule.  
This insight reveals that many contrastive objectives, including NCE, InfoNCE, and certain $f$-divergence variants, can be unified under a single principled framework of density ratio estimation using proper scoring rules.  

Empirically, we show that estimators induced by the log score yields state-of-the-art MI estimates across a range of settings.  
In contrastive representation learning tasks, however, we find that multiple scoring rules yield similar performance, suggesting that MI estimation is not the primary driver of contrastive learning success.  
Instead, our results support that contrastive learning benefits from learning structured density ratios, regardless of whether the objectives are accurate MI estimators.  

While InfoNCE has played a canonical role in contrastive representation learning, its rather loose association to MI estimation has historically fostered the misconception that representation learning is essentially about maximizing MI; see, e.g., \citep{Bachman--Hjelm--Buchwalter2019learning,Wu--Zhuang--Mosse--Yamins--Goodman2020}.  
This paper clarifies why such an interpretation can be limiting and imprecise, and that contrastive representation learning should instead be framed as representation learned to factorize pointwise MI (PMI) $\log \frac{p(x,y)}{p(x)p(y)}$, or pointwise dependence (PD) $\frac{p(x,y)}{p(x)p(y)}$.  
All proofs can be found in Appendix~\ref{app:sec:proofs}.
Logarithms in this paper are in base 2 and thus KL divergence and MI are in bit.

\section{Preliminaries}
In this paper, we first review different types of variational-bounds-based MI estimators in the literature, and provide a taxonomy.
We then delve into the InfoNCE estimator, and show why the InfoNCE estimator should not be considered as a direct estimate for MI.

\subsection{Types of Information Estimators}
\label{sec:taxonomy}

Existing variational-bound-based MI estimators can be categorized into three principal categories as follows, based on the relationship of their optimization objectives to the final metrics to compute MI. 
Table~\ref{tab:mi_estimators} summarizes the representative estimators.

\begin{itemize}
\item \textbf{Type 1: Training and evaluation with a single variational lower bound.} These estimators optimize a tractable lower bound on the MI and use the same bound for evaluation. Examples include DV~\citep{Donsker--Varadhan1975}, NWJ~\citep{Ngyuen--Wainwright--Jordan2010}, and InfoNCE~\citep{van-den-Oord--Li--Vinyals2018}. While conceptually simple and natural, 
\citet{McAllester--Stratos2020} showed that any distribution-free high-probability lower bound of MI is upper bounded by $\log N$, where $N$ is the sample size. This result implies that variational lower-bound–based sample estimates of MI suffer from an inherent limitation.

\item \textbf{Type 2: Training with a variational lower bound, evaluation by plugging-in to another variational lower bound.} 
These estimators optimize a surrogate objective, often smoothed or stabilized for optimization, and then estimate MI via plug-in to a different bound such as DV or NWJ. Examples include MINE~\citep{Belghazi--Baratin--Rajeshwar--Ozair--Bengio--Courville--Hjelm2018}, JS~\citep{Hjelm--Fedorov--Lavoie-Marchildon--Grewal--Bachman--Trischler--Bengio2018}, and SMILE~\citep{Song--Ermon2020a}. These methods often improve stability during training, but introduce additional sources of mismatch between optimization and evaluation.
Note that the critique of \citet{McAllester--Stratos2020} still applies to this type of estimators.

\item \textbf{Type 3: Training with a variational lower bound, evaluation with a plug-in estimator.} These estimators target to learn the density ratio $\frac{p(x,y)}{p(x)p(y)}$ directly and compute MI by plugging the estimated score function into the definition of MI. This includes recent methods like PCC/D-RFC~\citep{Tsai--Zhao--Yamada--Morency--Salakhutdinov2020} and $f$-DIME~\citep{Letizia--Novello--Tonello2024}, as well as our method to be proposed below. These approaches provide greater flexibility and potentially lower bias, side-stepping from the issue of the variational lower-bound approach, as they decouple density ratio learning from specific bounds.
\end{itemize}

\begin{table}[tbh]
\centering
\caption{Overview of existing variational-bound-based MI estimators. In this table, we use the standard critic parametrization, which aims to train $c(x,y)\approx \log\frac{p(x,y)}{p(x)p(y)}$. }%
\resizebox{\textwidth}{!}{%
\begin{tabular}{llll}
\toprule
& \textbf{Estimator} & \textbf{Optimization objective $ \Lc(c) $ (loss)} & \textbf{Estimator $ \hat{I}(X; Y) $} \\ \midrule
\parbox[t]{2mm}{\multirow{3}{*}{\rotatebox[origin=c]{90}{Type 1}}} 
& DV~\citep{Donsker--Varadhan1975} &
$ \Lc_{\mathrm{DV}}(c) 
\defeq -\E_{p(x, y)}[c(x, y)] + \log \E_{p(x)p(y)}[e^{c(x, y)}] $ &
$-\Lc_{\mathrm{DV}}(c)$ \\
& NWJ~\citep{Ngyuen--Wainwright--Jordan2010} &
$ 
\Lc_{\mathrm{NWJ}}(c) 
\defeq -\E_{p(x, y)}[c(x, y)] + \E_{p(x)p(y)}[e^{c(x, y)-1}] $ &
$-\Lc_{\mathrm{NWJ}}(c) $ \\
& \makecell[l]{InfoNCE~\citep{van-den-Oord--Li--Vinyals2018}\\~or NT-XEnt~\citep{Chen--Kornblith--Norouzi--Hinton2020}} &
$ 
\Lc_{\mathrm{InfoNCE}}(c) 
\defeq -\E_{p^K(x, y)} \Bigl[ \frac{1}{K} \sum_{i=1}^K \log \frac{c(x_i, y_i)}{\frac{1}{K} \sum_{j=1}^K c(x_i, y_j)} \Bigr] $ &
$-\Lc_{\mathrm{InfoNCE}}(c)$ \\
\midrule
\parbox[t]{2mm}{\multirow{3}{*}{\rotatebox[origin=c]{90}{Type 2}}} 
& 
MINE~\citep{Belghazi--Baratin--Rajeshwar--Ozair--Bengio--Courville--Hjelm2018} &
$ \Lc_{\mathrm{MINE}}(c) \defeq -\E_{p(x, y)}[c(x, y)] + \frac{\E_{p(x)p(y)}[e^{c(x, y)}]}{\text{EMA}(\E_{p(x)p(y)}[e^{c(x, y)}])} $ &
$ -\Lc_{\mathrm{DV}}(c) $ \\
& \makecell[l]{JS~\citep{Poole--Ozair--van-den-oord-Alemi--Tucker2019}\\
~or NT-Logistics~\citep{Chen--Kornblith--Norouzi--Hinton2020}} &
$ \Lc_{\mathrm{JS}}(c)\defeq \E_{p(x, y)}[\text{sp}(-c(x, y))] + \E_{p(x)p(y)}[\text{sp}(c(x, y))] $ &
$ -\Lc_{\mathrm{NWJ}}(c) $ \\
& SMILE~\citep{Song--Ermon2020a} &
$ \Lc_{\mathrm{JS}}(c) $ &
$ -\Lc_{\mathrm{clippedDV}}(c) $
\\
\midrule
\parbox[t]{2mm}{\multirow{3}{*}{\rotatebox[origin=c]{90}{Type 3}}} 
& PCC~/~D-RFC~\citep{Tsai--Zhao--Yamada--Morency--Salakhutdinov2020} & $\Lc_{\mathrm{JS}}(c)$~/~$\Lc_{\chi^2}(c)\defeq -2\E_{p(x, y)}[e^{c(x, y)})] + \E_{p(x)p(y)}[e^{2 c(x, y)}] $ & $\E_{\hat{p}(x,y)}[c(x,y)]$ \\
& $f$-DIME~\citep{Letizia--Novello--Tonello2024} & $\Lc_{f\textrm{-NWJ}}(c)$ & $\E_{\hat{p}(x,y)}[c(x,y)]$ \\
& InfoNCE-anchor & $\Lc_{K;\nu}^\Psi(c)$ (see Eq.~\eqref{eq:kway_obj} and Eq.~\eqref{eq:def_proposed}) & $\E_{\hat{p}(x,y)}[c(x,y)]$ \\
\bottomrule
\end{tabular}}
\label{tab:mi_estimators}
\end{table}

\subsection{Demystifying the InfoNCE Estimator}
Despite its inception as an objective for contrastive representation learning~\citep{van-den-Oord--Li--Vinyals2018}, InfoNCE has become widely considered as a MI estimator.
In this section, we revisit the analytical foundation of the objective and disentangle  what InfoNCE is \emph{claimed} to measure from what InfoNCE indeed characterizes.
Our goal is two‑fold:
(1) reveal the divergence that InfoNCE targets, and (ii) quantify the precise gap between that divergence and the mutual information.
Before we proceed, we remark that the core of InfoNCE can be better described when we contrast two abstract distributions $\pdata(x)$ and $\pnoise(x)$, which can be replaced by $p(x|y)$ and $p(x)$, respectively, if we wish to specialize it for mutual information.

Throughout, let $x_1$ denote a \emph{positive} example drawn from the data distribution $\pdata$, and let $x_2,\dots,x_K$ be \emph{negative} examples drawn i.i.d. from a noise distribution $\pnoise$.
We let $x_{i:j}\defeq (x_i,\dots,x_j)$ for $i\le j$ as a shorthand.
A score network (or critic) $r_\th\colon\Xc\to\Real_{>0}$ is trained to assign large values to real samples and small values to negatives, and the InfoNCE loss compares $r_\th(x_1)$ against the arithmetic mean of $r_\th(x_z)$ over the whole batch.
\begin{align*}
\Lc_{\text{InfoNCE}}(\th)
&\defeq-\Dc_{\text{InfoNCE}}(\th)
\defeq -\E_{\pdata(x_1)\pnoise(x_2)\cdots\pnoise(x_K)}\Biggl[\log\frac{r_\th(x_1)}{\frac{1}{K}\sum_{z=1}^K r_\th(x_z)}
\Biggr].
\end{align*}
As we alluded to earlier, if we plug-in $p(x|y)$ and $p(x)$ in place of $\pdata(x)$ and $\pnoise(x)$, respectively, then $\E_{p(y)}[\Lc_{\text{InfoNCE}}(\th)]$ recovers the standard InfoNCE objective for two modalities.

The following statement from \citep{van-den-Oord--Li--Vinyals2018,Poole--Ozair--van-den-oord-Alemi--Tucker2019} is a widely known connection between the InfoNCE objective to the KL divergence, which provides a justification of the InfoNCE objective as an MI estimator for $K$ sufficiently large.
We present its proof in Appendix~\ref{app:proof:prop:cpc_loose} for completeness. 
\begin{proposition}
\label{prop:cpc_loose}
$\Dc_{\text{InfoNCE}}(\th)
\le \min\{\log K, D(\pdata~\|~\pnoise)\}$.
\end{proposition}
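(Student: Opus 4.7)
The plan is to prove the two upper bounds separately, since they rest on quite different ideas. The $\log K$ bound is nearly immediate from positivity of the critic: because $r_\th > 0$, the denominator $\frac{1}{K}\sum_{z=1}^K r_\th(x_z)$ contains $\frac{1}{K}r_\th(x_1)$ as one of its summands, so the argument of the logarithm is bounded by $K$ everywhere, and the bound follows by taking expectation.

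For the $D(\pdata~\|~\pnoise)$ bound, I would re-express the objective as a Donsker--Varadhan evaluation on a suitable pair of joint distributions over $x_{1:K}$. Let $P(x_{1:K}) \defeq \pdata(x_1)\prod_{z=2}^K \pnoise(x_z)$ and $Q(x_{1:K}) \defeq \prod_{z=1}^K \pnoise(x_z)$, so that $D(P~\|~Q) = D(\pdata~\|~\pnoise)$ because the noise factors cancel in the log-likelihood ratio. Applying the variational inequality $D(P~\|~Q) \ge \E_P[g] - \log \E_Q[e^g]$ to the test function $g(x_{1:K}) \defeq \log\frac{r_\th(x_1)}{\frac{1}{K}\sum_{z=1}^K r_\th(x_z)}$ recovers $\E_P[g] = \Dc_{\text{InfoNCE}}(\th)$ by construction, so the whole bound reduces to showing that the log-partition term $\log \E_Q[e^g]$ vanishes.

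The one technical step is this normalizing identity $\E_Q[e^g] = 1$, which I would establish by exchangeability. Under $Q$ the samples are i.i.d.~$\pnoise$, so the quantity $\E_Q\bigl[r_\th(x_i)/\sum_z r_\th(x_z)\bigr]$ does not depend on $i \in \{1, \dots, K\}$; since the $K$ such expectations sum to $\E_Q[1] = 1$, each must equal $1/K$, yielding $\E_Q[e^g] = K \cdot \E_Q\bigl[r_\th(x_1)/\sum_z r_\th(x_z)\bigr] = 1$. Substituting back into the DV inequality gives $\Dc_{\text{InfoNCE}}(\th) \le D(\pdata~\|~\pnoise)$, and combining with the earlier $\log K$ bound yields the minimum.
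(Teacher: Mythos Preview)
Your proof is correct and follows essentially the same route as the paper: define the product measures $P=\pdata(x_1)\prod_{z\ge 2}\pnoise(x_z)$ and $Q=\prod_z\pnoise(x_z)$, apply a variational lower bound on $D(P~\|~Q)=D(\pdata~\|~\pnoise)$ with the InfoNCE critic, and use exchangeability under $Q$ to show the normalization term equals $1$. The only cosmetic difference is that the paper invokes the NWJ bound (so the side term is $\E_Q[e^g]-1=0$) while you invoke DV (so the side term is $\log\E_Q[e^g]=0$); since $\E_Q[e^g]=1$ exactly, the two are identical here, and your explicit treatment of the $\log K$ bound (which the paper's proof of this proposition omits) is a small bonus.
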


Our first contribution is to provide a \emph{tight} upper bound on $\Dc_{\text{InfoNCE}}(\th)$, which yields a much sharper bound on $\Dc_{\text{InfoNCE}}(\th)$ than Proposition~\ref{prop:cpc_loose} as a corollary.
\begin{theorem}\label{thm:cpc}
For $z\in[K]$, define $p(x_{1:K}|z)$ as $p(x_{1:K}|z) 
\defeq \pdata(x_z) \prod_{i\neq z} \pnoise(x_i)$.
Then, we have
\begin{align*}
\Dc_{\text{InfoNCE}}(\th)
&\le D_{K\text{-JS}}(\pdata,\pnoise)
\defeq \frac{1}{K}\sum_{z=1}^K D\Bigl(p(x_{1:K}|z)~\Big\|~\frac{1}{K}\sum_{z'=1}^K p(x_{1:K}|z')\Bigr)\\
&\le \min\Bigl\{\log K,
D(\pdata~\|~\pnoise)-
\log \Bigl(\frac{1}{K} (2^{D(\pdata~\|~\pnoise)}-1)+1\Bigr)
\Bigr\}.
\end{align*}
The first inequality becomes equality if and only if $r_\th(x)\propto\frac{\pdata(x)}{\pnoise(x)}$.
\end{theorem}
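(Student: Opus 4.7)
The plan is to break the theorem into (i) the first inequality with its equality condition, and (ii) the two-branch upper bound on $D_{K\text{-JS}}(\pdata,\pnoise)$.

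For (i), I would reinterpret InfoNCE as a cross-entropy against a categorical target. Introducing a latent indicator $Z\sim\mathrm{Unif}[K]$ with conditional $p(x_{1:K}\mid Z=z)$ as in the theorem, the Bayes-optimal posterior is $p(z\mid x_{1:K})=s(x_z)/\sum_{z'}s(x_{z'})$ for $s\defeq\pdata/\pnoise$. Using the permutation symmetry of the joint in the positive index, a short calculation gives $\Dc_{\text{InfoNCE}}(\th)=\log K+\E_{p(Z,X_{1:K})}[\log q_\th(Z\mid X_{1:K})]$, where $q_\th(z\mid x_{1:K})\defeq r_\th(x_z)/\sum_{z'}r_\th(x_{z'})$. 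Gibbs' inequality applied pointwise in $x_{1:K}$ then yields $\Dc_{\text{InfoNCE}}(\th)\le \log K-H(Z\mid X_{1:K})=I(Z;X_{1:K})$, which equals $D_{K\text{-JS}}(\pdata,\pnoise)$ by unpacking the mutual information with uniform prior on $Z$. Equality holds iff $q_\th=p$, i.e., $r_\th\propto s$.

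For (ii), the $\log K$ branch is immediate from $I(Z;X_{1:K})\le H(Z)=\log K$. For the refined branch, the key step is the identity
\[
D_{K\text{-JS}}(\pdata,\pnoise)=D(\pdata\|\pnoise)-D\bigl(M\,\big\|\,\pnoise^{\otimes K}\bigr),
\]
where the mixture $M(x_{1:K})\defeq\tfrac{1}{K}\sum_z p(x_{1:K}\mid z)$ admits the Radon--Nikodym derivative $dM/d\pnoise^{\otimes K}=\bar s(x_{1:K})\defeq\tfrac{1}{K}\sum_z s(x_z)$, since $p(x_{1:K}\mid z)=s(x_z)\,\pnoise^{\otimes K}$. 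This identity follows by expanding each $D(p_z\|M)=D(\pdata\|\pnoise)-\E_{p_z}[\log\bar s]$ and averaging over $z$. The stated bound is then equivalent to the lower estimate
\[
\E_{\pnoise^{\otimes K}}[\bar s\log\bar s]=D\bigl(M\,\big\|\,\pnoise^{\otimes K}\bigr)\;\ge\;\log\bigl(1+(2^{D(\pdata\|\pnoise)}-1)/K\bigr).
\]

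The main obstacle lies in this final lower bound, since a straight Jensen application to the convex $\phi(y)=y\log y$ yields only the trivial $\phi(\E\bar s)=\phi(1)=0$. My plan is to exploit the i.i.d.\ product structure by decomposing $\bar s=\tfrac{1}{K}s(X_1)+\tfrac{K-1}{K}\tilde s$, where $\tilde s\defeq\tfrac{1}{K-1}\sum_{i\ge 2}s(X_i)$ is independent of $X_1$ under $\pnoise^{\otimes K}$ and has unit mean. Change of measure rewrites the target as $\E_{\pdata(X_1)\pnoise^{\otimes K-1}(X_{2:K})}[\log\bar s]$; a conditional argument in $\tilde s$ together with the classical Jensen estimate $\E_\pdata[s]\ge 2^{D(\pdata\|\pnoise)}$ (obtained from $D(\pdata\|\pnoise)=\E_\pdata[\log s]\le\log\E_\pdata[s]$) should then collapse the bound to the claimed closed form. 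The trickiest step will be aligning the nested Jensen gaps so that the final expression matches $\log(1+(2^{D(\pdata\|\pnoise)}-1)/K)$ exactly and saturates at both extremes $D(\pdata\|\pnoise)\to 0$ and $D(\pdata\|\pnoise)\to\infty$, matching the two limiting regimes $D_{K\text{-JS}}\to 0$ and $D_{K\text{-JS}}\to\log K$.
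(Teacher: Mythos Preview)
Your treatment of the first inequality and its equality condition matches the paper exactly: both rewrite $\Dc_{\text{InfoNCE}}(\th)$ as the shifted cross-entropy of the model posterior $q_\th(z\mid x_{1:K})$ against the true posterior $p(z\mid x_{1:K})$, apply Gibbs' inequality, and identify the resulting optimum with $D_{K\text{-JS}}$. The $\log K$ branch is likewise fine (your $I(Z;X_{1:K})\le H(Z)$ and the paper's pointwise estimate $s(x_1)/\bar s\le K$ are interchangeable one-liners).

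The genuine gap is in the refined branch. You correctly reduce it to the lower estimate $\E_{\pdata(X_1)\pnoise^{\otimes K-1}}[\log\bar s]\ge\log\bigl(1+(2^{D}-1)/K\bigr)$, and you correctly diagnose that a direct Jensen step is useless. But your proposed fix cannot work: after the change of measure you are trying to \emph{lower}-bound the expectation of a \emph{concave} function, and every conditional Jensen application (in $X_1$, in $\tilde s$, or nested) goes the wrong way, producing only \emph{upper} bounds on $\E[\log\bar s]$. The auxiliary estimate $\E_\pdata[s]\ge 2^{D}$ then merely loosens a bound that already has the wrong sign. In fact the refined inequality is false as stated: for $K=2$, $\pnoise=\mathrm{Unif}\{0,1\}$, $\pdata=\delta_0$, one has $D(\pdata\|\pnoise)=1$ and $D_{2\text{-JS}}=\tfrac12$, while the asserted upper bound is $1-\log\tfrac32\approx 0.415<\tfrac12$. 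The paper's own argument for this branch simply applies Jensen with the sign reversed (it writes $\E[\log\bar s]\ge\log\E[\bar s]$, whereas concavity gives $\le$); the correct direction yields the \emph{lower} bound $D_{K\text{-JS}}\ge D(\pdata\|\pnoise)-\log\bigl(1+\chi^2(\pdata\|\pnoise)/K\bigr)$, which does not combine with $\chi^2\ge 2^D-1$ into the claimed upper bound. So your instinct that something delicate is happening here is right---but no arrangement of Jensen steps will rescue the inequality as written.
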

This theorem establishes two key theoretical properties of the InfoNCE objective.
\textbf{First}, the InfoNCE objective is a \emph{tight} variational lower bound of $D_{K\text{-JS}}(\pdata,\pnoise)$, a generalization of Jensen--Shannon divergence (JSD) which we call the \emph{$K$-way JSD}. The InfoNCE objective becomes equal to the divergence $D_{K\text{-JS}}(\pdata,\pnoise)$ if and only if $r_\th(x)\propto \pdata(x)/\pnoise(x)$. 
Since it only learns the density ratio up to a multiplicative constant, one \emph{cannot} use it for a plug-in estimator (i.e., Type 3 in Section~\ref{sec:taxonomy}) with the critic (i.e., the density ratio model) learned by InfoNCE.
\textbf{Second}, the InfoNCE objective may be still far away from $D(\pdata~\|~\pnoise)$ even for $K$ such that $\log K\ge D(\pdata~\|~\pnoise)$.
Concretely, suppose $D(\pdata~\|~\pnoise)=2$. Then, the $\Dc_{\text{InfoNCE}}(\th)\le 1.19\ldots$ when $K=4$ even if $\log K\ge D(\pdata~\|~\pnoise)$, and even for $K=64$, we have $\Dc_{\text{InfoNCE}}(\th)\le 1.93\ldots$, which is strictly smaller than the KL divergence $D(\pdata~\|~\pnoise)=2$.
This clearly demonstrates that the InfoNCE objective $\Dc_{\text{InfoNCE}}(\th)$ can never match the KL divergence for any finite $K$ and hence is unsuitable as a direct surrogate for MI. 
This contrasts with other Type 1 estimators such as DV and NWJ, which provide \emph{tight} variational representations of the KL divergence; that is, their bound becomes equal to the KL divergence when the critic function is equal (or proportional) to the true log-density ratio.

In the next section, we propose a modification of the InfoNCE objective, such that the critic is learned to exactly estimate the density ratio $\frac{\pdata(x)}{\pnoise(x)}$, and so that it can be used in a plug-in estimator for density ratio functionals such as mutual information.

\section{Tensorized Density Ratio Estimation with Anchor}
Consider two distributions $\pnoise(x)$ and $\pdata(x)$.
To estimate the density ratio $\frac{\pdata(x)}{\pnoise(x)}$ using samples from $\pnoise(x)$ and $\pdata(x)$, we consider the following classification problem over $\Xc^K$ (hence \emph{tensorization}), where we define the class densities $p(x_{1:K}|z)$ for $z=0,1,\ldots,K$ as
\begin{align}
\begin{aligned}
\boxed{\text{class 0 (anchor)}:~\pnoise(x_1) \pnoise(x_2)\cdots \pnoise(x_K)}&\\
\text{class 1}:~\pdata(x_1) \pnoise(x_2)\cdots \pnoise(x_K)~&\\
\text{class 2}:~\pnoise(x_1) \pdata(x_2)\cdots \pnoise(x_K)~&\\
\vdots~~~~~~~~~~~~~~~~~~~\\
\text{class $K$}:~\pnoise(x_1) \pnoise(x_2)\cdots \pdata(x_K)~&
\end{aligned}
\label{eq:def_class}
\end{align}
and the prior probabilities over the classes as $p(z)=\frac{\nu}{K+\nu}$ for $z=0$, and $p(z)=\frac{1}{K+\nu}$ if $z\in[K]$,
for some $\nu\ge 0$. 
As highlighted, class 0 plays a special role as an \emph{anchor}, allowing us to estimate the density ratio without multiplicative ambiguity as long as $\nu>0$. By \emph{anchor}, we mean that class 0 acts as a fixed reference distribution, eliminating arbitrary scaling and ensuring \emph{identifiability}, which will become precise in Theorem~\ref{thm:fisher} below. We can take $\nu=0$ if $K\ge 2$ (recovering InfoNCE), but require $\nu>0$ in the $K=1$ case to avoid degeneracy.
More succinctly, we can write, for $z\neq 0$, 
\[
p(x_{1:K}|z) = \frac{\pdata(x_z)}{\pnoise(x_z)} \pnoise(x_1) \pnoise(x_2)\cdots \pnoise(x_K)
=\frac{\pdata(x_z)}{\pnoise(x_z)} p(x_{1:K}|z=0).
\]
In words, for $z\neq 0$, the class density is designed such that $x_z$ is drawn from $\pdata$, and the rest are from $\pnoise$.
We can write the marginal distribution over $x_{1:K}$ as
\begin{align*}
p(x_{1:K}) = \pnoise(x_1) \pnoise(x_2)\cdots \pnoise(x_K)\biggl(\frac{1}{K+\nu}\sum_{i=1}^K \frac{\pdata(x_i)}{\pnoise(x_i)} + \frac{\nu}{K+\nu} \biggr).
\end{align*}
By Bayes' rule, the posterior probability $p(z|x_{1:K})$ is 
\begin{align}
p(z|x_{1:K})=
\frac{p(x_{1:K}|z)p(z)}{p(x_{1:K})}
=\begin{cases}
\displaystyle\frac{\nu}{\nu + \sum_{i=1}^K \frac{\pdata(x_i)}{\pnoise(x_i)}} & \text{if }z=0\\
\displaystyle\frac{\frac{\pdata(x_z)}{\pnoise(x_z)}}{\nu + \sum_{i=1}^K \frac{\pdata(x_i)}{\pnoise(x_i)}} & \text{if }z\in[K]
\end{cases}.
\label{eq:def_true_cp}
\end{align}
This motivates us to parameterize our probabilistic classifier $p_\th(z|x_{1:K})$ in the form of
\begin{align}
p_\th(z|x_{1:K})
=\begin{cases}
\displaystyle\frac{\nu}{\nu + \sum_{i=1}^K r_\th(x_i)} & \text{if }z=0\\
\displaystyle\frac{r_\th(x_z)}{\nu + \sum_{i=1}^K r_\th(x_i)} & \text{if }z\in[K]
\end{cases}.
\label{eq:def_model_cp}
\end{align}
Applying the maximum likelihood estimation (MLE) principle, we can derive the population objective
\begin{align}
\Lc_{K;\nu}(\th)&\defeq -\frac{K}{K+\nu} \E_{\pdata(x_1)\pnoise(x_2)\cdots \pnoise(x_K)}\Biggl[\log \frac{r_\th(x_1)}{\nu + \sum_{i=1}^K r_\th(x_i)} \Biggr]\nonumber\\
&\qquad-\frac{\nu}{K+\nu} \E_{\pnoise(x_1)\pnoise(x_2)\cdots \pnoise(x_K)}\Biggl[\log \frac{\nu}{\nu + \sum_{i=1}^K r_\th(x_i)} \Biggr],
\label{eq:kway_obj}
\end{align}
since $\max_\th 
\E_{p(z)p(x_{1:K}|z)}[\log {p_\th(z|x_{1:K})}]
= \min_\th \Lc_{K;\nu}(\th).$
We call it the \emph{InfoNCE-anchor} objective.
Suggested by the name, when $K\ge 2$ and $\nu=0$, it boils down to InfoNCE. 
In another extreme, when $K=1$ and $\nu=1$, it becomes equivalent to the standard variational lower bound of Jensen--Shannon divergence (see Table~\ref{tab:mi_estimators}).
In the language of \emph{noise contrastive estimation},
this provides a unification of the standard NCE~\citep{Gutmann--Hyvarinen2012} ($K=1,\nu>0$), and the so-called \emph{ranking NCE} objectives~\citep{Ma--Collins2018} ($K=2,\nu=0$).

\textbf{Fisher Consistency.}
When $\nu>0$, it readily follows from the MLE principle that InfoNCE-anchor characterizes the density ratio $\frac{\pdata(x)}{\pnoise(x)}$ as its global minimizer in the population and nonparametric limit.
\begin{theorem}[Fisher consistency]
\label{thm:fisher}
Let $\th^*\defeq\arg\min_\th \Lc_{K;\nu}(\th)$ denote a global optimizer of the InfoNCE-anchor objective. 
Suppose that there exists $\th_0$ such that $r_{\th_0}(x)=\frac{\pdata(x)}{\pnoise(x)}$.
If $K\ge 1$ and $\nu>0$, $r_{\th^*}(x)=\frac{\pdata(x)}{\pnoise(x)}$ for almost every $x$ under $\pnoise$.
If $K\ge 2$ with $\nu=0$, there exists some constant $C>0$ such that $r_{\th^*}(x)=C \frac{\pdata(x)}{\pnoise(x)}$
for $\pnoise$-almost every $x$.
\end{theorem}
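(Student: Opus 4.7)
The plan is to leverage the MLE origin of the objective. The InfoNCE-anchor loss $\Lc_{K;\nu}(\th)$ is, up to a $\th$-independent constant, the expected KL divergence $\E_{p(x_{1:K})}[D(p(\cdot|x_{1:K})\,\|\,p_\th(\cdot|x_{1:K}))]$ for the $(K+1)$-way classification problem~\eqref{eq:def_class}--\eqref{eq:def_model_cp}. Gibbs' inequality therefore forces any global minimizer $\th^*$ to satisfy $p_{\th^*}(z|x_{1:K}) = p(z|x_{1:K})$ for every $z$ and $p(x_{1:K})$-a.e.\ $x_{1:K}$, and the realizability hypothesis (existence of $\th_0$ with $r_{\th_0} = \pdata/\pnoise$) guarantees this infimum is attained. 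The proof then reduces to algebraic bookkeeping that reads off what this posterior identity says about $r_{\th^*}$, via the closed forms~\eqref{eq:def_true_cp} and~\eqref{eq:def_model_cp}.

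For $\nu > 0$, matching the $z=0$ posteriors (and cancelling the common $\nu$ in the numerators) yields the denominator identity $\sum_{i=1}^K r_{\th^*}(x_i) = \sum_{i=1}^K \pdata(x_i)/\pnoise(x_i)$ almost surely. Substituting this into the match for any $z \in [K]$ equates the remaining numerators, giving $r_{\th^*}(x_z) = \pdata(x_z)/\pnoise(x_z)$ almost surely. The marginal of each coordinate under $p(x_{1:K})$ is $\frac{K-1+\nu}{K+\nu}\pnoise + \frac{1}{K+\nu}\pdata$, which dominates $\pnoise$ whenever $\nu > 0$, so the equality lifts to $\pnoise$-a.e.

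For $\nu = 0$ and $K \ge 2$, the anchor class is absent; only the identities for $z \in [K]$ survive, which take the form $\frac{r_{\th^*}(x_z)}{\sum_i r_{\th^*}(x_i)} = \frac{\pdata(x_z)/\pnoise(x_z)}{\sum_i \pdata(x_i)/\pnoise(x_i)}$. Cross-multiplying, cancelling the diagonal $r_{\th^*}(x_z)\cdot\pdata(x_z)/\pnoise(x_z)$ terms, and specializing to $z=1$ rearranges to
\[
\frac{r_{\th^*}(x_1)}{\pdata(x_1)/\pnoise(x_1)} = \frac{\sum_{i\ge 2} r_{\th^*}(x_i)}{\sum_{i\ge 2} \pdata(x_i)/\pnoise(x_i)},
\]
whose two sides depend on disjoint sets of coordinates. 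Each mixture component of $p(x_{1:K})$ places $\pnoise$ on all-but-one coordinate, so a Fubini-style argument forces both sides to equal a single positive constant $C$ on $\supp(\pnoise)$, giving $r_{\th^*}(x) = C\cdot\pdata(x)/\pnoise(x)$.

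The main obstacle, especially in the $\nu = 0$ case, is the measure-theoretic step of promoting an ``a.s.\ under the mixture $p(x_{1:K})$'' identity into a genuinely pointwise equality with one \emph{global} constant $C$; one must verify that the conditional support of each $x_z$ under $p(x_{1:K})$ covers $\supp(\pnoise)$, which follows by selecting one mixture component in which $x_z$ is drawn from $\pnoise$ and applying Fubini there. For $\nu > 0$, the analogous lift is almost immediate because the anchor class already contributes a full $\pnoise^{\otimes K}$-dominating factor to each marginal.
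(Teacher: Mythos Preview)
Your proposal is correct and follows the same MLE/Gibbs-inequality line that the paper relies on. The paper does not give a standalone proof of this theorem: it remarks that the result ``readily follows from the MLE principle,'' subsumes the $\nu>0$ case under the more general Theorem~\ref{thm:bregman} (excess loss equals an expected Bregman divergence under $\pnoise^{\otimes K}$, which for the log score is precisely your expected-KL identity, written in density-ratio rather than posterior coordinates), and asserts the $\nu=0$ equality condition in the proof of Theorem~\ref{thm:cpc} without the separation-of-variables step you spell out.
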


\subsection{Application: Divergence Estimation and Representation Learning} 
\label{sec:application}
We can apply the InfoNCE-anchor objective to estimate MI or to learn representation when given a joint distribution $p(x,y)$, in a similar way to InfoNCE~\citep{van-den-Oord--Li--Vinyals2018}.
That is, for each $y$, we can apply the InfoNCE-anchor for $\pdata(x)\gets p(x|y)$ and $\pnoise(x)\gets p(x)$.\footnote{An alternative approach is to set $(\pdata(x),\pnoise(x))\gets (p(x,y),p(x)p(y))$; see Appendix~\ref{app:sec:alternative_mi_estimation}.} For the final objective, we take an expectation over $y\sim p(y)$:
\begin{align*}
\Lc_{K;\nu}^{(1)}(\th)&\defeq \E_{p(y)}\Biggl[-\frac{K}{K+\nu} \E_{p(x_1|y)p(x_2)\cdots p(x_K)}\Biggl[\log \frac{r_\th(x_1,y)}{\nu + \sum_{i=1}^K r_\th(x_i,y)} \Biggr]\\
&\qquad\qquad\quad -\frac{\nu}{K+\nu} \E_{p(x_1)p(x_2)\cdots p(x_K)}\Biggl[\log \frac{\nu}{\nu + \sum_{i=1}^K r_\th(x_i,y)} \Biggr]\Biggr].
\end{align*}
When $\nu=0$ with $K\ge 2$, it boils down to the original InfoNCE, and minimizing $\Lc_{K;0}^{(1)}(\th)$ can only guarantee that for some function $C(y)$, $r_{\th^*}(x,y)=C(y)\frac{p(x,y)}{p(x)p(y)}$.
When applied to representation learning, the vanilla InfoNCE (i.e., with $\nu=0$) thus may lead to an undesirable behavior due to uncontrollable $C(y)$, whereas the anchor (i.e., $\nu>0$) can remove such degeneracy. 
However, in our representation learning experiment, we observe that the anchor does not lead to the improvement of downstream task performance; see Section~\ref{sec:exp_representation}.

With a minibatch of size $B$, we can implement the loss with anchor for $K=B-1$ as follows:
\begin{align*}
-\frac{K}{K+\nu}\frac{1}{B}\sum_{b=1}^B \log \frac{r_{bb}}{\nu+\sum_{j\in [B]\backslash \{b-1\}} r_{bj}}
-\frac{\nu}{K+\nu}\frac{1}{B}\sum_{b=1}^B \log\frac{\nu}{\nu+\sum_{j\in [B]\backslash \{b\}} r_{bj}}.
\end{align*}
We provide a pseudocode in Appendix~\ref{app:sec:code}.
The density ratio estimator is typically parameterized as $r_\th(x,y)\gets e^{c_\th(x,y)}$, where $c_\th(x,y)$ (the \emph{critic}) is often a neural network. In representation learning, common choices are the exponential form $r_\th(x,y)\gets e^{\frac{1}{\tau}\frac{\fv_\th(x)^\intercal \gv_\th(y)}{|\fv_\th(x)|2 |\gv\th(y)|2}}$ (see, \eg \citep{van-den-Oord--Li--Vinyals2018}) or the direct form $r_\th(x,y)\gets \frac{1}{\tau}\frac{\fv_\th(x)^\intercal \gv_\th(y)}{|\fv_\th(x)|2 |\gv\th(y)|2}$ (see, \eg \citep{HaoChen--Wei--Gaidon--Ma2021}), such that $\fv_\th(x)$ and $\gv_\th(y)$ are learned embeddings that approximate PMI or PD, respectively. Here, $\tau>0$ is a \emph{temperature} parameter.

\subsection{InfoNCE-anchor Interpolates DV and NWJ Bounds When \texorpdfstring{$K\to\infty$}{K Grows to Infinity}}
One may ask about the behavior of InfoNCE-anchor when $K\to\infty$. 
While we defer a rigorous statement (Theorem~\ref{thm:generalized_dv}) to Appendix~\ref{app:sec:asymptotic}, we remark that InfoNCE-anchor, by setting $\nu$ to vary as $K\to\infty$ such that $\nu/K\to \b$ for some $\b\ge 0$, we can show that InfoNCE-anchor behaves similar to a generalization of the DV bound, which can be rearranged to yield
\begin{align}
\E_{\pdata(x)}[\log r_\th(x)] - (\b+1)\log\biggl(\frac{\b}{\b+1} +\frac{1}{\b+1}\E_{\pnoise(x)}[r_\th(x)]\biggr)
\le D(\pdata~\|~\pnoise).
\label{eq:generalized_dv}
\end{align}
When $\b=0$, this boils down to the standard DV bound.
When $\b\to\infty$, the left-hand side becomes $\E_{\pdata(x)}[\log r_\th(x)] - \E_{\pnoise(x)}[r_\th(x)]+1
\le D(\pdata~\|~\pnoise)$,
which is the NWJ bound.
Moreover, we can even show that this bound \emph{monotonically} interpolates between the DV bound (tightest, $\b=0$) and the NWJ bound (loosest, $\b=\infty$).
A similar asymptotic behavior of InfoNCE (i.e., for $\nu=0$) was noted by \citet{Wang--Isola2020}, but specifically in the context of contrastive representation learning.

\subsection{Discussion on Existing Variants of InfoNCE Estimator}
In this section, we discuss two existing variants of InfoNCE, which were proposed in the effort of fixing the aforementioned issues of InfoNCE as the MI estimator. We highlight why they are insufficient as a fundamental fix, and how different from our proposal.

\textbf{$\a$-InfoNCE.}
\citet{Poole--Ozair--van-den-oord-Alemi--Tucker2019} proposed an alternative estimator called $\a$-InfoNCE, defined as
\begin{align*}
\Dc_{\a\text{-InfoNCE}}(\th)
&\defeq \E_{\pdata(x_1)\pnoise(x_2)\cdots\pnoise(x_K)}\Biggl[\log\frac{r_\th(x_1)}{\a r_\th(x_1) + \frac{1-\a}{K-1}\sum_{z=2}^K r_\th(x_z)}
\Biggr]
\end{align*}
for some $\a\in(0,\frac{1}{K}]$.
Note that setting $\a\gets\frac{1}{K}$ recovers the original InfoNCE bound.
For $\a<\frac{1}{K}$, this quantity can neither be understood as a loss for classification nor be a lower bound for $D(\pdata~\|~\pnoise)$.
\citet[Theorem~4.2]{Lee--Shin2022} claimed that $\a$-InfoNCE is a \emph{tight} variational lower bound for a $\a$-skew KL divergence $D(\pdata~\|~\a\pdata+(1-\a)\pnoise)$, that is, $\Dc_{\a\text{-InfoNCE}}(\th)
\le D(\pdata~\|~\a\pdata+(1-\a)\pnoise)$
and the equality can be achieved.
We find, however, the proof has a flaw and it can be only guaranteed that $\Dc_{\a\text{-InfoNCE}}(\th)
\ge \Dc_{\text{DV}}(\th;\pdata,\a\pdata+(1-\a)\pnoise)$, while the equality condition remains unclear.

\textbf{MLInfoNCE.}
\citet{Song--Ermon2020b} introduced
the \emph{multi-label InfoNCE (MLInfoNCE)} estimator defined as
\begin{align*}
\Dc_{\text{MLInfoNCE}}(\th)
&\defeq \E_{\prod_{w=1}^m\pdata(x_{w1})\prod_{z=2}^k\pnoise(x_{wz})}\Biggl[\sum_{w=1}^m \log\frac{r_\th(x_{w1})}{\sum_{w'=1}^m (r_\th(x_{w'1}) + \sum_{z=2}^k r_\th(x_{w'z}))}
\Biggr].
\end{align*}
\citet[Theorem~2]{Song--Ermon2020b} shows that $\Dc_{\text{MLInfoNCE}}(\th)\le D(\pdata~\|~\pnoise)$.
However, we note that this objective cannot be understood as a loss derived from a proper classification setup unlike InfoNCE-anchor.

\subsection{Extension with Proper Scoring Rules}
\label{sec:extensions}
In the classification setup of Eq.~\eqref{eq:def_class}, density ratio estimation reduces to estimating the class probability $p(z|x_{1:K})$ in Eq.~\eqref{eq:def_true_cp} via the model $p_\th(z|x_{1:K})$ in Eq.~\eqref{eq:def_model_cp}. The cross-entropy loss in Eq.~\eqref{eq:kway_obj} is a \emph{proper} scoring rule, ensuring that the optimized model recovers the true posterior. More generally, once density ratio estimation is cast as class probability estimation, \emph{any proper scoring rule} can be applied, yielding a broad family of consistent objectives.

Here we start with a general description of the proper scoring rules~\citep{Gneiting--Raftery2007,Dawid--Lauritzen--Parry2012}.
Let $\Zc$ be a discrete alphabet and let $\Ac$ be any alphabet.
Suppose that we have sample access to the underlying distribution $p(a,z)$ over $\Ac\times\Zc$.
The goal of class probability estimation (CPE)~\citep{Garcia--Williamson2012} is to estimate the underlying class probability $\etabdata\suchthat \Ac\to \Delta(\Zc)$, where $\etab(a)\defeq(p(z|a))_{z\in\Zc}$, using samples from $p(a,z)$.

To characterize a class probability estimator as the optimizer of an optimization problem, we consider a tuple of loss functions $\lambdab=(\lambda_z\suchthat \Delta(\Zc)\to \Real)_{z\in \Zc}$, which we call a \emph{scoring rule}
, whereby an \emph{action} $\hat{\etab}\suchthat \Ac\to\Delta(\Zc)$ incurs loss $\lambda_z(\hat{\etab}(a))$ for a data point $(a,z)$.
Then, we measure the performance of an action $\hat{\etab}$ by the expected loss $\E_{p(a,z)}[\lambda_z(\hat{\etab}(a))]$.
\begin{definition}[Proper scoring rules]
A scoring rule $\lambdab\suchthat \Delta(\Zc)\to \Real^\Zc$ is a vector-valued loss function. 
A scoring rule is said to be \emph{proper} if $\etabdata$ is optimal with respect to $\lambdab$, \ie for any distribution $p(a,z)$,
\[
\etabdata(\cdot)\in \arg\min_{\hat{\eta}\suchthat \Ac\to \Delta(\Zc)} \E_{p(a,z)}[\lambda_z(\hat{\etab}(a))].
\]
If $\etabdata$ is the \emph{unique} optimal solution with respect to $\lambdab$, then $\lambdab$ is said to be \emph{strictly} proper.
\end{definition}

We note that most (strictly) proper scoring rules can be induced by a (strictly) differentiable convex function. For the sake of exposition, let $\Zc=\{0,\ldots,M\}$ concretely.
Then, for a differentiable function $\Psi\suchthat\{1\}\times\Real_+^M\to\Real$, we define the $\Psi$-induced scoring rule as
\begin{align}\label{eq:def_induced_scoring_rule}
\lambdab^{\Psi}(\etab)
\defeq \begin{bmatrix}
\langle  \rhob, \nabla_{\rhob}\Psi(\rhob)\rangle - \Psi(\rhob)\\
(-\nabla_{\rhob}\Psi(\rhob))_{1:M}
\end{bmatrix}\Biggr|_{\rhob=(1,\frac{\eta_1}{\eta_0},\ldots,\frac{\eta_M}{\eta_0})}.
\end{align}
\begin{proposition}\label{prop:induced_loss}
If $\Psi$ is (strictly) convex and twice differentiable, $\lambdab^{\Psi}$ is (strictly) proper.
\end{proposition}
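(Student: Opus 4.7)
The plan is to identify the excess of the conditional expected $\Psi$-loss, relative to its minimum, as precisely the Bregman divergence generated by $\Psi$ in the ratio parametrization. Since Bregman divergences are nonnegative for convex generators and strictly positive off the diagonal for strictly convex ones, (strict) propriety would follow immediately.

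First I would reduce to the pointwise statement: propriety of $\lambdab^\Psi$ is equivalent to minimality of $\hat\etab \mapsto \sum_{z=0}^M \eta_z \lambda_z^\Psi(\hat\etab)$ at $\hat\etab=\etab$ for every fixed $\etab\in\Delta(\Zc)$, so I can work conditionally on $a$ throughout and assume without loss of generality $\eta_0,\hat\eta_0>0$ (the degenerate cases reduce by relabeling coordinates or by continuity). Introducing $\rhob=(\eta_1/\eta_0,\ldots,\eta_M/\eta_0)$ and $\hat\rhob=(\hat\eta_1/\hat\eta_0,\ldots,\hat\eta_M/\hat\eta_0)$, substituting the definition in Eq.~\eqref{eq:def_induced_scoring_rule}, and using $\eta_z=\eta_0 r_z$ together with $\langle\hat\rhob,\nabla\Psi(\hat\rhob)\rangle=\sum_{z=1}^M \hat r_z \partial_{\rho_z}\Psi(\hat\rhob)$, I expect the conditional expected loss to simplify to
\[
L(\hat\etab;\etab)\defeq \sum_{z=0}^M \eta_z\lambda_z^\Psi(\hat\etab) = -\eta_0\bigl[\Psi(\hat\rhob)+\langle \rhob-\hat\rhob,\nabla\Psi(\hat\rhob)\rangle\bigr].
\]
Evaluating at $\hat\rhob=\rhob$ gives $L(\etab;\etab)=-\eta_0\Psi(\rhob)$, and subtracting yields
\[
L(\hat\etab;\etab)-L(\etab;\etab) = \eta_0\bigl[\Psi(\rhob)-\Psi(\hat\rhob)-\langle \rhob-\hat\rhob,\nabla\Psi(\hat\rhob)\rangle\bigr] = \eta_0\, B_\Psi(\rhob,\hat\rhob),
\]
where $B_\Psi$ denotes the Bregman divergence generated by $\Psi$.

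The proposition then follows: convexity of $\Psi$ gives $B_\Psi\ge 0$, so $\etab$ minimizes $L(\cdot;\etab)$ (propriety); strict convexity gives $B_\Psi(\rhob,\hat\rhob)=0$ only at $\rhob=\hat\rhob$, i.e., $\hat\etab=\etab$ (strict propriety). The main piece of work is the algebraic simplification in the previous paragraph, namely cleanly tracking the cancellation between the $\langle \hat\rhob,\nabla\Psi(\hat\rhob)\rangle$ contribution from $\lambda_0^\Psi$ and the $\partial_{\rho_z}\Psi(\hat\rhob)$ contributions from $\lambda_z^\Psi$ ($z\in[M]$) weighted by the true labels $\eta_z$. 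I do not anticipate any deeper obstacle; the resulting identity is essentially the Savage representation of proper scoring rules, specialized to the log-ratio parametrization employed here.
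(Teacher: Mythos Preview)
Your proof is correct, and it takes a genuinely different route from the paper's own proof of this proposition.

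The paper proves Proposition~\ref{prop:induced_loss} indirectly via a general characterization theorem (Theorem~\ref{thm:proper_loss}): a scoring rule $\lambdab$ is (strictly) proper if and only if the pointwise Bayes risk $f^{\lambdab}$ is (strictly) concave and the gradient $\gb^{\lambdab}(\etabref)$ of the conditional risk at $\hat\etab=\etabref$ vanishes identically. Concavity of $f^{\lambdab^\Psi}$ is immediate (it is the negative perspective of the convex $\Psi$), but verifying $\gb^{\lambdab^\Psi}\equiv\zerob$ is the content of Lemma~\ref{lem:gradient_induced_loss}, which is where the twice-differentiability hypothesis is actually used: the lemma carries out a coordinatewise chain-rule computation involving the Hessian $\nabla^2\Psi$ and cancellations of the form $(\nabla^2\Psi(\rhob)\rhob)_z$.

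By contrast, you bypass Theorem~\ref{thm:proper_loss} and Lemma~\ref{lem:gradient_induced_loss} entirely, computing the regret directly and identifying it with $\eta_0\,B_\Psi(\rhob,\hat\rhob)$. This is exactly the identity the paper establishes later in the proof of Theorem~\ref{thm:bregman} (see the line $\langle \etabref,\lambdab^{\Psi}(\etab_\th)\rangle-\langle \etabref,\lambdab^{\Psi}(\etabref)\rangle=\etaref_0 B_\Psi(\rhobref,\rhob_\th)$), so in effect you have observed that the Theorem~\ref{thm:bregman} argument already yields Proposition~\ref{prop:induced_loss} without the detour. Your route is shorter, and as you implicitly note, it only uses first derivatives of $\Psi$; the twice-differentiability hypothesis in the statement is an artifact of the paper's proof strategy via Lemma~\ref{lem:gradient_induced_loss}, not something your argument needs.
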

The canonical example is the log score, which results in InfoNCE-anchor in Eq.~\eqref{eq:kway_obj}.
We present some examples of proper scoring rules and the generating convex functions in Appendix~\ref{app:sec:examples_scores}.

Now, considering the classification setup in Eq.~\eqref{eq:def_class}, let $\lambdab=\lambdab^{\Psi}$ be a strictly proper scoring rule over discrete alphabet $\Zc=\{0,\ldots,K\}$, induced by a strictly convex function $\Psi\suchthat\Real_+^K\to\Real$.
Applying the scoring rule to evaluate the score of the class probability $p_\th(z|x_{1:K})$ (in Eq.~\eqref{eq:def_model_cp}) with respect to the underlying distribution $p(z)p(x_{1:K}|z)$, we can write the population objective (to be minimized) as
\begin{align*}
\Lc_{K;\nu}^{\Psi}(\etab_\th) &\defeq 
\E_{p(x_{1:K},z)}[\lambda_z(\etab_\th(x_{1:K}))],
\end{align*}
where we use $\etab_\th(x_{1:K})=(p_\th(z|x_{1:K}))_{z\in\Zc}$ to denote the class probability vector.
Let $\etabref(x_{1:K})$ denote the underlying class probability $(p(z|x_{1:K}))_{z\in\Zc}$.
The following statement subsumes Theorem~\ref{thm:fisher}. 
\begin{theorem}
\label{thm:bregman}
For $\nu> 0$,
\begin{align*}
\Lc_{K;\nu}^{\Psi}(\etab_\th)-\Lc_{K;\nu}^{\Psi}(\etabref)
&= \frac{\nu}{K+\nu} \E_{\pnoise(x_1)\pnoise(x_2)\cdots \pnoise(x_K)}\biggl[B_{\Psi}\biggl(\frac{\rb^*(x_{1:K})}{\nu}, \frac{\rb_\th(x_{1:K})}{\nu}\biggr)\biggr],
\end{align*}
where $\rv^*(x_{1:K}) \defeq 
\bigl(\frac{\pdata(x_{z})}{\pnoise(x_{z})}\bigr)_{z\in[K]}$ and $\rv_\th(x_{1:K}) 
\defeq 
\bigl(r_\th(x_z)\bigr)_{z\in[K]}$.
If $\Psi$ is convex, we have
\begin{align*}
-\Lc_{K;\nu}^{\Psi}(\etab_\th)
\le -\Lc_{K;\nu}^{\Psi}(\etabref)
&= \frac{\nu}{K+\nu} \E_{\pnoise(x_1)\pnoise(x_2)\cdots \pnoise(x_K)}\biggl[\Psi\biggl(\frac{\rb^*(x_{1:K})}{\nu}\biggr)\biggr].
\end{align*}
If $\Psi$ is (strictly) convex, the equality is achieved if (and only if) $r_\th(x)=\frac{\pdata(x)}{\pnoise(x)}$.
\end{theorem}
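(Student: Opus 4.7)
The plan is a direct algebraic expansion followed by recognizing the excess loss as an expected Bregman divergence; once the $\Psi$-induced scoring rule of Eq.~\eqref{eq:def_induced_scoring_rule} is unpacked against the model in Eq.~\eqref{eq:def_model_cp}, the identification is essentially automatic. First I split the population loss according to the class prior: for $z\in[K]$, an importance reweight by $\pdata(x_z)/\pnoise(x_z)$ recasts each positive-class term under the common reference $\pnoise^K$, yielding
\[
\Lc_{K;\nu}^{\Psi}(\etab_\th)=\frac{1}{K+\nu}\E_{\pnoise^K}\biggl[\nu\lambda_0(\etab_\th(x_{1:K}))+\sum_{z=1}^K \frac{\pdata(x_z)}{\pnoise(x_z)}\lambda_z(\etab_\th(x_{1:K}))\biggr].
\]

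Next I substitute the induced scoring rule. Under Eq.~\eqref{eq:def_model_cp} the ratio vector is $\rhob_\th\defeq(1,r_\th(x_1)/\nu,\ldots,r_\th(x_K)/\nu)$, so Eq.~\eqref{eq:def_induced_scoring_rule} gives $\lambda_z(\etab_\th)=-\partial_z\Psi(\rhob_\th)$ for $z\in[K]$ and, by the Euler-type identity $\langle\rhob,\nabla\Psi(\rhob)\rangle=\sum_{z=1}^K \rho_z\partial_z\Psi(\rhob)$ (the coordinate pinned to $1$ is inert), $\nu\lambda_0(\etab_\th)=\sum_{z=1}^K r_\th(x_z)\partial_z\Psi(\rhob_\th)-\nu\Psi(\rhob_\th)$. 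Collecting terms in the display above gives
\[
\Lc_{K;\nu}^{\Psi}(\etab_\th)=\frac{1}{K+\nu}\E_{\pnoise^K}\biggl[-\nu\Psi(\rhob_\th)-\sum_{z=1}^K\Bigl(\tfrac{\pdata(x_z)}{\pnoise(x_z)}-r_\th(x_z)\Bigr)\partial_z\Psi(\rhob_\th)\biggr].
\]
Specializing to $r_\th\equiv \pdata/\pnoise$ (so $\rhob_\th=\rhob^*$) kills the sum and leaves $-\Lc_{K;\nu}^{\Psi}(\etabref)=\frac{\nu}{K+\nu}\E_{\pnoise^K}[\Psi(\rb^*(x_{1:K})/\nu)]$, which is the second identity.

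Subtracting, the integrand of the difference becomes $\Psi(\rhob^*)-\Psi(\rhob_\th)-\tfrac{1}{\nu}\sum_{z=1}^K\partial_z\Psi(\rhob_\th)(r^*_z-r_\th(x_z))=\Psi(\rhob^*)-\Psi(\rhob_\th)-\langle\nabla\Psi(\rhob_\th),\rhob^*-\rhob_\th\rangle$, which is precisely the Bregman divergence $B_\Psi(\rb^*(x_{1:K})/\nu,\rb_\th(x_{1:K})/\nu)$ (the pinned-to-$1$ coordinate is common to both arguments and carries no gradient). Multiplying by $\nu/(K+\nu)$ delivers the first identity. The inequality $-\Lc_{K;\nu}^{\Psi}(\etab_\th)\le -\Lc_{K;\nu}^{\Psi}(\etabref)$ now follows from $B_\Psi\ge 0$ under convexity of $\Psi$, and strict convexity upgrades this to $B_\Psi(p,q)=0\iff p=q$, which via the product structure of $\pnoise^K$ translates to $r_\th(x)=\pdata(x)/\pnoise(x)$ for $\pnoise$-almost every $x$.

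The only substantive step is the Euler-style cancellation that turns the messy combination $\nu\lambda_0+\sum_z r_\th(x_z)\lambda_z$ into $-\nu\Psi(\rhob_\th)$; once that is observed, everything collapses to the Bregman form. I do not anticipate further technical difficulty beyond careful bookkeeping of the normalization by $\nu$ and of which coordinates $\nabla\Psi$ acts on (the first coordinate being pinned to $1$ throughout).
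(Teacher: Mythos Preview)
Your proposal is correct and follows essentially the same approach as the paper: the paper computes the pointwise identity $\langle\etabref,\lambdab^\Psi(\etab_\th)\rangle-\langle\etabref,\lambdab^\Psi(\etabref)\rangle=\etaref_0\, B_\Psi(\rhobref,\rhob_\th)$ under the mixture $p(x_{1:K})$ and then applies Bayes' rule at the end to pass to $\pnoise^K$ and extract the factor $\nu/(K+\nu)$, whereas you perform that change of measure up front via importance weighting and then carry out the identical coordinate-by-coordinate expansion. The ``Euler-style cancellation'' you single out is precisely the paper's computation of $\langle\etabref,\lambdab^\Psi(\etab_\th)\rangle$.
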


Beyond the consistency, this corollary shows that the DRE objective (with negation) can be understood as a variational lower bound of some divergence between $\pdata(x)$ and $\pnoise(x)$ induced by $\Psi$, defined as $\E_{\pnoise(x_1)\pnoise(x_2)\cdots \pnoise(x_K)}[\Psi(\frac{\rb^*(x_{1:K})}{\nu})]$.
This is analogous to that the InfoNCE-anchor objective in Eq.~\eqref{eq:kway_obj} is a variational lower bound of the $K$-way JSD $D_{K\textsf{-JS}}(\pdata,\pnoise)$.
We note that this extension can be viewed as a special application of the more general multi-distribution density ratio estimation studied by \citet{Yu--Jin--Ermon2021}, for the binary density ratio estimation. 

\textbf{Implementation.}
Similar to InfoNCE-anchor in Eq.~\eqref{eq:kway_obj},
this objective function can be simplified further if the scoring rule satisfies a mild symmetry condition; see Appendix~\ref{app:sec:bregman_implementation}.

\textbf{Alternative Characterization of Proper Scoring Rule.}
One minor limitation of the characterization in Theorem~\ref{thm:bregman} is that $\nu=0$ is not permitted as a special case, and thus InfoNCE cannot be subsumed.
In Appendix~\ref{app:sec:alternative_characterization_proper_scoring_rules}, we provide an alternative characterization of proper scoring rules, which can be related to the above formulation via the \emph{perspective transformation}, and admits $\nu=0$. 

\textbf{Special Cases.}
For the special case when $K=1$ and $\nu=1$, note that the right hand side becomes the $f$-divergence $D_f(\pdata~\|~\pnoise)$ when $f=\Psi$.
That is, the objective boils down to the standard variational lower bound on the $f$-divergence~\citep{Ngyuen--Wainwright--Jordan2010}, hence recovering the $f$-DIME objectives of \citet{Letizia--Novello--Tonello2024} and $f$-MICL objectives of \citet{Lu--Zhang--Sun--Guo--Yu2024}. 
We also note that the GAN-DIME and HD-DIME estimators in \citep{Letizia--Novello--Tonello2024} are essentially identical to the estimators proposed in \citep{Tsai--Zhao--Yamada--Morency--Salakhutdinov2020}.
More examples can be found in Appendix~\ref{app:sec:examples_dre_objs}.

\section{Experiments}
\label{sec:exp}
In this section, we show that InfoNCE-anchor outperforms existing estimators in MI estimation (Section~\ref{sec:exp_mi_estimation}) and downstream classification task (Section~\ref{sec:exp_kinase_ligand}). We also report a negative result: anchor variants do not improve the representation quality of InfoNCE in self-supervised representation learning tasks (Section~\ref{sec:exp_representation}). In all experiments we set $K=B-1$ and $\nu=1$ by default. 

\subsection{MI Estimation}
\label{sec:exp_mi_estimation}
We evaluate various neural MI estimators on structured and unstructured data using the benchmark suite of \citet{Lee--Rhee2024}.\footnote{GitHub: \url{https://github.com/kyungeun-lee/mibenchmark}}
Experiments cover three domains: multivariate Gaussian data, MNIST images, and BERT embeddings of IMDB reviews.
To control ground truth MI, the benchmark employs same-class sampling for positive pairs and a binary symmetric channel to inject controlled noise.
This allows systematic variation of MI from 2 to 10 bits in 2-bit increments.
Implementation details such as critic architectures and optimization setups are deferred to Appendix~\ref{app:sec:exp_details}.

\begin{figure}[ht]
    \centering

    \begin{subfigure}{\textwidth}
        \centering
        \includegraphics[width=\linewidth]{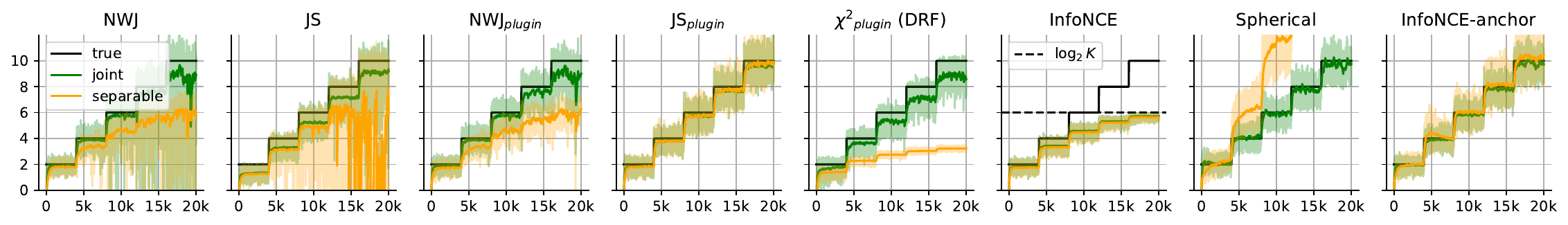}\vspace{-.25em}
        \caption{Gaussian with cubic transformation.}
    \end{subfigure}
    
    \begin{subfigure}{\textwidth}
        \centering
        \includegraphics[width=\linewidth]{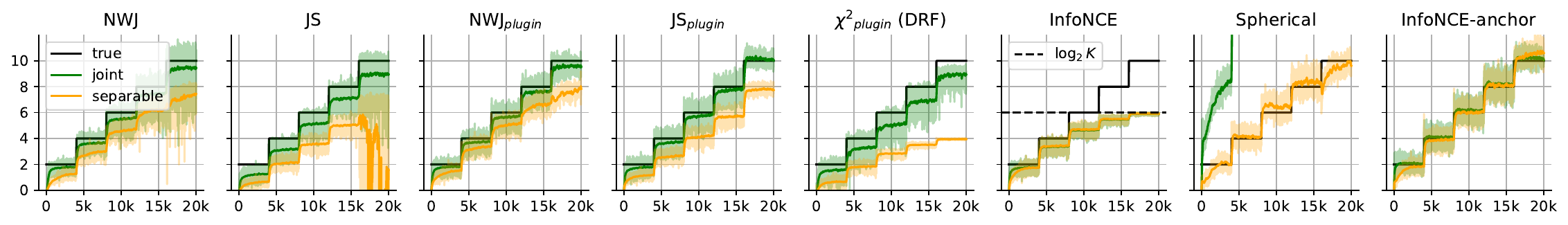}\vspace{-.25em}
        \caption{MNIST.}
    \end{subfigure}

    \begin{subfigure}{\textwidth}
        \centering
        \includegraphics[width=\linewidth]{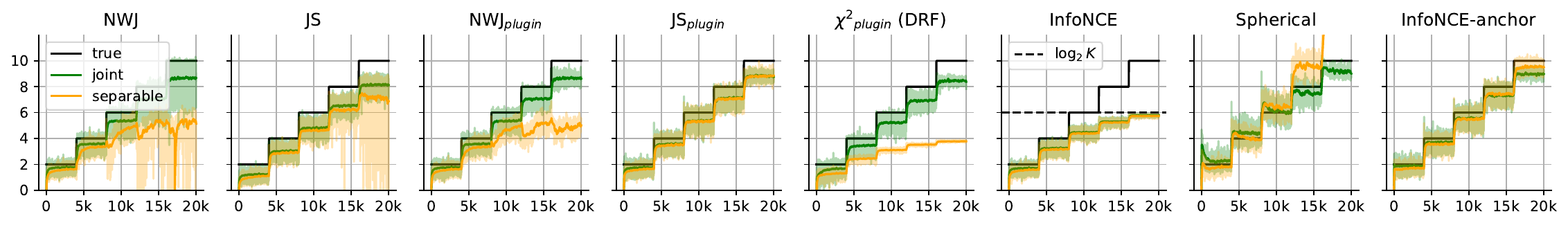}\vspace{-.25em}
        \caption{Texts.}
    \end{subfigure}

    \caption{Summary of MI estimation results on the standard benchmark. All experiments were done with batch size 64 and averaged over 20 random runs. Across all the cases, the proposed InfoNCE-anchor estimator (the rightmost column) consistently demonstrates low-bias, low-variance performance compared to the existing estimators. See Section~\ref{sec:exp_mi_estimation} for the experiment setup.}
    \label{fig:mi_estimation}
\end{figure}

Figure~\ref{fig:mi_estimation} summarizes the results.
InfoNCE-anchor tracks ground truth MI most closely across domains.
$\mathsf{JS}_{\mathsf{plugin}}$ (equivalent to InfoNCE-anchor with $K=1$, $\nu=1$) performs comparably on Gaussians but deteriorates on higher-dimensional tasks such as MNIST and texts, highlighting the value of large $K$.
We also evaluate $\mathsf{Spherical}$, an InfoNCE-anchor variant induced by the spherical scoring rule~\citep{Gneiting--Raftery2007}; see Table~\ref{tab:objs_sym_anchor} in Appendix~\ref{app:sec:examples_dre_objs}. Its inferior performance indicates that, despite the equivalence of strictly proper scoring rules, the log score remains the most effective in practice.
Additional results for Gaussian with varying batch sizes can be found in Appendix~\ref{app:sec:exp_details}.

\subsection{Protein Interaction Prediction}
\label{sec:exp_kinase_ligand}
As a further demonstration of the effectiveness of InfoNCE-anchor, we perform an experiment from a recent study by~\citet{Gowri--Lun--Klein--Yin2024}. In the work, the authors examined protein embeddings derived from a pretrained protein language model (pLM), the \emph{ProtTrans5} model~\citep{Elnaggar--Heinzinger--Dallago--Rehawi--Wang--Jones--Gibbs--Feher--Angerer--Steinegger2021}, and evaluated whether one can predict interactions between protein pairs~$(x, y)$, specifically, $(K,T)=$ (kinase, target) and $(L,R)=$ (ligand, receptor) pairs in the considered setting.
The interaction labels are from the \emph{OmniPath} database~\citep{Turei--Valdeolivas--Gul--Palacio-Escat--Klein--Ivanova--Olbei--Glabor--Theis--Modos2021}.
We ran the experiment following the same setup, with estimating the PMI using the JS, InfoNCE-anchor, and a few other variational approaches, and using them to decide whether interaction exists by thresholding the PMI of a given pair.

\begin{figure}[ht]
\label{fig:kinase_ligand_summary}
    \begin{subfigure}[c]{0.58\textwidth}
        \centering
        \includegraphics[width=\linewidth]{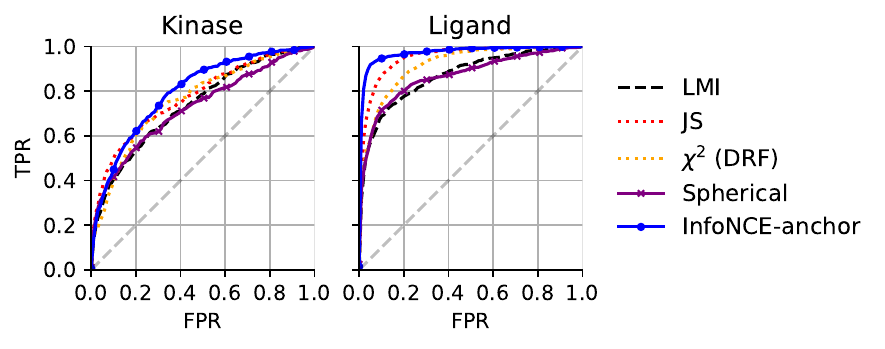}
        \caption{ROC curves for each problem instance.}
        \label{fig:roc}
    \end{subfigure}
    \begin{subfigure}[c]{0.40\textwidth}
        \centering
        \caption{Summary of AUROC performance on the Kinase and Ligand benchmarks
        (mean$_{\pm\text{std}}$ computed over 20 runs).}
        \label{tab:auc-kinase-ligand}
        \vspace{0.5em} %
        \scriptsize
        \begin{tabular}{lll}
            \toprule
            Objective            & Kinase       & Ligand      \\
            \midrule
            LMI                  & $0.74_{\,\pm0.08}$ & $0.87_{\,\pm0.04}$ \\
            $\chi^{2}$ (DRF)     & $0.76_{\,\pm0.07}$ & $0.92_{\,\pm0.03}$ \\
            JS                   & $0.77_{\,\pm0.08}$ & $0.95_{\,\pm0.02}$ \\
            InfoNCE-anchor       & $\mathbf{0.80_{\,\pm0.06}}$ & $\mathbf{0.97_{\,\pm0.01}}$ \\
            Spherical            & $0.73_{\,\pm0.07}$ & $0.87_{\,\pm0.05}$ \\
            \bottomrule
        \end{tabular}
    \end{subfigure}
    \caption{Summary of the protein interaction prediction experiment.}
\end{figure}

Figure~\ref{fig:roc} and Figure~\ref{tab:auc-kinase-ligand} summarize the results.
As shown in Figure~\ref{fig:roc}, InfoNCE-anchor shows the best prediction results for both problem instances, while the JS estimator, which is a special case of InfoNCE-anchor when $K=1$ and $\nu=1$, performs second best. This again demonstrates the practical benefit of large $K$ for accurate density ratio estimation. We also recall that the standard InfoNCE objective cannot be even applied to this scenario, as it only estimates PMI up to a multiplication with an arbitrary function $C(y)$ discussed in Section~\ref{sec:application}.
We include the histograms of learned PMI values (Figure~\ref{fig:histograms}) as well as the ROC curves of each estimator for different runs (Figure~\ref{fig:roc_each_problem}) in Appendix~\ref{app:sec:exp_details}.

\subsection{Self-Supervised Representation Learning}
\label{sec:exp_representation}

In earlier sections we showed that InfoNCE-anchor improves MI estimation and downstream tasks using the learned density ratio model. A natural question is whether this benefit carries over to self-supervised learning (SSL), where InfoNCE is the standard objective.
We therefore pretrain a ResNet-18 on CIFAR-100 using the \texttt{solo-learn} framework~\citep{JMLR:v23:21-1155}, comparing several contrastive objectives under identical settings (batch size $B=256$, same optimizer), and evaluate representations via linear probing.

\begin{table}[ht]
\centering
\caption{Linear probing accuracy (\%) after SSL pretraining. We used PD parameterization for Spherical and $\chi^2$. Detailed setups can be found in Appendix~\ref{app:sec:exp_details}.}
\label{tab:ssl}
\begin{tabular}{lcccccc}
\toprule
Objective 
& InfoNCE & InfoNCE-anchor & Spherical & JS & $\chi^2$ \\
\midrule
Top-1 accuracy & 65.98 & 65.74 & 4.33 & 61.69 & 65.59 \\
Top-5 accuracy & 89.69 & 89.24 & 17.91 & 87.33 & 88.4 \\
\bottomrule
\end{tabular}
\end{table}

Table~\ref{tab:ssl} shows that InfoNCE continues to yield the strongest representations. Adding the anchor, despite improving density ratio estimation, does not translate into better SSL performance. This suggests that the uncontrollable multiplicative factor $C(y)$ in InfoNCE is either nearly constant or irrelevant for representation learning.
JS performs poorly, highlighting the importance of large $K$, while spherical scores collapse entirely, likely due to unfavorable optimization dynamics.
Overall, these findings indicate that neither accurate MI estimation nor exact density ratio recovery is essential for high-quality representations. What matters in SSL appears to be the factorization of PMI, the benefit of large $K$, and the favorable optimization properties with the log score.

\section{Concluding Remarks}

We revisited InfoNCE and showed it is not a consistent MI estimator but a variational bound of some other divergence. We introduce InfoNCE-anchor, a simple fix enabling consistent density ratio estimation within a unified scoring-rule framework. 
InfoNCE-anchor sets new state-of-the-art MI estimation benchmarks and aids predictive tasks, though it does not improve SSL performance, highlighting that accurately estimating MI is not essential for representation quality~\citep{Tschannen--Djolonga--Rubenstein--Gelly--Lucic2020}.
We hope our work clarifies the role of InfoNCE and MI estimation in contrastive learning.

\newpage
\bibliographystyle{iclr2026_conference}
\bibliography{ref}
\newpage

\appendix
\addtocontents{toc}{\protect\StartAppendixEntries}
\listofatoc

\appendix

\section{Alternative Approach to MI Estimation}
\label{app:sec:alternative_mi_estimation}

As alluded to earlier in Section~\ref{sec:application}, we can construct an alternative consistent objective function for estimating the pointwise dependence $\frac{p(x,y)}{p(x)p(y)}$.
That is, applying the InfoNCE-anchor framework for $\pdata(x)\gets p(x,y)$ and $\pnoise(x,y)\gets p(x)p(y)$, we obtain
\begin{align*}
\Lc_{K;\nu}^{(2)}(\th)&\defeq 
-\frac{K}{K+\nu} \E_{p(x_1,y_1)p(x_2)p(y_2)\cdots p(x_K)p(y_K)}\Biggl[\log \frac{r_\th(x_1,y_1)}{\nu + \sum_{i=1}^K r_\th(x_i,y_i)} \Biggr]\\
&\qquad-\frac{\nu}{K+\nu} \E_{p(x_1)p(y_1)p(x_2)p(y_2)\cdots p(x_K)p(y_K)}\Biggl[\log \frac{\nu}{\nu + \sum_{i=1}^K r_\th(x_i,y_i)} \Biggr].
\end{align*}
While this version results in a different, yet consistent objective function, it is not preferable over the discussed approach in practice.

With this approach, when $\nu=0$, minimizing $\Lc_{K;0}^{(2)}(\th)$ guarantees that for some $C>0$, 
\begin{align*}
r_{\th^*}(x,y)=C\frac{p(x,y)}{p(x)p(y)}.
\end{align*}
This is a guarantee analogous to the MLInfoNCE~\citep{Song--Ermon2020b}.

\section{Pseudocode for InfoNCE-anchor}
\label{app:sec:code}
Here, we provide a pseudocode for the PyTorch implementation of the InfoNCE-anchor objective function.

\begin{lstlisting}[]
def infonce_with_anchor(scores, nu=1.0):
    """
    scores: [B, B] tensor where scores[i, j] = f(x_i, y_j)
    nu: prior smoothing hyperparameter
    """
    assert nu > 0.
    device = scores.device
    B = scores.size(0)
    K = B - 1
    
    # joint term
    mask = torch.zeros(B, B, device=device)
    i = torch.arange(1, B + 1)
    mask[i - 1, i - 2] = -torch.inf    
    scores_aug = torch.cat([
        np.log(nu) * torch.ones(B, 1, device=device), 
        mask + scores], dim=1)  # augmented score
    joint_term = - (scores.diag().mean() - scores_aug.logsumexp(dim=1).mean())
    
    # independent term
    neg_inf_diag_mask = torch.zeros(B, B, device=device).fill_diagonal_(-torch.inf)
    scores_aug_neg = torch.cat([
        np.log(nu) * torch.ones(B, 1, device=device), 
        neg_inf_diag_mask + scores
        ], dim=1)  # negative augmented score
    marginal_term = - (np.log(nu) - scores_aug_neg.logsumexp(dim=1).mean())

    return (K / (K + nu)) * joint_term + (nu / (K + nu)) * marginal_term
\end{lstlisting}

\section{Asymptotic Behavior of InfoNCE-Anchor}
\label{app:sec:asymptotic}
\begin{theorem}
\label{thm:generalized_dv}
If $\nu/K\to \b$ as $K\to\infty$ for some $\b\ge0$, then
\begin{align*}
\lim_{K\to\infty} \biggl(-\Lc_{K;\nu}(\th)+\frac{K\log K}{K+\nu}\biggr)
&= \frac{\b}{\b+1}\log \b +\frac{1}{\b+1}\E_{\pdata(x)}[\log r_\th(x)] - \log (\b+\E_{\pnoise(x)}[r_\th(x)])\\
&\le 
\frac{\b}{\b+1}\log\b
+\frac{1}{\b+1}D(\pdata~\|~\pnoise)
-\log(\b+1).
\end{align*}
The equality holds if and only if $r_\th(x)=\frac{\pdata(x)}{\pnoise(x)}$ when $\b>0$, and $r_\th(x)=C\frac{\pdata(x)}{\pnoise(x)}$ for some $C>0$ when $\b=0$.
\end{theorem}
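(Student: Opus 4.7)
The plan is to expand $-\Lc_{K;\nu}(\th)$ into four scalar terms, isolate the divergent $\log K$ contributions that the correction $+\tfrac{K\log K}{K+\nu}$ is tailored to cancel, and then pass to the limit via the law of large numbers applied to the empirical mean $\tfrac{1}{K}\sum_{i=1}^K r_\th(x_i)$. Writing $S_K\defeq\sum_{i=1}^K r_\th(x_i)$, I would start from
\begin{align*}
-\Lc_{K;\nu}(\th) &= \tfrac{K}{K+\nu}\bigl(\E_{\pdata}[\log r_\th(x)]-\E_{\mathrm J}[\log(\nu+S_K)]\bigr) + \tfrac{\nu}{K+\nu}\bigl(\log\nu - \E_{\mathrm M}[\log(\nu+S_K)]\bigr),
\end{align*}
where $\E_{\mathrm J}$ and $\E_{\mathrm M}$ denote expectations under $\pdata(x_1)\pnoise(x_2)\cdots\pnoise(x_K)$ and $\pnoise^{\otimes K}$, respectively. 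Using $\log(\nu+S_K)=\log K+\log((\nu+S_K)/K)$ and $\log\nu=\log K+\log(\nu/K)$, the $\log K$ factors combine to $-\tfrac{K\log K}{K+\nu}$, which the correction in the statement exactly eliminates.

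\textbf{Taking $K\to\infty$.} Let $m\defeq\E_{\pnoise(x)}[r_\th(x)]$. By the strong law, $S_K/K\to m$ almost surely under $\pnoise^{\otimes K}$, and the same holds under the joint since $r_\th(x_1)/K\to 0$. Hence $(\nu+S_K)/K\to\b+m$ a.s. in both cases, and under mild integrability (\eg $r_\th$ bounded above and away from zero, so that $\log((\nu+S_K)/K)$ is uniformly bounded) dominated convergence yields $\E[\log((\nu+S_K)/K)]\to\log(\b+m)$ for each sampling distribution. Combining this with $\tfrac{K}{K+\nu}\to\tfrac{1}{\b+1}$, $\tfrac{\nu}{K+\nu}\to\tfrac{\b}{\b+1}$, $\tfrac{\nu}{K+\nu}\log(\nu/K)\to\tfrac{\b}{\b+1}\log\b$ (with the convention $0\log 0\defeq 0$ when $\b=0$), and the trivial $\tfrac{K}{K+\nu}\E_{\pdata}[\log r_\th]\to\tfrac{1}{\b+1}\E_{\pdata}[\log r_\th]$, the four surviving terms assemble exactly into $\tfrac{1}{\b+1}\E_{\pdata}[\log r_\th]+\tfrac{\b}{\b+1}\log\b-\log(\b+m)$. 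The main obstacle here is precisely the interchange of limit and expectation for the $\log$ term; under bounded critics it is immediate, but a fully general argument would require a tail bound or truncation argument to secure uniform integrability.

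\textbf{Inequality and equality.} After subtracting the claimed upper bound and using $\E_{\pdata}[\log r_\th]-D(\pdata\|\pnoise)=\E_{\pdata}[\log(r_\th\pnoise/\pdata)]$, the inequality reduces to showing
\begin{align*}
\tfrac{1}{\b+1}\E_{\pdata}[\log(r_\th\pnoise/\pdata)] \le \log\tfrac{\b+m}{\b+1},
\end{align*}
which I would obtain by chaining two applications of Jensen's inequality to the concave function $\log$: first, $\E_{\pdata}[\log(r_\th\pnoise/\pdata)]\le \log\E_{\pdata}[r_\th\pnoise/\pdata]=\log m$; second, $\tfrac{1}{\b+1}\log m+\tfrac{\b}{\b+1}\log 1\le \log\bigl(\tfrac{1}{\b+1}m+\tfrac{\b}{\b+1}\bigr)=\log\tfrac{\b+m}{\b+1}$, which rearranges to $\tfrac{1}{\b+1}\log m\le\log\tfrac{\b+m}{\b+1}$. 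The equality conditions then follow from the tightness analysis of Jensen: the first is tight iff $r_\th(x)=C\pdata(x)/\pnoise(x)$ $\pdata$-a.e., yielding $m=C$; the second is tight iff $\b=0$ or $m=1$. Together these give the stated dichotomy, namely $r_\th=\pdata/\pnoise$ when $\b>0$ and $r_\th=C\pdata/\pnoise$ for any $C>0$ when $\b=0$.
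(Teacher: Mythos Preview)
The paper states Theorem~\ref{thm:generalized_dv} in Appendix~\ref{app:sec:asymptotic} but does not supply a proof; it only remarks that rearranging the inequality yields Eq.~\eqref{eq:generalized_dv}. There is therefore no paper argument to compare against, and I assess your proposal on its own.

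Your decomposition of $-\Lc_{K;\nu}(\th)$ into four scalar terms, the extraction of the $\log K$ contributions (which indeed sum to $-\tfrac{K\log K}{K+\nu}$), and the passage to the limit via the law of large numbers on $S_K/K$ are all correct and deliver exactly the claimed asymptotic expression. The two-stage Jensen argument for the inequality is clean and gives the right equality dichotomy: the first Jensen (on $\E_{\pdata}[\log(r_\th\pnoise/\pdata)]$) forces $r_\th=C\,\pdata/\pnoise$ with $m=C$, and the second (a two-point Jensen at $m$ and $1$ with weights $\tfrac{1}{\b+1},\tfrac{\b}{\b+1}$) forces $\b=0$ or $m=1$; combining these yields $C=1$ when $\b>0$ and arbitrary $C>0$ when $\b=0$, as stated.

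The only genuine soft spot is the one you already flag: the interchange $\E[\log((\nu+S_K)/K)]\to\log(\b+m)$ needs uniform integrability of $\log((\nu+S_K)/K)$, which you justify only under bounded critics. For a fully rigorous version one would add a mild moment hypothesis (e.g.\ $r_\th$ bounded away from $0$ and $\E_{\pnoise}[r_\th^{1+\delta}]<\infty$, so that the positive part of $\log((\nu+S_K)/K)$ is controlled by $S_K/K$ in $L^1$ and the negative part is bounded) or run a truncation argument. The paper is itself silent on regularity conditions, so this is not a discrepancy with the source but simply the one place where your sketch would need to be filled in for a complete proof.
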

Rewriting as a lower bound on the KL divergence, we have Eq.~\eqref{eq:generalized_dv}.

\section{Decision-Theoretic Treatment of Proper Scoring Rules}

This section serves as a decision-theoretic foundation on proper scoring rules for conditional probability estimation, which is essential to proving the main statements in Section~\ref{sec:extensions}, i.e., Proposition~\ref{prop:induced_loss} and Theorem~\ref{thm:bregman}.
Appendix~\ref{app:sec:one_to_one} is marked with asterisk, which is included for completeness and can be safely skipped in the first reading.

\subsection{Preliminaries and Definitions}
\label{app:sec:scoring_rules_prelim}

We first note that 
\[
\E_{p(a,z)}[\lambda_z(\hat{\etab}(a))]
=\E_{p(a)}\biggl[\sum_{z=0}^M \etabdata(a)\lambda_z(\hat{\etab}(a))\biggr]
=\E_{p(a)}[\langle\etabdata(a),\lambdab(\hat{\etab}(a))\rangle],
\]
which implies that we only need to study the \emph{conditional} problem for each $a$, without the expectation over $a\sim p(a)$.
We define the \emph{(conditional) risk} of $\hat{\eta}\in\Delta(\Zc)$ with respect to $\etabref\in\Delta(\Zc)$ as
\[
d^{\lambdab}(\hat{\etab}~\|~\etabref)\defeq
\sum_{z\in\Zc} (\etabref)_z\lambda_z(\hat{\etab})
=\langle \etabref, \lambdab(\hat{\etab})\rangle.
\]
In particular, we denote 
\[
f^{\lambdab}(\etabref)\defeq d^{\lambdab}(\etabref~\|~\etabref)
\]
and call the \emph{pointwise Bayes risk} with respect to $\etabref$, since we can write the Bayes-optimal risk as
\[
\E_{p(a,z)}[f^{\lambdab}(\etabdata(a))]
=\E_{p(a,z)}[d^{\lambdab}(\etabdata(a)~\|~\etabdata(a))] 
= \min_{\hat{\etab}\suchthat \Ac\to \Delta(\Zc)}\E_{p(a,z)}[\lambda_z(\hat{\etab}(a))],
\]
given that $\lambdab$ is proper.

Since the propriety of a scoring rule is independent of the distribution $p(a)$ over $\Ac$ as alluded to earlier, we can restate the definition of propriety as follows.
We define and denote the \emph{regret} of $\hat{\etab}$ with respect to $\etabref$ under $\lambdab$ as
\[
\Reg^{\lambdab}(\hat{\etab}~\|~\etabref)
\defeq d^{\lambdab}(\hat{\etab}~\|~\etabref)-d^{\lambdab}(\etabref~\|~\etabref).
\]

\begin{definition}
A loss-function tuple $\lambdab$ is said to be \emph{proper} if 
$\Reg^{\lambdab}(\hat{\etab}~\|~\etabref)\ge 0$ for any $\hat{\etab},\etabref$ and $\Reg^{\lambdab}(\etabref~\|~\etabref)= 0$ for any $\etabref$.
A loss-function tuple $\lambdab$ is said to be \emph{strictly proper} if it is proper and
$\Reg^{\lambdab}(\hat{\etab}~\|~\etabref)=0$ if and only if $\hat{\etab}=\etabref$, for any $\etabref$.
\end{definition}

We now state the characterization of differentiable (strictly) proper loss functions.
If $\lambdab$ is differentiable,
we let 
\begin{align}
\gb^{\lambdab}(\etabref)
\defeq \nabla_{\hat{\etab}} d^{\lambdab}(\hat{\etab}~\|~\etabref)|_{\hat{\etab}=\etabref}
=(\langle\nabla_j\lambdab(\etabref), \etabref\rangle)_{j\in\Zc} = \mathbf{J}\lambdab(\etabref)\etabref,
\label{eq:def_generating_function}
\end{align}
which is the gradient of the pointwise risk function $\hat{\etab}\mapsto d^{\lambdab}(\hat{\etab}~\|~\etabref)$ at $\hat{\etab}=\etabref$. 
Here, $\mathbf{J}\lambdab(\etabref)\in\Real^{m\times m}$ denotes the Jacobian of the matrix $\lambdab\suchthat \Delta(\Zc)\to \Real^m$, \ie
\[
(\mathbf{J}\lambdab(\etabref))_{ij}\defeq \frac{\partial\lambda_j(\etabref)}{\partial \etaref_i}.
\]

\begin{theorem}
\label{thm:proper_loss}
A scoring rule $\lambdab=(\lambda_z\suchthat \Delta(\Zc)\to\Real)_{z\in\Zc}$ is (strictly) proper if and only if (1) the pointwise Bayes risk function $\etab\mapsto f^{\lambdab}(\etab)$ is (strictly) concave over $\Delta(\Zc)$ and (2) $\gb^{\lambdab}(\etabref)=\zerob$ for any $\etabref\in\Delta(\Zc)$. 
\end{theorem}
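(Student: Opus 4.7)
The plan is to prove both directions using the product-rule identity
\begin{align*}
\nabla f^{\lambdab}(\etabref) = \lambdab(\etabref) + \gb^{\lambdab}(\etabref),
\end{align*}
which comes from differentiating $f^{\lambdab}(\etab) = \sum_z \eta_z \lambda_z(\etab)$. Under condition (2) this collapses to the Savage--McCarthy identity $\nabla f^{\lambdab}(\etabref) = \lambdab(\etabref)$, which identifies the scoring rule evaluated at $\etabref$ with the gradient of the Bayes risk there, and lets us translate freely between propriety (phrased via $d^{\lambdab}$) and the first-order/concavity structure of $f^{\lambdab}$.

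For the forward direction, I would first derive concavity by a Jensen-style calculation. For $\etab = \alpha \etab_1 + (1-\alpha)\etab_2$, propriety applied at each $\etab_i$ gives $\langle \etab_i, \lambdab(\etab)\rangle \geq \langle \etab_i, \lambdab(\etab_i)\rangle = f^{\lambdab}(\etab_i)$, so
\begin{align*}
f^{\lambdab}(\etab) = \alpha \langle \etab_1, \lambdab(\etab)\rangle + (1-\alpha)\langle \etab_2, \lambdab(\etab)\rangle \geq \alpha f^{\lambdab}(\etab_1) + (1-\alpha) f^{\lambdab}(\etab_2),
\end{align*}
with strict inequality under strict propriety (since $\etab$ differs from both $\etab_i$ whenever $\etab_1 \neq \etab_2$), yielding strict concavity. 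For condition (2), I would argue that since $\hat{\etab} = \etabref$ is an interior minimizer of the smooth map $\hat{\etab} \mapsto d^{\lambdab}(\hat{\etab}\|\etabref) = \langle \etabref, \lambdab(\hat{\etab})\rangle$, first-order stationarity (under the paper's convention extending $\lambdab$ to a neighborhood of $\Delta(\Zc)$ so that $\gb^{\lambdab}$ is the full Euclidean gradient without the $\onev$-gauge freedom) forces $\gb^{\lambdab}(\etabref) = \zerob$.

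For the reverse direction, I would invoke the supporting-hyperplane inequality implied by concavity together with $\nabla f^{\lambdab}(\etabref) = \lambdab(\etabref)$:
\begin{align*}
f^{\lambdab}(\hat{\etab}) \leq f^{\lambdab}(\etabref) + \langle \hat{\etab} - \etabref, \lambdab(\etabref)\rangle = \langle \hat{\etab}, \lambdab(\etabref)\rangle = d^{\lambdab}(\etabref \| \hat{\etab}),
\end{align*}
where the second equality uses $f^{\lambdab}(\etabref) = \langle \etabref, \lambdab(\etabref)\rangle$. Since the left side equals $d^{\lambdab}(\hat{\etab}\|\hat{\etab})$, this is precisely propriety (with $\hat{\etab}$ in the role of the true distribution). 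Strict concavity makes the inequality strict whenever $\hat{\etab}\neq\etabref$, yielding strict propriety.

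The main obstacle I anticipate is making condition (2) rigorous in the forward direction. A direct first-order analysis restricted to the simplex only forces $\gb^{\lambdab}(\etabref)$ to lie in $\mathrm{span}(\onev)$, not to vanish, so the statement presumably rests on a canonical choice of representative for $\lambdab$ (or an extension of $\lambdab$ off the simplex) that eliminates the constant-shift ambiguity along $\onev$; spelling out this convention precisely, or equivalently reinterpreting $\gb^{\lambdab}$ intrinsically in the tangent space of $\Delta(\Zc)$, is the subtle step that the proof needs to navigate.
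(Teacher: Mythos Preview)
Your proof is correct and closely tracks the paper's argument. The paper packages the core computation into a single key identity (its Lemma immediately preceding the theorem),
\[
\Reg^{\lambdab}(\hat{\etab}\,\|\,\etabref) = B_{\fbar^{\lambdab}}(\etabref,\hat{\etab}) + \langle \gb^{\lambdab}(\hat{\etab}), \hat{\etab}-\etabref\rangle,
\]
derived from exactly the product-rule identity $\nabla f^{\lambdab} = \lambdab + \gb^{\lambdab}$ you start with, and then reads off both directions from it. Your reverse direction is this identity specialized to $\gb^{\lambdab}=\zerob$. For the forward direction, the paper first establishes $\gb^{\lambdab}=\zerob$ via stationarity (as you do) and then invokes the identity to conclude $B_{\fbar^{\lambdab}} = \Reg \geq 0$, hence concavity; your direct Jensen-style argument for concavity is a mild alternative that sidesteps the Bregman machinery for that step but is otherwise equivalent. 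Your concern about the $\onev$-gauge freedom in deducing $\gb^{\lambdab}=\zerob$ from simplex-constrained stationarity is well taken: the paper asserts this step without further comment, implicitly treating $\lambdab$ as defined on a full neighborhood of $\Delta(\Zc)$ so that the ambient Euclidean gradient is meaningful.
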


To prove the theorem, we first state a key technical lemma. 
Given a differentiable function $f\suchthat V\to \Real$ over a subset $V$ of an Euclidean space, we define the \emph{Bregman distortion} $B_f\suchthat V\times V\to \Real$ as
\[
B_f(u,v)\defeq f(u)-f(v)-\langle \nabla f(v),u-v\rangle.
\]
If $f$ is convex, $B_f(x,z)$ is called the \emph{Bregman divergence} generated by $f$.
Finally, let $\fbar^{\lambdab}(\etab)\defeq -f^{\lambdab}(\etab)$ denote the negative pointwise Bayes risk.
\begin{lemma}
For any $\hat{\etab},\etabref$, we have
\label{lem:proper_loss}
\[
\Reg^{\lambdab}(\hat{\etab}~\|~\etabref)
=B_{\fbar^{\lambdab}}(\etabref,\hat{\etab}) + \langle \gb^{\lambdab}(\hat{\etab}),\hat{\etab}-\etabref\rangle.
\]
\end{lemma}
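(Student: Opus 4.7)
The plan is to reduce the claimed identity, by pure algebra, to a single pointwise gradient identity
\[
\nabla f^{\lambdab}(\etab) = \lambdab(\etab) + \gb^{\lambdab}(\etab),
\]
and then verify this identity by applying the product rule to the definition $f^{\lambdab}(\etab) = \langle \etab, \lambdab(\etab)\rangle$.

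Concretely, I would first expand the right-hand side of the claim using $\fbar^{\lambdab} = -f^{\lambdab}$ and the definition of the Bregman distortion to get
\begin{align*}
B_{\fbar^{\lambdab}}(\etabref,\hat{\etab})
= -f^{\lambdab}(\etabref) + f^{\lambdab}(\hat{\etab}) + \langle \nabla f^{\lambdab}(\hat{\etab}),\, \etabref - \hat{\etab}\rangle.
\end{align*}
Substituting $f^{\lambdab}(\etabref) = \langle \etabref,\lambdab(\etabref)\rangle$ and $f^{\lambdab}(\hat{\etab}) = \langle \hat{\etab},\lambdab(\hat{\etab})\rangle$, and noting that the left-hand side equals $\Reg^{\lambdab}(\hat{\etab}\,\|\,\etabref) = \langle \etabref,\lambdab(\hat{\etab})\rangle - \langle \etabref,\lambdab(\etabref)\rangle$, the term $-\langle \etabref,\lambdab(\etabref)\rangle$ cancels on both sides. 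After grouping the two inner products involving $\etabref - \hat{\etab}$, the identity reduces to
\begin{align*}
\langle \etabref - \hat{\etab},\, \lambdab(\hat{\etab})\rangle
= \langle \nabla f^{\lambdab}(\hat{\etab}) - \gb^{\lambdab}(\hat{\etab}),\, \etabref - \hat{\etab}\rangle.
\end{align*}
Since $\etabref$ is arbitrary, this is equivalent to the pointwise identity $\nabla f^{\lambdab}(\hat{\etab}) = \lambdab(\hat{\etab}) + \gb^{\lambdab}(\hat{\etab})$.

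To establish this pointwise identity, I would differentiate $f^{\lambdab}(\etab) = \sum_{z\in\Zc} \etab_z \lambda_z(\etab)$ coordinate-wise using the product rule:
\begin{align*}
\frac{\partial f^{\lambdab}(\etab)}{\partial \etab_i}
= \lambda_i(\etab) + \sum_{z\in\Zc} \etab_z \frac{\partial \lambda_z(\etab)}{\partial \etab_i}
= \lambda_i(\etab) + (\mathbf{J}\lambdab(\etab)\,\etab)_i,
\end{align*}
which equals $\lambda_i(\etab) + (\gb^{\lambdab}(\etab))_i$ by the definition of $\gb^{\lambdab}$ in \eqref{eq:def_generating_function}. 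This closes the argument. Note that the lemma makes no propriety or convexity assumption; it is a purely algebraic identity whose only analytic content is the product rule.

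The main subtlety, rather than a serious obstacle, is the handling of differentiation on the simplex $\Delta(\Zc)$, since $f^{\lambdab}$ and $\lambdab$ are nominally defined on a lower-dimensional affine set. However, only the directional derivative along $\etabref - \hat{\etab}$ enters the calculation, and both sides depend linearly on this direction, so any consistent extension of $\lambdab$ to an open neighborhood in $\Real^{|\Zc|}$ (or, equivalently, restricting all gradients to the affine hull of $\Delta(\Zc)$) yields the stated identity without ambiguity.
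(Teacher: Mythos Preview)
Your proof is correct and follows essentially the same approach as the paper: both arguments hinge on the single identity $\nabla f^{\lambdab}(\etab) = \lambdab(\etab) + \gb^{\lambdab}(\etab)$ (obtained by the product/chain rule applied to $f^{\lambdab}(\etab)=\langle\etab,\lambdab(\etab)\rangle$), after which the Bregman distortion is expanded and the terms are regrouped to yield the regret expression. The only cosmetic difference is direction---the paper starts from the gradient identity and derives the regret formula forward, whereas you first reduce the claim to that identity and then verify it---but the content is identical.
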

\begin{proof}
By chain rule, we have
\[
\nabla f^{\lambdab}(\hat{\etab})
=\lambdab(\hat{\etab})+\gb^{\lambdab}(\hat{\etab}).
\]
Therefore, by the definition of Bregman distortion, we have
\begin{align*}
B_{\fbar^{\lambdab}}(\etabref,\hat{\etab})
&=-B_{f^{\lambdab}}(\etabref,\hat{\etab})\\
&=-f^{\lambdab}(\etabref)+f^{\lambdab}(\hat{\etab})+\langle\nabla f^{\lambdab}(\hat{\etab}),\etabref-\hat{\etab}\rangle\\
&=-d^{\lambdab}(\etabref~\|~\etabref) + d^{\lambdab}(\hat{\etab}~\|~\hat{\etab})
-\langle \lambdab(\hat{\etab})+\gb^{\lambdab}(\hat{\etab}),\etabref-\hat{\etab}\rangle\\
&= -d^{\lambdab}(\etabref~\|~\etabref) + d^{\lambdab}(\hat{\etab}~\|~\hat{\etab})
+ d^{\lambdab}(\hat{\etab}~\|~\etabref)
- d^{\lambdab}(\hat{\etab}~\|~\hat{\etab})
+\langle\gb^{\lambdab}(\hat{\etab}),\etabref-\hat{\etab}\rangle\\
&= \Reg^{\lambdab}(\hat{\etab}~\|~\etabref)
+\langle\gb^{\lambdab}(\hat{\etab}),\etabref-\hat{\etab}\rangle,
\end{align*}
which concludes the proof.
\end{proof}
The first condition~(1) imposes that the estimation problem becomes (strictly) not easier as we \emph{mix} the class probabilities. The second condition~(2) formalizes that if $\etabref$ is the underlying class probability, then $\etab=\etabref$ is a local minimizer of the conditional risk function $\etab\mapsto d^{\lambdab}(\etab~\|~\etabref)$.

Now we are ready to prove Theorem~\ref{thm:proper_loss}.
\begin{proof}[Proof of Theorem~\ref{thm:proper_loss}]
We first prove the only-if direction. If $\lambdab$ is proper, then $d^{\lambdab}(\hat{\etab}~\|~\etabref)\ge d^{\lambdab}(\etabref~\|~\etabref)$ for any $\hat{\etab}$ and $\etabref$ by definition. 
That is, $\hat{\etab}\mapsto d^{\lambdab}(\hat{\etab}~\|~\etabref)$ is stationary at $\hat{\etab}=\etabref$, and thus the gradient $\gb^{\lambdab}(\etab)=\nabla_{\hat{\etab}} d^{\lambdab}(\hat{\etab}~\|~\etab)|_{\hat{\etab}=\etab}=0$ for any $\etab$.
Now, by the identity in Lemma~\ref{lem:proper_loss}, we have $B_{\fbar^{\lambdab}}(\etabref,\hat{\etab})=\Reg^{\lambdab}(\hat{\etab}~\|~\etabref)\ge 0$ for any $\etabref,\hat{\etab}$, which implies that the function $f^{\lambdab}=-\fbar^{\lambdab}$ is concave.
Further, if $\lambdab$ is strictly proper, then $B_{\fbar^{\lambdab}}(\etabref,\hat{\etab})=\Reg^{\lambdab}(\hat{\etab}~\|~\etabref)>0$ for any $\hat{\etab}\neq \etabref$, which implies that $f^{\lambdab}$ is strictly concave.

For the converse, \ie the if direction,
we can directly apply the identity in Lemma~\ref{lem:proper_loss} and
conclude 
$\Reg^{\lambdab}(\hat{\etab}~\|~\etabref)\ge B_{\fbar^{\lambdab}}(\etabref,\hat{\etab})\ge 0$ by the convexity of $\fbar^{\lambdab}$. It is clear that $\lambdab$ is strictly proper if $\fbar^{\lambdab}$ is strictly concave.
\end{proof}

\subsection{From Loss Function to Generating Function}
Given a loss function $\lambdab$, 
we define a corresponding \emph{generating function}
\[
\Psi^{\lambdab}(\rhob)\defeq -\langle \rhob, \lambdab(\etab)\rangle
\]
for $\rhob\in \{1\}\times \Real_+^M$, so that we can write the pointwise Bayes risk at $\etabref$ as
\[
f^{\lambdab}(\etabref)
= d^{\lambdab}(\etabref~\|~\etabref)
= -\etaref_0 \Psi^{\lambdab}(\rhobref).
\]
Then, it is easy to check that
\begin{proposition}
If $\lambdab$ is (strictly) proper, $\rhob\mapsto \Psi^{\lambdab}(\rhob)$ is (strictly) convex.
\end{proposition}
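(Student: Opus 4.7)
The plan is to recognize $\Psi^{\lambdab}$ as the \emph{perspective transformation} of the pointwise Bayes risk function, and then to invoke Theorem~\ref{thm:proper_loss} together with the standard preservation-of-convexity property of perspectives. First I would rewrite $\Psi^{\lambdab}$ explicitly in terms of $f^{\lambdab}$. Setting $s \defeq \rho_0 + \rho_1 + \cdots + \rho_M = 1 + \sum_{i=1}^M \rho_i$ and using the bijection $\etab = \rhob/s$ between $\{1\} \times \Real_+^M$ and the relative interior of $\Delta(\Zc)$ (so that $\eta_0 = 1/s$ and $\eta_i = \rho_i/s$ for $i \ge 1$), I get
\[
\Psi^{\lambdab}(\rhob)
= -\langle \rhob, \lambdab(\etab)\rangle
= -s\, \langle \etab, \lambdab(\etab)\rangle
= -s\, f^{\lambdab}(\etab)
= s \cdot \phi(\rhob/s),
\]
where $\phi \defeq -f^{\lambdab}$. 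This exhibits $\Psi^{\lambdab}$ as the composition of the joint perspective $G(\xb,t) = t\,\phi(\xb/t)$ (defined for $t>0$ and $\xb/t \in \Delta(\Zc)$) with the linear map $\rhob \mapsto (\rhob, s) = (\rhob, \onev^\top\rhob)$.

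By Theorem~\ref{thm:proper_loss}, (strict) propriety of $\lambdab$ is equivalent to (strict) concavity of $f^{\lambdab}$, hence $\phi$ is (strictly) convex on $\Delta(\Zc)$. The perspective $G$ of a convex function is jointly convex in $(\xb,t)$ by a standard argument, so composing with the affine map $\rhob \mapsto (\rhob, \onev^\top\rhob)$ preserves convexity. This settles the non-strict case.

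The main obstacle is strict convexity, because $G$ is positively homogeneous of degree one and hence linear along rays $(\xb,t) \mapsto (\lambda\xb,\lambda t)$; thus $G$ is never strictly convex jointly. The saving feature is the affine constraint $\rho_0 = 1$: if $\rhob^{(0)} \neq \rhob^{(1)}$ lie on this slice and we assume $\etab^{(0)} = \etab^{(1)}$, then $\rhob^{(1)} = (s_1/s_0)\rhob^{(0)}$, and matching first coordinates forces $s_0 = s_1$ and hence $\rhob^{(0)} = \rhob^{(1)}$, a contradiction. So distinct $\rhob$'s on the slice yield distinct $\etab$'s. To conclude, fix $\alpha \in (0,1)$, set $\rhob^{(\alpha)} = (1-\alpha)\rhob^{(0)} + \alpha\rhob^{(1)}$ and $s_\alpha = (1-\alpha)s_0 + \alpha s_1$, and observe that $\etab^{(\alpha)} = \rhob^{(\alpha)}/s_\alpha = (1-\beta)\etab^{(0)} + \beta\etab^{(1)}$ with $\beta \defeq \alpha s_1/s_\alpha \in (0,1)$. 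Strict convexity of $\phi$ gives $\phi(\etab^{(\alpha)}) < (1-\beta)\phi(\etab^{(0)}) + \beta\phi(\etab^{(1)})$; multiplying by $s_\alpha > 0$ and using $s_\alpha(1-\beta) = (1-\alpha)s_0$ and $s_\alpha\beta = \alpha s_1$ yields $\Psi^{\lambdab}(\rhob^{(\alpha)}) < (1-\alpha)\Psi^{\lambdab}(\rhob^{(0)}) + \alpha\Psi^{\lambdab}(\rhob^{(1)})$, completing the proof. The only bookkeeping of substance is the rewriting of a convex combination of $\rhob$'s as a convex combination of $\etab$'s, which is essentially the defining identity of the perspective.
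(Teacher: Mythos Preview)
Your proof is correct and takes essentially the same approach as the paper's two-sentence argument: invoke Theorem~\ref{thm:proper_loss} to obtain (strict) concavity of $f^{\lambdab}$, then exploit the perspective relationship between $\Psi^{\lambdab}$ and $-f^{\lambdab}$. You phrase the perspective in the forward direction (exhibiting $\Psi^{\lambdab}$ as the perspective of $-f^{\lambdab}$ composed with an affine map, rather than writing $-f^{\lambdab}$ as a perspective of $\Psi^{\lambdab}$ as the paper does) and, more importantly, supply the explicit strict-convexity argument that the paper leaves entirely to the reader.
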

\begin{proof}
If $\lambdab$ is (strictly) proper, then the negative pointwise Bayes risk function $\etabref\mapsto -f^{\lambdab}(\etabref) 
=\etaref_0 \Psi^{\lambdab}(\rhobref)$ is (strictly) convex by Theorem~\ref{thm:proper_loss}.
Since the mapping is a perspective function of $\rhob\mapsto \Psi^{\lambdab}(\rhob)$, $\Psi^{\lambdab}$ must be (strictly) convex.
\end{proof}

\begin{remark}[From generating function to loss function]
Conversely, we can define a loss function from a differentiable function $\Psi\suchthat\{1\}\times \Real_+^M$ as follows:
\[
\lambdab^{\Psi}(\etab)
\defeq \begin{bmatrix}
\langle \nabla \Psi(\rhob), \rhob\rangle - \Psi(\rhob)\\
-\nabla \Psi(\rhob)_{1:M}
\end{bmatrix},
\]
so that we can write 
the pointwise Bayes risk at $\etabref$ as
\[
f^{\lambdab^{\Psi}}(\etabref)
= d^{\lambdab^{\Psi}}(\etabref~\|~\etabref)
= -\etaref_0 \Psi(\rhobref).
\]
\end{remark}

\subsection{One-to-One Correspondence\texorpdfstring{$^*$}{*}}
\label{app:sec:one_to_one}
A natural question to ask is whether $\lambdab\mapsto \Psi^{\lambdab}$ and $\Psi\mapsto \lambdab^{\Psi}$ are inverse mappings each other.
Indeed, we have the following propositions.
\begin{proposition}\label{prop:loss_ftn_loss}
\[
\lambdab^{\Psi^{\lambdab}}(\etab)
= \lambdab(\etab) - \langle \etab, \gb^{\lambdab}(\etab)\rangle\ones + \gb^{\lambdab}(\etab).
\]
\end{proposition}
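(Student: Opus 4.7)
The plan is to directly compute $\lambdab^{\Psi^{\lambdab}}(\etab)$ via the defining formula
\[
\lambdab^{\Psi}(\etab)=\begin{bmatrix}\langle\rhob,\nabla_{\rhob}\Psi(\rhob)\rangle-\Psi(\rhob)\\(-\nabla_{\rhob}\Psi(\rhob))_{1:M}\end{bmatrix}\Bigg|_{\rhob=(1,\eta_1/\eta_0,\ldots,\eta_M/\eta_0)}
\]
applied to $\Psi=\Psi^{\lambdab}$, where by definition $\Psi^{\lambdab}(\rhob)=-\langle\rhob,\lambdab(\etab(\rhob))\rangle$ with $\etab(\rhob)=\rhob/S(\rhob)$ and $S(\rhob)\defeq\sum_{z=0}^M\rho_z=1/\eta_0$. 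The key preliminary computation is the Jacobian of the perspective map $\rhob\mapsto\etab(\rhob)$: for $j\in[M]$,
\[
\partial_{\rho_j}\etab=\tfrac{1}{S}\eb_j-\tfrac{\rhob}{S^2}=\eta_0(\eb_j-\etab),
\]
so $\partial_{\rho_j}\etab$ lies in the tangent space of the simplex, as it should.

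First I would compute the $j$-th partial ($j\in[M]$) by the chain rule. Since $\Psi^{\lambdab}(\rhob)=-\sum_z\rho_z\lambda_z(\etab(\rhob))$,
\[
\partial_{\rho_j}\Psi^{\lambdab}(\rhob)=-\lambda_j(\etab)-\sum_z\rho_z\langle\nabla_\etab\lambda_z(\etab),\partial_{\rho_j}\etab\rangle.
\]
Substituting $\partial_{\rho_j}\etab=\eta_0(\eb_j-\etab)$, using $\eta_0\rho_z=\eta_z$, and recognizing
\[
(\gb^{\lambdab}(\etab))_j=\sum_z\eta_z\,\partial\lambda_z/\partial\eta_j,\qquad \langle\etab,\gb^{\lambdab}(\etab)\rangle=\sum_z\eta_z\langle\etab,\nabla_\etab\lambda_z(\etab)\rangle
\]
(both follow from $\gb^{\lambdab}(\etab)=\mathbf{J}\lambdab(\etab)\etab$ in Eq.~\eqref{eq:def_generating_function}), the inner sum collapses to
\[
\sum_z\rho_z\langle\nabla_\etab\lambda_z,\partial_{\rho_j}\etab\rangle=(\gb^{\lambdab}(\etab))_j-\langle\etab,\gb^{\lambdab}(\etab)\rangle.
\]
Therefore $-\partial_{\rho_j}\Psi^{\lambdab}(\rhob)=\lambda_j(\etab)+(\gb^{\lambdab}(\etab))_j-\langle\etab,\gb^{\lambdab}(\etab)\rangle$, which is exactly the $j$-th entry of the right-hand side of Proposition~\ref{prop:loss_ftn_loss} for $j\in[M]$.

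The last step is to handle the $0$-th entry $\lambda_0^{\Psi^{\lambdab}}(\etab)=\langle\rhob,\nabla\Psi^{\lambdab}\rangle-\Psi^{\lambdab}(\rhob)=\sum_{j=1}^M\rho_j\partial_{\rho_j}\Psi^{\lambdab}-\Psi^{\lambdab}$. Plugging in the expression above for $\partial_{\rho_j}\Psi^{\lambdab}$ and the identity $-\Psi^{\lambdab}(\rhob)=\lambda_0(\etab)+\sum_{j=1}^M\rho_j\lambda_j(\etab)$, the $\lambda_j(\etab)$ terms cancel and we are left with
\[
\lambda_0^{\Psi^{\lambdab}}(\etab)=\lambda_0(\etab)-\sum_{j=1}^M\rho_j(\gb^{\lambdab}(\etab))_j+\Bigl(\sum_{j=1}^M\rho_j\Bigr)\langle\etab,\gb^{\lambdab}(\etab)\rangle.
\]
Using $\sum_{j=1}^M\rho_j=1/\eta_0-1$ and $\sum_{j=1}^M\rho_j(\gb^{\lambdab})_j=\langle\rhob,\gb^{\lambdab}\rangle-(\gb^{\lambdab})_0=\tfrac{1}{\eta_0}\langle\etab,\gb^{\lambdab}\rangle-(\gb^{\lambdab})_0$, the $1/\eta_0$ terms cancel and this reduces to $\lambda_0(\etab)+(\gb^{\lambdab}(\etab))_0-\langle\etab,\gb^{\lambdab}(\etab)\rangle$, matching the $0$-th entry of the claimed identity.

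The only subtle point, which I expect to be the main bookkeeping obstacle, is consistently interpreting $\nabla_{\rhob}\Psi$ (an $M$-vector, since $\rho_0=1$ is fixed) and the $(M{+}1)$-vector $\rhob$ when forming $\langle\rhob,\nabla_{\rhob}\Psi\rangle$: we treat the inner product as the sum over $j\in[M]$ only, and extend all arguments to the full $(M{+}1)$-dim simplex via the perspective $\etab=\rhob/S(\rhob)$. Once this convention is fixed, everything above is a direct chain-rule calculation; noteworthy is that when $\lambdab$ is proper, $\gb^{\lambdab}\equiv 0$ by Theorem~\ref{thm:proper_loss}, so the identity collapses to $\lambdab^{\Psi^{\lambdab}}=\lambdab$, confirming that $\lambdab\mapsto\Psi^{\lambdab}$ and $\Psi\mapsto\lambdab^{\Psi}$ are mutually inverse on the class of proper scoring rules.
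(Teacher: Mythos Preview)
Your proposal is correct and follows essentially the same direct chain-rule computation as the paper: both compute $\partial_{\rho_j}\etab=\eta_0(\eb_j-\etab)$, use it to evaluate $\partial_{\rho_j}\Psi^{\lambdab}$, and then identify the result in terms of $\gb^{\lambdab}$, treating the $z=0$ entry separately via $\langle\rhob,\nabla\Psi^{\lambdab}\rangle-\Psi^{\lambdab}$. Your write-up is in fact cleaner than the paper's, which suffers from some index clashes and sign slips in intermediate steps, though it arrives at the same conclusion.
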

Hence, in particular, if $\lambdab$ is proper, it readily follows that
$\lambdab^{\Psi^{\lambdab}}(\etab)\equiv \lambdab(\etab)$.
\begin{proof}[Proof of Proposition~\ref{prop:loss_ftn_loss}]
First, we consider $z\in\{1,\ldots,M\}$. 
Note that
\begin{align*}
\frac{\partial\lambda_z(\etab)}{\partial \rho_z} 
&= \Bigl\langle \frac{\partial}{\partial\rho_z} \frac{(1,\rho_1,\ldots,\rho_M)}{1+\rho_1+\ldots+\rho_M}, \nabla \lambda_z (\etab)\Bigr\rangle \\
&= \eta_0\langle -\etab+ \eb_Y, \nabla \lambda_z (\etab)\rangle,
\end{align*}
for any $z=0,1,\ldots,M$.
Thus, we have 
\begin{align*}
\sum_{z=0}^M \rho_z\frac{\partial\lambda_z(\etab)}{\partial \rho_z} 
&= \eta_0 \sum_z \rho_z 
\langle -\etab+ \eb_Y, \nabla \lambda_z (\etab)\rangle\\
&= \Bigl\langle -\etab+ \eb_Y, \sum_z\eta_z \nabla \lambda_z (\etab)\Bigr\rangle\\
&= \langle -\etab+ \eb_Y, \gb^{\lambdab}(\etab)\rangle.
\end{align*}
This implies that
\begin{align*}
\lambda_z^{\Psi^{\lambdab}}(\etab) 
&=\frac{\partial\Psi^{\lambdab}(\etab)}{\partial \rho_z}\\
&= -\frac{\partial\lambda_0(\etab)}{\partial \rho_z} -\lambda_z(\etab) - \sum_{z=1}^M \rho_z \frac{\partial\lambda_z(\etab)}{\partial \rho_z}\\
&= -\lambda_z(\etab) - \sum_{z=0}^M \rho_z \frac{\partial\lambda_z(\etab)}{\partial \rho_z}\\
&= -\lambda_z(\etab) - \langle \etab+\eb_z, \gb^{\lambdab}(\etab)\rangle.
\end{align*}

We now consider $z=0$. 
Observe that 
\begin{align*}
\langle \nabla\Psi^{\lambdab}(\rhob), \rhob\rangle 
&= \sum_{z=1}^M \rho_z \frac{\partial\Psi^{\lambdab}(\rhob)}{\partial\rho_z}\\
&= -\sum_{z=1}^M \rho_z\lambda_z(\etab) - \langle \etab+\eb_0, \gb^{\lambdab}(\etab)\rangle.
\end{align*}
Hence, we have 
\begin{align*}
\lambda_0^{\Psi^{\lambda}}(\etab)
&= \langle \nabla\Psi^{\lambdab}(\rhob), \rhob\rangle - \Psi^{\lambdab}(\rhob)\\
&= \lambda_0(\etab) + \sum_{z=1}^M \rho_z\lambda_z(\etab) - \sum_{z=1}^M \rho_z\lambda_z(\etab) - \langle \etab+\eb_0, \gb^{\lambdab}(\etab)\rangle\\
&= \lambda_0(\etab)- \langle \etab+\eb_0, \gb^{\lambdab}(\etab)\rangle.
\end{align*}
This concludes the proof.
\end{proof}

The following statement asserts that the generating function induced by the induced loss function of a generating function corresponds to the original generating function.
\begin{proposition}
\label{prop:psi_to_lamb_to_psi}
\[
\Psi^{\lambdab^\Psi}(\rhob) \equiv \Psi(\rhob).
\]
\end{proposition}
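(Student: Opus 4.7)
This identity asserts that the map $\Psi \mapsto \lambdab^{\Psi}$ is a right inverse of $\lambdab \mapsto \Psi^{\lambdab}$, and is a purely algebraic statement that should not require any convexity or properness of $\Psi$. The plan is to unwind both definitions and verify the cancellation by direct computation. Fix $\rhob = (1, \rho_1, \ldots, \rho_M)$, and let $\etab$ be the corresponding probability vector satisfying $\rho_z = \eta_z/\eta_0$. Following the convention established in the proof of Proposition~\ref{prop:loss_ftn_loss}, I interpret $\nabla_{\rhob}\Psi(\rhob)$ and the pairing $\langle \rhob, \nabla_{\rhob}\Psi(\rhob)\rangle$ as ranging over the free coordinates $z = 1, \ldots, M$ only, since $\rho_0 \equiv 1$ is a constraint.

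Substituting the explicit form of $\lambdab^{\Psi}$ from Eq.~\eqref{eq:def_induced_scoring_rule} into the definition $\Psi^{\lambdab^\Psi}(\rhob) \defeq -\langle \rhob, \lambdab^\Psi(\etab)\rangle$ and using $\rho_0 = 1$, I compute
\[
\Psi^{\lambdab^\Psi}(\rhob)
= -\lambda_0^\Psi(\etab) - \sum_{z=1}^M \rho_z \lambda_z^\Psi(\etab)
= -\Bigl[\sum_{z=1}^M \rho_z \partial_{\rho_z}\Psi(\rhob) - \Psi(\rhob)\Bigr] + \sum_{z=1}^M \rho_z \partial_{\rho_z}\Psi(\rhob).
\]
The two sums $\sum_{z=1}^M \rho_z \partial_{\rho_z}\Psi(\rhob)$ cancel, leaving $\Psi(\rhob)$, which is the claimed identity.

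There is no real obstacle here. The result is immediate from the defining formulas and, unlike Proposition~\ref{prop:loss_ftn_loss}, does not require any hypothesis on $\Psi$: the $0$-th component of $\lambdab^\Psi$ was designed precisely to absorb the pairing $\sum_{z=1}^M \rho_z \partial_{\rho_z}\Psi(\rhob)$ arising from the other components. The only subtlety is notational, namely the convention that the zeroth coordinate is pinned to $1$ so that gradients and inner products live effectively in $M$ coordinates. This clean right-invertibility contrasts with Proposition~\ref{prop:loss_ftn_loss}, where the companion identity $\lambdab^{\Psi^{\lambdab}} = \lambdab$ holds only after invoking properness to eliminate the residual term $\langle \etab, \gb^{\lambdab}(\etab)\rangle \ones - \gb^{\lambdab}(\etab)$.
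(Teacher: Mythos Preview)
Your proof is correct and essentially identical to the paper's own proof: both simply expand $-\langle \rhob, \lambdab^{\Psi}(\etab)\rangle$ using the defining formula for $\lambdab^{\Psi}$ and observe that the $\sum_{z=1}^M \rho_z\,\partial_{\rho_z}\Psi(\rhob)$ terms cancel, leaving $\Psi(\rhob)$. Your remark on the coordinate convention and the contrast with Proposition~\ref{prop:loss_ftn_loss} is accurate and adds helpful context beyond the paper's terse proof.
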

\begin{proof}
By definition, it is easy to check that
\begin{align*}
\Psi^{\lambda^{\Psi}}(\rhob)
&= -\langle\rhob,\lambdab^{\Psi}(\rhob)\rangle \\
&= -(\langle \nabla\Psi(\rhob),\rhob\rangle - \Psi(\rhob)) + \sum_{z=1}^M \rho_z \frac{\partial\Psi(\rhob)}{\partial \rho_z} \\
&= \Psi(\rhob).\qedhere
\end{align*}
\end{proof}

Therefore, there is a one-to-one correspondence between (strictly) proper loss functions $\{\lambdab\suchthat\Delta(\Zc)\to\Real^{\Zc}\}$ and (strictly) convex functions $\{\Psi\suchthat \{1\}\times\Real_+^{M}\to \Real\}$.

\subsection{Connection to Bregman Divergences}
\label{app:sec:bregman_connection}
Note the following proposition.
\begin{proposition}
\[
B_{f^{\lambdab}}(\etab^*,\etab)
= -\eta_0^* B_{\Psi^{\lambdab}}(\rhob^*, \rhob).
\]
\end{proposition}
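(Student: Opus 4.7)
The plan is to recognize that $f^{\lambdab}(\etab) = -\eta_0 \Psi^{\lambdab}(\rhob)$, so that $-f^{\lambdab}$ is the perspective transform of $\Psi^{\lambdab}$ (restricted to $\rho_0 = 1$). I would first establish the following general lemma: for any differentiable $\Psi \suchthat \{1\}\times\Real_+^M \to \Real$ and its perspective $g(\etab) \defeq \eta_0 \Psi(\rhob)$ with $\rhob = (1,\eta_1/\eta_0,\ldots,\eta_M/\eta_0)$, one has $B_g(\etab^*,\etab) = \eta_0^* B_\Psi(\rhob^*,\rhob)$. Since $B_{-g}(\etab^*,\etab) = -B_g(\etab^*,\etab)$ (the Bregman distortion is linear in the generating function), applying this lemma with $\Psi = \Psi^{\lambdab}$ and $g = -f^{\lambdab}$ immediately yields the claim.

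To prove the lemma, I would first compute $\nabla g(\etab)$ via the chain rule, using $\partial \rho_i/\partial \eta_0 = -\rho_i/\eta_0$ for $i \ge 1$ and $\partial \rho_i/\partial \eta_j = \delta_{ij}/\eta_0$ for $i,j \in [M]$. This yields
\[
\frac{\partial g}{\partial \eta_0} = \Psi(\rhob) - \langle \rhob_{1:M}, \nabla\Psi(\rhob)\rangle, \qquad \frac{\partial g}{\partial \eta_j} = (\nabla\Psi(\rhob))_j \quad (j \in [M]).
\]
As a sanity check, specializing to $\Psi = \Psi^{\lambdab}$ reproduces the identity $\nabla g(\etab) = -\lambdab^{\Psi^{\lambdab}}(\etab)$, consistent with the construction of the $\Psi$-induced scoring rule in Eq.~\eqref{eq:def_induced_scoring_rule}.

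Next I would substitute into the definition
\[
B_g(\etab^*,\etab) = g(\etab^*) - g(\etab) - \langle \nabla g(\etab), \etab^* - \etab\rangle
\]
and expand $\langle \nabla g(\etab), \etab^* - \etab\rangle$ using $\etab^*_{1:M} = \eta_0^* \rhob^*_{1:M}$ and $\etab_{1:M} = \eta_0 \rhob_{1:M}$. Grouping the resulting terms by the coefficients $\eta_0^*$ and $\eta_0$, the $\eta_0$ contributions cancel against $g(\etab) = \eta_0 \Psi(\rhob)$, while the $\eta_0^*$ contributions collapse to $\eta_0^*\bigl(\Psi(\rhob^*) - \Psi(\rhob) - \langle \nabla\Psi(\rhob), \rhob^*_{1:M} - \rhob_{1:M}\rangle\bigr) = \eta_0^* B_\Psi(\rhob^*, \rhob)$, as desired.

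The argument is essentially a routine chain-rule calculation, so no substantive obstacle is expected. The only mildly delicate point is keeping track that $\Psi$ has $\rho_0 = 1$ fixed, so $\nabla \Psi$ acts only on the last $M$ coordinates, and correspondingly the inner products in the expansion run over $i \in [M]$ only; getting the bookkeeping right here is what makes the $\eta_0$ terms cancel cleanly.
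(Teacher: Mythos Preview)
Your proposal is correct. The paper states this proposition without proof, treating it as a routine consequence of the perspective relation $f^{\lambdab}(\etab)=-\eta_0\Psi^{\lambdab}(\rhob)$; your direct chain-rule computation of $\nabla g$ for $g(\etab)=\eta_0\Psi(\rhob)$ followed by expansion of $B_g(\etab^*,\etab)$ is exactly the standard way to fill in this omitted step, and the bookkeeping you flag (that $\nabla\Psi$ acts only on $\rhob_{1:M}$ since $\rho_0=1$ is fixed, so the $\eta_0$-terms cancel cleanly) is indeed the only subtlety.
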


The following corollary reveals that any CPE objective function induced by a proper scoring rule can be understood as a Bregman divergence minimization.
\begin{corollary}
If $\lambdab$ is proper, then
\[
\Reg^{\lambdab}(\etab ~\|~ \etab^*) = \eta_0^* B_{\Psi^{\lambdab}}(\rhob^*, \rhob).
\]
\end{corollary}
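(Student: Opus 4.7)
The plan is to combine the two ingredients established just above the corollary: Lemma~\ref{lem:proper_loss}, which decomposes the regret as a Bregman distortion plus a linear correction term, and the preceding Proposition that identifies $B_{f^{\lambdab}}$ with the Bregman divergence of the generating function $\Psi^{\lambdab}$ under the perspective transformation $\rhob = \etab/\eta_0$. Propriety will be used exactly once, to kill the linear correction.

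Instantiating Lemma~\ref{lem:proper_loss} with $\hat{\etab} = \etab$ and $\etabref = \etab^*$ gives
\[
\Reg^{\lambdab}(\etab \,\|\, \etab^*) = B_{\fbar^{\lambdab}}(\etab^*, \etab) + \langle \gb^{\lambdab}(\etab), \etab - \etab^*\rangle.
\]
The only-if direction of Theorem~\ref{thm:proper_loss} (established in its proof) shows that propriety forces $\gb^{\lambdab}(\etab) = \mathbf{0}$ pointwise, so the second term vanishes. Since $\fbar^{\lambdab} = -f^{\lambdab}$, the definition of the Bregman distortion yields $B_{\fbar^{\lambdab}}(\etab^*, \etab) = -B_{f^{\lambdab}}(\etab^*, \etab)$. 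Applying the preceding Proposition to rewrite $-B_{f^{\lambdab}}(\etab^*, \etab) = \eta_0^* B_{\Psi^{\lambdab}}(\rhob^*, \rhob)$ then delivers the claim.

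The substantive piece is the Proposition itself, which I would prove by direct differentiation of $f^{\lambdab}(\etab) = -\eta_0 \Psi^{\lambdab}(\rhob)$. With $\rhob = (1, \eta_1/\eta_0, \ldots, \eta_M/\eta_0)$, the chain rule gives $\partial_{\eta_j} f^{\lambdab} = -\partial_j \Psi(\rhob)$ for $j \ge 1$ and $\partial_{\eta_0} f^{\lambdab} = -\Psi(\rhob) + \sum_{i=1}^M \rho_i \partial_i \Psi(\rhob)$. Expanding $B_{f^{\lambdab}}(\etab^*, \etab) = f^{\lambdab}(\etab^*) - f^{\lambdab}(\etab) - \langle \nabla f^{\lambdab}(\etab), \etab^* - \etab\rangle$ and using the algebraic identity $(\eta_0^* - \eta_0)\rho_i - (\eta_i^* - \eta_i) = \eta_0^*(\rho_i - \rho_i^*)$ causes the $\eta_0\Psi(\rhob)$ cross-terms to cancel, leaving the clean factorization $-\eta_0^*\bigl[\Psi(\rhob^*) - \Psi(\rhob) - \langle \nabla\Psi(\rhob), \rhob^* - \rhob\rangle\bigr]$.

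The main bookkeeping obstacle will be keeping the dimensionalities straight: $\Psi$ is defined on the affine slice $\{1\} \times \Real_+^M$, whereas $f^{\lambdab}$ is differentiated as a function on the $(M+1)$-dimensional simplex, so one must be precise about the index range when forming inner products against $\nabla \Psi(\rhob)$. Beyond this care, the argument is a pure identity between two Bregman objects connected by a perspective transformation, with no inequalities or limiting arguments to manage.
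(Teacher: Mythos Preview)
Your proposal is correct and matches the paper's intended derivation: the corollary is stated without explicit proof, but the surrounding structure makes clear it is meant to follow from Lemma~\ref{lem:proper_loss} (regret $=$ Bregman distortion plus linear correction), Theorem~\ref{thm:proper_loss} (propriety forces $\gb^{\lambdab}\equiv\zerob$, killing the correction), and the immediately preceding Proposition ($B_{f^{\lambdab}}(\etab^*,\etab)=-\eta_0^* B_{\Psi^{\lambdab}}(\rhob^*,\rhob)$), which is exactly the chain you assemble. Your sketch of the Proposition via direct differentiation of $f^{\lambdab}(\etab)=-\eta_0\Psi^{\lambdab}(\rhob)$ is also the natural route, and your caveat about tracking indices on the affine slice $\{1\}\times\Real_+^M$ is the only real bookkeeping hazard.
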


In other words, it shows that a proper loss function $\lambdab$ acts only through the form of the Bregman divergence $B_{\Psi^{\lambdab}}(\cdot,\cdot)$.
In other words, $\lambdab$ and $\lambdab'$ are equivalent CPE loss functions if $B_{\Psi^{\lambdab}}(\cdot,\cdot) \equiv B_{\Psi^{\lambdab'}}(\cdot,\cdot)$.
This defines an equivalence class in the set of loss functions 
\[
\Lambda(\Psi)\defeq\{\lambdab\suchthat \Delta(\Zc) \to\Real^{\Zc} | B_{\Psi^{\lambdab}}(\cdot,\cdot)\equiv B_{\Psi}(\cdot,\cdot)\}.
\]
We know that this set is always not empty, since Proposition~\ref{prop:psi_to_lamb_to_psi} implies that
\[
\lambdab^{\Psi}\in \Lambda(\Psi).
\]
Consider a subset
\[
\Lambda_o(\Psi)\defeq\{\lambdab\suchthat \Delta(\Zc) \to\Real^{\Zc} | \lambdab\in\Lambda(\Psi), \gb^{\lambdab}(\etab)\equiv \zerob\}.
\]
The loss functions in this subset can be thought as \emph{canonical} functions, as we require $\gb^{\lambdab}(\etab)\equiv \zerob$ to check propriety in Theorem~\ref{thm:proper_loss}.
Note that  
\[
\lambdab^{\Psi}\in \Lambda_o(\Psi),
\]
since Lemma~\ref{lem:gradient_induced_loss} establishes that $\gb^{\lambdab^{\Psi}}(\etab)\equiv \zerob$.
A small open question is whether $\lambdab^{\Psi}$ is an unique element of $\Lambda_o(\Psi)$.

\begin{remark}[Properization]
We remark that for any $\lambdab\in\Lambda(\Psi)$, we can map it to another element $\lambdab'\in\Lambda_o(\Psi)$, by defining it as
\[
\lambdab'(\etab) \defeq \lambdab(\etab) + \gb^{\lambdab}(\etab) - \langle \etab, \gb^{\lambdab}(\etab)\rangle \ones.
\]
It is easy to check that $\lambdab'\in \Lambda_o(\Psi)$ indeed.
One can think of this as a \emph{properization} of a loss function $\lambdab$, since for a convex function $\Psi$, any loss function $\lambdab\in\Lambda(\Psi)$ can be made into a proper loss $\lambdab'\in\Lambda_o(\Psi)$.
\end{remark}

\subsection{Examples of Proper Scoring Rules}
\label{app:sec:examples_scores}

We first start with proper \emph{binary} scoring rules; see Table~\ref{tab:binary_scoring_rules}. Most of the examples can be found from \citep{Gneiting--Raftery2007}. We refer to rules generated from the $\Psi$-induced scoring rules (Eq.~\eqref{eq:def_induced_scoring_rule}) by \emph{asymmetric} scoring rules, and the $\Phi$-induced rules (Eq.~\eqref{eq:def_induced_scoring_rule_phi}) by \emph{symmetric} rules.
\begin{table}[ht]
    \centering
    \caption{Examples of strictly proper binary scoring rules. 
    }%
    \small
    \begin{tabular}{l l l l l}
        \toprule
        Asymmetric scoring rule & $\Psi(\rho)$ 
        & \makecell[l]{ $\lambda_0^{\Psi}(\etab),\lambda_1^{\Psi}(\etab)$ (see Eq.~\ref{eq:def_induced_scoring_rule})} 
        \\       
        \midrule
        \makecell[l]{KLIEP~\citep{Sugiyama--Suzuki--Nakajima--Kashima--von-Bunau--Kawanabe2008kliep}
        } 
        & $\rho\log\rho$
        & $\frac{1}{\eta_0},
        -\log\frac{\eta_1}{\eta_0}$
        \\
        \makecell[l]{Robust DRE~($\a\notin\{0,1\}$)\\
        ~~~\citep{Sugiyama--Suzuki--Kanamori2012}} 
        & $\frac{\rho^\a}{\a(\a-1)}$ or $\frac{\rho^\a-\rho}{\a(\a-1)}$
        & $\frac{1}{\a}\frac{\eta_0^\a+\eta_1^\a}{\eta_0^\a} +\frac{1}{\a(\a-1)},
        \frac{1}{1-\a}(\frac{\eta_1}{\eta_0})^{\a-1}$
        \\
        Inverse log 
        & $-\log\rho$
        & $\log\frac{\eta_1}{\eta_0}-1,
        \frac{\eta_0}{\eta_1}$
        \\
        \toprule
        Symmetric scoring rule & $\Phi(\eta_0,\eta_1)$ 
        & \makecell[l]{ $\lambda_z^{\Psi_\Phi}(\etab)$ (see Eq.~\ref{eq:def_induced_scoring_rule_phi})} 
        \\
        \midrule
        Log~\citep{Good1952} 
        & $\eta_0\log\eta_0+\eta_1\log\eta_1$
        & $-\log \eta_z$
        \\
        \makecell[l]{Power ($\a\notin\{0,1\}$)\tablefootnote{If $\a=2$, 
        famously known as the Brier score~\citep{Brier1950,Gneiting--Raftery2007}.}} 
        & $\frac{\eta_0^\a+\eta_1^\a-1}{\a(\a-1)}$
        & \makecell[l]{$\frac{\eta_0^\a+\eta_1^\a}{\a}-\frac{\eta_z^{\a-1}}{\a-1}$} 
        \\
        Sym. inverse log 
        & $-\log \eta_0-\log\eta_1$
        & $\log\eta_0+\log\eta_1 + \frac{1}{\eta_z}$ 
        \\
        \makecell[l]{Pseudo-spherical $(\a\notin\{0,1\})$\\~~~\citep{Gneiting--Raftery2007}\tablefootnote{Called the spherical score when $\a=2$~\citep{Gneiting--Raftery2007}. When $\a\to1$, boils down to the log score. 
        }}
        & $\frac{1}{\a-1}(\frac{\eta_0^\a+\eta_1^\a}{2})^{\frac{1}{\a}}$ 
        & $-\frac{2^{-\frac{1}{\a}}}{\a-1}(\frac{\eta_z}{(\eta_0^\a+\eta_1^\a)^{\frac{1}{\a}}})^{\a-1}$
        \\
        \bottomrule
    \end{tabular}
\label{tab:binary_scoring_rules}
\end{table}

Now, by naturally extending the definition of the elementary generating functions for the binary scoring rules, we can derive their multi-ary counterparts as shown in Table~\ref{tab:sym_scoring_rules}. 
We note that the multi-ary asymmetric scoring rules, when considered with our binary density ratio estimation framework below, boil down to the ones induced by the binary scoring rules. Therefore, since the nontrivial examples are from extending the symmetric scoring rules, we omit the multiary version of asymmetric rules.
\begin{table}[ht]
    \centering
    \caption{Examples of \emph{symmetric} strictly proper $(M+1)$-ary scores. 
    }%
    \small
    \begin{tabular}{l l l m{3.5cm} l}
        \toprule
        Symmetric scoring rule & $\Phi(\etab)$ 
        & \makecell[l]{ $\lambda_z^{\Psi_\Phi}(\etab)$ (see Eq.~\ref{eq:def_induced_scoring_rule_phi})} 
        & Known as 
        \\
        \midrule
        Log
        & $\langle \etab, \log {\etab}\rangle$
        & $-\log \eta_z$
        \\
        \makecell[l]{Power ($\a\notin\{0,1\}$)} 
        & $\frac{\|\etab\|_{\a}^{\a}}{\a(\a-1)}$ or $\frac{\|\etab\|_{\a}^{\a}-1}{\a(\a-1)}$
        & \makecell[l]{$\frac{\|\etab\|_\a^\a}{\a}-\frac{\eta_z^{\a-1}}{\a-1}$} 
        & \makecell[l]{Tsallis scoring rule\\\citep{Dawid--Musio2014}}.
        \\
        Sym. inverse log 
        & $-\sum_{z=0}^M \log \eta_z$
        & $\sum_{z=0}^M\log\eta_z + \frac{1}{\eta_z}$ 
        \\
        \makecell[l]{Spherical $(\a\notin\{0,1\})$}
        & $\frac{(M+1)^{-\frac{1}{\a}}}{\a-1}\|\etab\|_{\a}$ 
        & $-\frac{(M+1)^{-\frac{1}{\a}}}{\a-1}(\frac{\eta_z}{\|\etab\|_{\a}})^{\a-1}$
        \\
        \bottomrule
    \end{tabular}
\label{tab:sym_scoring_rules}
\end{table}

\section{Details on Extensions with Proper Scoring Rules}
\label{app:sec:scoring_rules}

In this section, we provide technical materials deferred from Section~\ref{sec:extensions} on the extensions with proper scoring rules. 

\subsection{Alternative Characterization of Proper Scoring Rule}
\label{app:sec:alternative_characterization_proper_scoring_rules}
An alternative, yet equivalent representation of a proper scoring rule is based on a convex function $\Phi(\etab)$ over $\etab\in \Delta([0:M])$.
One can induce a convex function $\Psi(\rhob)$ from a convex function $\Phi(\etab)$ by the perspective transformation:
\begin{align*}
\Psi_{\Phi}(\rhob)
\defeq (1+\rho_1+\ldots+\rho_M)
\Phi\biggl(\frac{[1;\rhob]}{1+\rho_1+\ldots+\rho_M}\biggr).
\end{align*}
\begin{theorem}
\label{thm:bregman_phi}
Given a differentiable function $\Phi\suchthat\Delta([0:M])\to\Real$, 
\[
\lambdab^{\Psi_{\Phi}}(\etab)
=\Bigl(\langle \etab, \nabla_{\etab}\Phi(\etab)\rangle-\Phi(\etab)\Bigr)\boldsymbol{1} -\nabla_{\etab}\Phi(\etab).
\numberthis
\label{eq:def_induced_scoring_rule_phi}
\]
\end{theorem}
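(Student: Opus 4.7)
The plan is to compute the two blocks of $\lambdab^{\Psi_\Phi}(\etab)$ directly from the definition in Eq.~\eqref{eq:def_induced_scoring_rule}, applying the chain rule to the perspective $\Psi_\Phi$. Throughout, write $s \defeq 1+\rho_1+\cdots+\rho_M$, so that $\Psi_\Phi(\rhob) = s\,\Phi(\etab(\rhob))$ with $\etab(\rhob) \defeq s^{-1}[1;\rhob]$. Under the substitution $\rhob = (\eta_1/\eta_0,\ldots,\eta_M/\eta_0)$ prescribed in Eq.~\eqref{eq:def_induced_scoring_rule}, one has $s = 1/\eta_0$ and $\etab(\rhob) = \etab$, which will be used to express everything in terms of $\etab$ at the end.

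First I would compute $\partial_{\rho_j}\Psi_\Phi$ for $j\in[M]$. From $\partial_{\rho_j}\eta_0 = -s^{-2}$ and $\partial_{\rho_j}\eta_i = \delta_{ij}s^{-1} - \rho_i s^{-2}$, the chain rule yields
\[
\partial_{\rho_j}\Psi_\Phi = \Phi(\etab) + \partial_j \Phi(\etab) - \langle \etab, \nabla_{\etab}\Phi(\etab)\rangle.
\]
Hence the bottom block of Eq.~\eqref{eq:def_induced_scoring_rule} reads $(-\nabla_\rhob\Psi_\Phi)_j = (\langle\etab,\nabla_\etab\Phi\rangle - \Phi(\etab)) - \partial_j\Phi(\etab)$, which matches entries $j=1,\ldots,M$ of the claimed formula.

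Next, for the top entry I need $\langle\rhob,\nabla_\rhob\Psi_\Phi\rangle-\Psi_\Phi$. Plugging in the previous display and using $\Psi_\Phi = s\Phi(\etab)$ together with the identity
\[
\sum_{j=1}^M \rho_j\,\partial_j \Phi(\etab) \;=\; s\langle\etab,\nabla_\etab\Phi(\etab)\rangle - \partial_0\Phi(\etab),
\]
which follows from $\rho_j = \eta_j/\eta_0$ and $\sum_{z=0}^M \eta_z = 1$, the whole expression collapses to $(\langle\etab,\nabla_\etab\Phi\rangle - \Phi(\etab)) - \partial_0\Phi(\etab)$. This is the zeroth entry of Eq.~\eqref{eq:def_induced_scoring_rule_phi}, completing the verification.

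The calculation is essentially bookkeeping with the chain rule; the only mild subtlety is the asymmetric treatment of the index $0$ (which is fixed to $1$ by the perspective) versus the indices $j\in[M]$, and the identity above is what restores symmetry across all $M+1$ coordinates in the final formula. As a sanity check, the resulting formula is manifestly invariant under $\nabla_\etab \Phi \mapsto \nabla_\etab \Phi + c\,\onev$, since $\sum_z \eta_z = 1$, which is consistent with $\Phi$ being defined only on the simplex $\Delta([0{:}M])$ and the gradient being determined only up to a multiple of $\onev$.
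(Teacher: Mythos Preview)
Your computation is correct: the chain rule applied to the perspective $\Psi_\Phi(\rhob)=s\,\Phi(\etab(\rhob))$ yields exactly the claimed formula, and the identity $\sum_{j=1}^M \rho_j\,\partial_j\Phi = s\langle\etab,\nabla_\etab\Phi\rangle - \partial_0\Phi$ is the right device to unify the $z=0$ entry with the rest.

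The paper's proof block, however, does not actually derive Eq.~\eqref{eq:def_induced_scoring_rule_phi} from Eq.~\eqref{eq:def_induced_scoring_rule}. Instead it \emph{assumes} the formula (calling it ``the definition of the $\Psi_\Phi$-induced scoring rule'') and uses it to compute $\langle\etabref,\lambdab^{\Psi_\Phi}(\etab_\th)\rangle = -\Phi(\etab_\th)-\langle\nabla_\etab\Phi(\etab_\th),\etabref-\etab_\th\rangle$, thereby establishing the regret identity $\Lc_{K;\nu}^{\Psi_\Phi}(\etab_\th)-\Lc_{K;\nu}^{\Psi_\Phi}(\etabref)=\E[B_\Phi(\etabref,\etab_\th)]$. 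In other words, the paper's argument is really a proof of the subsequent Bregman-divergence theorem, with Theorem~\ref{thm:bregman_phi} left as an unproved computation. Your direct chain-rule derivation therefore supplies exactly the step the paper elides; what the paper's argument buys in return is the immediate link between the $\Phi$-parameterization and Bregman regret, which is the operational content one ultimately wants.
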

\begin{proof}%
First, we can write
\begin{align*}
\Lc_{K;\nu}^{\Psi_\Phi}(\etab_\th)-\Lc_{K;\nu}^{\Psi_\Phi}(\etabref)
&= 
\E_{p(x_{1:K})}\Bigl[
\langle \etabref(x_{1:K}),\lambdab^{\Psi_\Phi}(\etab_\th(x_{1:K}))\rangle
-\langle \etabref(x_{1:K}),\lambdab^{\Psi_\Phi}(\etabref(x_{1:K}))\rangle
\Bigr].
\end{align*}
It is easy to check, from the definition of the $\Psi_\Phi$-induced scoring rule in Eq.~\eqref{eq:def_induced_scoring_rule_phi}, 
\begin{align*}
\langle \etabref,\lambdab^{\Psi_\Phi}(\etab_\th)\rangle
&= 
-\Phi(\etab_\th) - \langle \nabla_{\etab}\Phi(\etab_\th),\etab^*-\etab_\th\rangle.
\end{align*}
In particular, 
\begin{align*}
\langle \etabref,\lambdab^{\Psi_\Phi}(\etabref)\rangle
&= -\Phi(\etabref).
\end{align*}
Hence, we have 
\begin{align*}
\langle \etabref,\lambdab^{\Psi_\Phi}(\etab_\th)\rangle
-\langle \etabref,\lambdab^{\Psi_\Phi}(\etabref)\rangle
&= B_{\Phi}(\etabref,\etab_\th).\qedhere
\end{align*}    
\end{proof}
See the proof of Theorem~\ref{thm:bregman} in Appendix~\ref{app:sec:proofs} for a comparison.

We remark that, for a (strictly) convex function $\Phi$, $\Psi$ is (strictly) convex since the perspective transformation preserves the convexity, and thus 
\begin{align*}
\langle \etab^*, \lambdab^{\Psi_{\Phi}}(\etab)\rangle 
\ge \langle \etab^*, \lambdab^{\Psi_{\Phi}}(\etab^*)\rangle 
= - \Phi(\etab^*).
\end{align*}
We note that the right hand side is the \emph{Bayes optimal risk}. 
In other words, a convex function $\Phi(\cdot)$ can characterize a proper scoring rule as its (negative) Bayes-optimal risk.

\begin{theorem}
For $\nu\ge 0$,
\begin{align*}
\Lc_{K;\nu}^{\Psi_\Phi}(\etab_\th)-\Lc_{K;\nu}^{\Psi_\Phi}(\etabref)
&= \E_{p(x_{1:K})}\Bigl[B_{\Phi}\bigl(\etab^*(x_{1:K}), \etab_\th(x_{1:K})\bigr)\Bigr],
\end{align*}
For $\nu\ge0$ and a convex function $\Phi$, we have
\begin{align*}
-\Lc_{K;\nu}^{\Psi_\Phi}(\etab_\th)
\le -\Lc_{K;\nu}^{\Psi_\Phi}(\etabref)
&= \E_{p(x_{1:K})}\Bigl[\Phi\bigl(\etab^*(x_{1:K})\bigr)\Bigr].
\end{align*}
If $\nu>0$, for a strictly convex function $\Phi$, the equality is achieved if and only if $r_\th(x)=\frac{\pdata(x)}{\pnoise(x)}$.
\end{theorem}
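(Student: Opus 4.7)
The plan is to reduce the statement to the pointwise identity already established inside the proof of Theorem~\ref{thm:bregman_phi}, after which the two equalities come for free and only the identifiability step in the equality condition requires some care. The first move will be to rewrite the expected loss using the tower property:
\begin{align*}
\Lc_{K;\nu}^{\Psi_\Phi}(\etab_\th)
= \E_{p(z)p(x_{1:K}|z)}[\lambda_z^{\Psi_\Phi}(\etab_\th(x_{1:K}))]
= \E_{p(x_{1:K})}\bigl[\langle \etabref(x_{1:K}),\lambdab^{\Psi_\Phi}(\etab_\th(x_{1:K}))\rangle\bigr],
\end{align*}
where $\etabref(x_{1:K}) = (p(z|x_{1:K}))_z$ is the true posterior from Eq.~\eqref{eq:def_true_cp}. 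This conditional-expectation rewriting is agnostic to the value of $\nu$, and therefore treats $\nu=0$ and $\nu>0$ uniformly (which is precisely the benefit of the $\Phi$-parametrization over the $\Psi$-parametrization of Theorem~\ref{thm:bregman}).

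Substituting into the pointwise identity $\langle \etabref,\lambdab^{\Psi_\Phi}(\etab_\th)\rangle - \langle \etabref,\lambdab^{\Psi_\Phi}(\etabref)\rangle = B_\Phi(\etabref,\etab_\th)$ established en route to Theorem~\ref{thm:bregman_phi}, and integrating in $p(x_{1:K})$, immediately yields the first claim. For the second claim, the same proof already computed $\langle\etabref,\lambdab^{\Psi_\Phi}(\etabref)\rangle = -\Phi(\etabref)$, so $\Lc_{K;\nu}^{\Psi_\Phi}(\etabref) = -\E_{p(x_{1:K})}[\Phi(\etabref(x_{1:K}))]$, i.e., the Bayes-optimal value. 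Convexity of $\Phi$ gives $B_\Phi \ge 0$ pointwise, which then delivers the stated inequality $-\Lc_{K;\nu}^{\Psi_\Phi}(\etab_\th) \le -\Lc_{K;\nu}^{\Psi_\Phi}(\etabref)$.

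The only nontrivial piece will be the equality condition. Assuming $\nu>0$ and $\Phi$ strictly convex, $\E[B_\Phi(\etabref,\etab_\th)]=0$ is equivalent to $\etab_\th(x_{1:K}) = \etabref(x_{1:K})$ for $p(x_{1:K})$-almost every $x_{1:K}$. The main step is to translate pointwise equality of these $(K+1)$-dimensional posterior vectors back into pointwise equality of scalar density ratios: comparing the $z=0$ coordinates of Eq.~\eqref{eq:def_true_cp} and Eq.~\eqref{eq:def_model_cp} forces $\nu + \sum_i r_\th(x_i) = \nu + \sum_i \tfrac{\pdata(x_i)}{\pnoise(x_i)}$, and then matching each $z\in[K]$ coordinate yields $r_\th(x_z)=\tfrac{\pdata(x_z)}{\pnoise(x_z)}$ for $\pnoise$-a.e.\ $x_z$. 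The converse direction is immediate from plugging $r_\th=\pdata/\pnoise$ into Eq.~\eqref{eq:def_model_cp}.

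The main obstacle, though modest, will be justifying this identifiability step cleanly and exposing where $\nu>0$ is actually used: the anchor coordinate supplies the one extra scalar equation that pins down the otherwise free multiplicative constant, which is exactly the obstruction that Theorem~\ref{thm:fisher} flagged in the $\nu=0$ regime. Everything else is a direct assembly of the pointwise Bregman identity from Theorem~\ref{thm:bregman_phi} and the nonnegativity of $B_\Phi$.
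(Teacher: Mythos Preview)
Your proposal is correct and follows essentially the same route as the paper: the paper's argument (embedded in the proof labeled Theorem~\ref{thm:bregman_phi}) also reduces the expected-loss difference via the tower property to the pointwise identity $\langle \etabref,\lambdab^{\Psi_\Phi}(\etab_\th)\rangle-\langle \etabref,\lambdab^{\Psi_\Phi}(\etabref)\rangle=B_\Phi(\etabref,\etab_\th)$ and $\langle \etabref,\lambdab^{\Psi_\Phi}(\etabref)\rangle=-\Phi(\etabref)$, then invokes convexity. Your treatment is in fact slightly more complete, since you spell out the identifiability step for the equality condition (matching the $z=0$ anchor coordinate to pin down the normalization, then the $z\in[K]$ coordinates), which the paper leaves implicit.
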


If $\nu=0$, \ie if there is no anchor class 0, we can only estimate the density ratio up to a multiplicative constant, as the original InfoNCE guarantees.

\subsection{On Implementation}
\label{app:sec:bregman_implementation}
Here, we 
We say that a scoring rule $\lambdab$ is \emph{$\{y_1,y_2\}$-invariant} for $y_1\neq y_2\in \Yc$ if $\lambda_{y_1}(\etab)=\lambda_{y_2}(\etab')$ and $\lambda_{y_2}(\etab)=\lambda_{y_1}(\etab')$ for any $\etab,\etab'$ such that $\eta_y=\eta_{y}'$ for $y\not\in\{y_1,y_2\}$ and $\eta_{y_1}=\eta_{y_2}'$ and $\eta_{y_2}=\eta_{y_1}'$.
\begin{proposition}
If the scoring rule $\lambdab$ is $\{z_1,z_2\}$-invariant for any $\{z_1,z_2\}\subseteq \{1,2,\ldots,K\}$, 
we have
\begin{align*}
\Lc_{K;\nu}^{\Psi}(\etab_\th) 
&= \frac{K}{K+\nu}\E_{\pdata(x_1)\pnoise(x_2)\cdots \pnoise(x_K)}[\lambda_{1}^\Psi(\etab_\th(x_{1:K}))] \\
&\qquad + \frac{\nu}{K+\nu}\E_{\pnoise(x_1)\pnoise(x_2)\cdots \pnoise(x_K)}[\lambda_{0}^\Psi(\etab_\th(x_{1:K}))].
\numberthis\label{eq:def_proposed}
\end{align*} 
\end{proposition}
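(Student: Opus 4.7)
The plan is to write the loss out explicitly from its definition $\Lc_{K;\nu}^{\Psi}(\etab_\th) = \E_{p(x_{1:K},z)}[\lambda_z(\etab_\th(x_{1:K}))]$, split the sum over $z$ according to whether $z=0$ or $z \in [K]$, and then use the assumed pairwise symmetry to collapse the $K$ terms indexed by $z \in [K]$ into $K$ copies of the $z=1$ term. The prior is $p(z=0)=\nu/(K+\nu)$ and $p(z)=1/(K+\nu)$ for $z\in[K]$, so
\[
\Lc_{K;\nu}^{\Psi}(\etab_\th)
= \frac{\nu}{K+\nu}\,\E_{p(x_{1:K}|z=0)}[\lambda_0(\etab_\th(x_{1:K}))]
+ \frac{1}{K+\nu}\sum_{z=1}^K \E_{p(x_{1:K}|z)}[\lambda_z(\etab_\th(x_{1:K}))].
\]
The $z=0$ term has the desired form with distribution $\pnoise(x_1)\cdots\pnoise(x_K)$, so it remains to show each summand on the right is equal to the $z=1$ summand.

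The key step is a permutation argument. For $z \in \{2,\dots,K\}$, let $\sigma$ be the transposition swapping coordinates $1$ and $z$, and write $x_{1:K}^\sigma$ for the vector obtained by swapping $x_1$ and $x_z$. Two things happen under $\sigma$. First, at the level of the class-conditional, $p(x_{1:K}^\sigma | z) = p(x_{1:K} | 1)$ since $\pdata$ simply moves from position $z$ back to position $1$ while the remaining $\pnoise$'s are unaffected. Second, inspecting the explicit form of $\etab_\th$ in Eq.~\eqref{eq:def_model_cp}, the denominator $\nu + \sum_i r_\th(x_i)$ is symmetric in its arguments, so the components of $\etab_\th(x_{1:K}^\sigma)$ agree with $\etab_\th(x_{1:K})$ in the $0$-th coordinate and in every $j \notin \{1,z\}$, while the $1$-st and $z$-th coordinates are interchanged; i.e., $\etab_\th(x_{1:K}^\sigma)$ is exactly the vector obtained from $\etab_\th(x_{1:K})$ by swapping components $1$ and $z$.

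Combining these with $\{1,z\}$-invariance of $\lambdab$ gives
\[
\lambda_z(\etab_\th(x_{1:K})) \;=\; \lambda_1(\etab_\th(x_{1:K}^\sigma)),
\]
and then the change of variables $y_{1:K} \defeq x_{1:K}^\sigma$ yields
\[
\E_{p(x_{1:K}|z)}[\lambda_z(\etab_\th(x_{1:K}))]
= \E_{p(y_{1:K}|1)}[\lambda_1(\etab_\th(y_{1:K}))]
= \E_{\pdata(x_1)\pnoise(x_2)\cdots\pnoise(x_K)}[\lambda_1(\etab_\th(x_{1:K}))].
\]
Summing over $z \in [K]$ contributes a factor of $K$ and produces the coefficient $K/(K+\nu)$, completing the proof.

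The main subtlety (hardly an obstacle, but the one place care is needed) is verifying that $\etab_\th$ transforms under a coordinate swap of the \emph{input} exactly as a swap of the corresponding \emph{output} components; this is what lets us translate pairwise symmetry of $\lambdab$ as a function of the probability vector into a symmetry of the composite $\lambda_z \circ \etab_\th$ as a function of $x_{1:K}$. No assumption on $\Psi$ beyond what is already in force is needed, and the argument does not touch the $z=0$ term at all.
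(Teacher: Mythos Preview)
Your argument is correct: expanding over $z$, handling $z=0$ directly, and for $z\in[K]$ using the transposition $\sigma=(1\,z)$ together with the equivariance of $\etab_\th$ (Eq.~\eqref{eq:def_model_cp}) and the $\{1,z\}$-invariance of $\lambdab$ to reduce every term to the $z=1$ term is exactly the intended symmetry computation. The paper states this proposition without proof, so there is nothing further to compare; your write-up could serve as the omitted argument verbatim.
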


\subsection{Examples of InfoNCE-anchor-Type DRE Objectives}
\label{app:sec:examples_dre_objs}

Recall the examples of proper scoring rules in Appendix~\ref{app:sec:examples_scores}.
In Table~\ref{tab:objs_asym}, we first list the canonical consistent DRE objectives derived by asymmetric scoring rules (see Table~\ref{tab:binary_scoring_rules}). As noted earlier, the tensorization of InfoNCE-anchor does not have any effect with asymmetric scoring rules, and the objectives boil down to the standard binary DRE objectives.

\begin{table}[ht]
    \centering
    \caption{Examples of consistent DRE objectives derived from \emph{asymmetric} scoring rules (first half of Table~\ref{tab:binary_scoring_rules}). 
    Note that these objectives induced by asymmetric scoring rules do not depend on $K$ and $\nu$.}%
    \small
    \begin{tabular}{l l m{5.9cm} l l}
        \toprule
        Asym. scoring rule 
        & \makecell[l]{$\Lc_{K;\nu}^{\Psi}(\etab_\th)$ (see Eq.~\eqref{eq:def_proposed})} 
        & Known as
        \\       
        \midrule
        \makecell[l]{Log} 
        & $\E_{\pdata}[-\log r_\th(x)]+\E_{\pnoise}[r_\th(x)]$
        & KLIEP~\citep{Sugiyama--Suzuki--Nakajima--Kashima--von-Bunau--Kawanabe2008kliep} in DRE.
        NWJ~\citep{Ngyuen--Wainwright--Jordan2010} in MI estimation.
        \\\midrule
        \makecell[l]{Power $(\a\notin(0,1))$} 
        & $\E_{\pdata}[\frac{r_\th(x)^{\a-1}}{1-\a}]
        +\E_{\pnoise}[\frac{r_\th(x)^\a}{\a}]$
        & Robust DRE~\citep{Sugiyama--Suzuki--Kanamori2012}, KLIEP~\citep{Sugiyama--Suzuki--Nakajima--Kashima--von-Bunau--Kawanabe2008kliep} when $\a\to1$, LSIF~\citep{Kanamori--Hido--Sugiyama2009} when $\a=2$ in DRE.
        \\\hdashline
        \makecell[l]{(when $\a=2$)} 
        & $-\E_{\pdata}[r_\th(x)]
        +\half\E_{\pnoise}[r_\th(x)^2]$
        & %
        In MI estimation/DRE, known as $\chi^2$ or DRF~\citep{Tsai--Zhao--Yamada--Morency--Salakhutdinov2020}.
        In rep. learning, H-score~\citep{Wang--Wu--Huang--Zheng--Xu--Zhang--Huang2019}, %
        ~spectral contrastive loss~\citep{HaoChen--Wei--Gaidon--Ma2021}, CCA~\citep{Chapman--Wells--Aguila2023}, LoRA loss~\citep{Ryu--Xu--Erol--Bu--Zheng--Wornell2024}. %
        \\\midrule        
        Inverse log 
        & $\E_{\pdata}[\frac{1}{r_\th(x)}]+\E_{\pnoise}[\log r_\th(x)]$
        & 
        \\
    \bottomrule
    \end{tabular}
    \label{tab:objs_asym}
\end{table}
As noted in the last column of the table, these binary DRE objectives have been extensively used and studied in the various literature on DRE, MI estimation, and representation learning. We mention in passing that a recent paper~\citep{Ryu--Shah--Wornell2025}, building on noise-contrastive estimation~\citep{Gutmann--Hyvarinen2012}, revealed a connection between these rules and the maximum likelihood estimation principle.

In Table~\ref{tab:objs_sym_anchor}, we list the InfoNCE-anchor-type objectives based on the symmetric scoring rules (see Table~\ref{tab:sym_scoring_rules}).
Table~\ref{tab:objs_sym_noanchor} lists the corresponding InfoNCE-type objectives (i.e., without anchor). 
We remark that the Spherical objective in the main text corresponds to the last row in Table~\ref{tab:objs_sym_anchor}.

\begin{table}[ht]
    \centering
    \caption{Examples of InfoNCE-anchor-type DRE objectives ($\nu>0$), 
    derived from \emph{symmetric} scoring rules (Table~\ref{tab:sym_scoring_rules}). Here, $\rhob_\th(x_{1:K})\defeq [\nu,\rho_\th(x_1),\ldots,\rho_\th(x_K)]\in\Real_+^{K+1}$ and $\rho_\th(x)\defeq \frac{r_\th(x)}{\nu}$.
    The objective in the first row corresponds to our proposal InfoNCE-anchor. When $K=1,\nu=1$, it is also known as JS~\citep{Poole--Ozair--van-den-oord-Alemi--Tucker2019} or NT-Logistics~\citep{Chen--Kornblith--Norouzi--Hinton2020}.
    }%
    \small

    \begin{tabular}{l l l l l}
    \toprule
        Sym. scoring rule 
        & \makecell[l]{$\Lc_{K;\nu}^{\Psi_\Phi}(\etab_\th)$ (see Eq.~\eqref{eq:def_proposed})}  
        & 
        \\
        \midrule
        Log
        & \makecell[l]{$\frac{K}{K+\nu}\E_{\pdata(x_{1:K})}[-\log\frac{\rho_{\th}(x_1)}{\|\rhob_\th(x_{1:K})\|_1}] 
        +\frac{1}{K+\nu}\E_{\pnoise(x_{1:K})}[-\log\frac{\nu}{\|\rhob_\th(x_{1:K})\|_1}]$}
        & 
        \\\midrule
        \makecell[l]{Power\\~~~($\a\notin\{0,1\}$)} 
        & \makecell[l]{$\frac{K}{K+\nu}\E_{\pdata(x_{1:K})}[\frac{1}{\a}(\frac{\|\rhob_\th(x_{1:K})\|_\a}{\|\rhob_\th(x_{1:K})\|_1})^\a + \frac{1}{1-\a} (\frac{\rho_{\th}(x_1)}{\|\rhob_\th(x_{1:K})\|_1})^{\a-1}]$\\
        $~~~+ \frac{\nu}{K+\nu} 
        \E_{\pnoise(x_{1:K})}[\frac{1}{\a}(\frac{\|\rhob_\th(x_{1:K})\|_\a}{\|\rhob_\th(x_{1:K})\|_1})^\a + \frac{1}{1-\a} (\frac{1}{\|\rhob_\th(x_{1:K})\|_1})^{\a-1}]$} 
        \\\midrule
        Sym. inverse log 
        & \makecell[l]{$\frac{K}{K+\nu}\E_{\pdata(x_{1:K})}[\frac{\|\log \rhob_\th(x_{1:K})\|}{\|\rhob_\th(x_{1:K})\|_1}
        +\frac{\|\rhob_\th(x_{1:K})\|_1}{\rho_\th(x_1)}
        ]
        $\\$~~~
        + \frac{\nu}{K+\nu} 
        \E_{\pnoise(x_{1:K})}[\frac{\log \prod\rhob_\th(x_{1:K})}{\|\rhob_\th(x_{1:K})\|_1}
        +\|\rhob_\th(x_{1:K})\|_1
        ]$} 
        \\\midrule
        \makecell[l]{Pseudo-spherical\\~~~$(\a\notin\{0,1\})$}%
        & \makecell[l]{$\frac{K}{K+\nu}\E_{\pdata(x_{1:K})}[
        (\frac{\rho_\th(x_1)}{\|\rhob_\th(x_{1:K})\|_\a})^{\a-1}
        ]
        + \frac{\nu}{K+\nu} 
        \E_{\pnoise(x_{1:K})}[
        (\frac{1}{\|\rhob_\th(x_{1:K})\|_\a})^{\a-1}
        ]$} 
        \\\hdashline
        ~~~($\a=2$) 
        & \makecell[l]{$\frac{K}{K+\nu}\E_{\pdata(x_{1:K})}[
        \frac{\rho_\th(x_1)}{\|\rhob_\th(x_{1:K})\|_2}
        ]
        + \frac{\nu}{K+\nu} 
        \E_{\pnoise(x_{1:K})}[
        \frac{1}{\|\rhob_\th(x_{1:K})\|_2}
        ]$} 
        \\
        \bottomrule
    \end{tabular}
\label{tab:objs_sym_anchor}
\end{table}

\begin{table}[ht]
    \centering
    \caption{Examples of InfoNCE-type DRE objectives, derived from \emph{symmetric} scoring rules (Table~\ref{tab:sym_scoring_rules}).}%
    \small

    \begin{tabular}{l l m{3cm} l l}
    \toprule
        Sym. scoring rule 
        & \makecell[l]{$\Lc_{K;0}^{\Psi_\Phi}(\etab_\th)$ (see Eq.~\eqref{eq:def_proposed})}  
        & Known as
        \\
        \midrule
        Log 
        & \makecell[l]{$\E_{\pdata(x_{1:K})}[-\log\frac{r_{\th}(x_1)}{\|\rb_\th(x_{1:K})\|_1}] 
        $}
        & InfoNCE~\citep{van-den-Oord--Li--Vinyals2018}/NT-Xent~\citep{Chen--Kornblith--Norouzi--Hinton2020}\\
        \midrule
        \makecell[l]{Power\\~~~($\a\notin\{0,1\}$)} 
        & \makecell[l]{$\E_{\pdata(x_{1:K})}[\frac{1}{\a}(\frac{\|\rb_\th(x_{1:K})\|_\a}{\|\rb_\th(x_{1:K})\|_1})^\a + \frac{1}{1-\a} (\frac{r_{\th}(x_1)}{\|\rb_\th(x_{1:K})\|_1})^{\a-1}]$} 
        \\
        \midrule
        Sym. inverse log 
        & \makecell[l]{$\E_{\pdata(x_{1:K})}[\frac{\log\prod \rb_\th(x_{1:K})}{\|\rb_\th(x_{1:K})\|_1}
        +\frac{\|\rb_\th(x_{1:K})\|_1}{r_\th(x_1)}
        ]$} 
        \\
        \midrule
        \makecell[l]{Pseudo-spherical\\~~~$(\a\notin\{0,1\})$}
        & \makecell[l]{$\E_{\pdata(x_{1:K})}[
        (\frac{r_\th(x_1)}{\|\rb_\th(x_{1:K})\|_\a})^{\a-1}
        ]$} 
        \\
        \hdashline
        ~~~($\a=2$) 
        & \makecell[l]{$\E_{\pdata(x_{1:K})}[
        \frac{r_\th(x_1)}{\|\rb_\th(x_{1:K})\|_2}
        ]
        $} 
        \\
        \bottomrule
    \end{tabular}
\label{tab:objs_sym_noanchor}
\end{table}

\section{Deferred Proofs}
\label{app:sec:proofs}

\subsection{Proof of Proposition~\ref{prop:cpc_loose}}
\label{app:proof:prop:cpc_loose}
\begin{proof}[Proof of Proposition~\ref{prop:cpc_loose}]
We have an alternative proof for a loose upper bound  
\[
-\Lc_{K;0}(\th)+\log K \le D(\pdata~\|~\pnoise).
\]
We first consider the NWJ variational lower bound of the KL divergence:
\[
D(\pdata~\|~\pnoise) \ge \E_\pdata[\log r]-\E_\pnoise[r] + 1.
\]
Here the equality holds if and only if $r(x)\equiv \frac{\pdata(x)}{\pnoise(x)}$.
For $K\ge 2$, consider two distributions $\pdata(x_1)\pnoise(x_2)\cdots \pnoise(x_K)$ and $\pnoise(x_1)\pnoise(x_2)\cdots \pnoise(x_K)$.
Applying the NWJ lower bound, we obtain 
\begin{align*}
&D(\pdata(x)~\|~\pnoise(x))\\
&= D(\pdata(x_1)\pnoise(x_2)\cdots \pnoise(x_K)~\|~\pnoise(x_1)\pnoise(x_2)\cdots \pnoise(x_K))\\
&\ge \E_{\pdata(x_1)\pnoise(x_2)\cdots \pnoise(x_K)}[\log r(x_1,\ldots,x_K)] - \E_{\pnoise(x_1)\pnoise(x_2)\cdots \pnoise(x_K)}[r(x_1,\ldots,x_K)] + 1.
\end{align*}
Note that, again, the equality is attained if and only if 
\[
r(x_1,\ldots,x_K)\equiv \frac{\pdata(x_1)}{\pnoise(x_1)}.
\]
Now, we consider a specific (suboptimal) parameterization of $r(x_1,\ldots,x_K)$ in the following form:
\[
r(x_1,\ldots,x_K)\gets \log\frac{r_\th(x_1)}{\frac{1}{K}\sum_{k=1}^K r_\th(x_k)}
\]
for some nonnegative-valued function $r_\th\suchthat \Xc\to\Real_{\ge 0}$. By symmetry, it is easy to show that
\[
\E_{\pnoise(x_1)\pnoise(x_2)\cdots \pnoise(x_K)}[r(x_1,\ldots,x_K)]=1.
\]
Hence, the NWJ lower bound simplifies to 
\begin{align}
D(\pdata(x)~\|~\pnoise(x)) \ge \E_{\pdata(x_1)\pnoise(x_2)\cdots \pnoise(x_K)}\Biggl[\log \frac{r_\th(x_1)}{\frac{1}{K}\sum_{k=1}^K r_\th(x_k)}\Biggr]
=-\Lc_{K;0}(\th),
\end{align}
which concludes the proof.
\end{proof}

\subsection{Proof of Theorem~\ref{thm:cpc}}

\begin{proof}[Proof of Theorem~\ref{thm:cpc}]
We start with the following upper bound
\begin{align*}
-\Lc_{K;\nu}(\th)
&=\E_{p(z)p(x_{1:K}|z)}[\log {p_\th(z|x_{1:K})}]\\
&\le\E_{p(z)p(x_{1:K}|z)}[\log {p(z|x_{1:K})}],
\end{align*}
where the upper bound is achieved when $p_\th(z|x_{1:K})=p(z|x_{1:K})$.
This is by the Gibbs inequality, or equivalently
\begin{align*}
\E_{p(x_{1:K})}\Bigl[D({p(z|x_{1:K})}~\|~{p_\th(z|x_{1:K})})\Bigr]\ge 0.
\end{align*}
We note that for $\nu=0$,
we have
\begin{align}
-\Lc_{K;0}(\th)+\log K
&\le\E_{p(z)p(x_{1:K}|z)}[\log {p(z|x_{1:K})}]+\log K\nonumber\\
&=\E_{\pdata(x_1)\pnoise(x_2)\cdots\pnoise(x_K)}\Biggl[\log\frac{\frac{\pdata(x_1)}{\pnoise(x_1)}}{\frac{1}{K}\sum_{z=1}^K \frac{\pdata(x_z)}{\pnoise(x_z)}}
\Biggr]\label{eq:kjsd}\\
&= D_{\mathsf{JS}}\Bigl(p(x_{1:K}|z=1),\ldots,p(x_{1:K}|z=K)\Bigr).\nonumber
\end{align}
The equality condition follows from the Gibbs inequality.
This proves the first inequality.

To prove the upper bound $\log K$,
continuing from Eq.~\eqref{eq:kjsd}, we have
\begin{align*}
-\Lc_{K;0}(\th)+\log K
&\le \E_{\pdata(x_1)\pnoise(x_2)\cdots\pnoise(x_K)}\Biggl[\log\frac{\frac{\pdata(x_1)}{\pnoise(x_1)}}{\frac{1}{K}\sum_{z=1}^K \frac{\pdata(x_z)}{\pnoise(x_z)}}
\Biggr]\\
&\le\E_{\pdata(x_1)\pnoise(x_2)\cdots\pnoise(x_K)}\Biggl[\log\frac{\frac{\pdata(x_1)}{\pnoise(x_1)}}{\frac{1}{K} \frac{\pdata(x_1)}{\pnoise(x_1)}}
\Biggr]=\log K.
\end{align*}

For the second upper bound, 
we apply Jensen's inequality with the concavity of the logarithmic function and obtain
\begin{align*}
\E_{\pdata(x_1)\pnoise(x_2)\cdots\pnoise(x_K)}\Biggl[\log{\frac{1}{K}\sum_{z=1}^K \frac{\pdata(x_z)}{\pnoise(x_z)}}
\Biggr]
&\ge \log\Biggl(\E_{\pdata(x_1)\pnoise(x_2)\cdots\pnoise(x_K)}\Biggl[{\frac{1}{K}\sum_{z=1}^K \frac{\pdata(x_z)}{\pnoise(x_z)}}
\Biggr]\Biggr)\\
&=
\log \Bigl(\frac{1}{K} \chi^2(\pdata~\|~\pnoise)+1\Bigr)\\
&\ge
\log \Bigl(\frac{1}{K} (e^{D(\pdata~\|~\pnoise)}-1)+1\Bigr).
\end{align*}
Here, $\chi^2(p~\|~q)\defeq \E_{p}[\frac{p}{q}]-1$ denotes the \emph{chi-squared divergence} between distributions $p$ and $q$.
The last inequality follows since $\chi^2(\pdata~\|~\pnoise)\ge e^{D(\pdata~\|~\pnoise)}-1$.
Rearranging the inequality proves the desired bound.
\end{proof}

\subsection{Proof of Proposition~\ref{prop:induced_loss}}

To prove this proposition, we need the following lemma. The definition of the generating function $\gv^{\lambdab}$ of a differentiable loss function $\lambdab$ is given in Eq.~\eqref{eq:def_generating_function} in Appendix~\ref{app:sec:scoring_rules_prelim}.
Recall that the definition of the induced loss function $\lambda^{\Psi}$ for a convex function $\Psi$ is in Eq.~\ref{eq:def_induced_scoring_rule}.
\begin{lemma}
\label{lem:gradient_induced_loss}
If $\Psi$ is twice differentiable,
$\gb^{\lambdab^\Psi}(\etab)\equiv 0$.
\end{lemma}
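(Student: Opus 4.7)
My plan is to prove $\gb^{\lambdab^\Psi}(\etab)\equiv \mathbf{0}$ by a direct computation, exploiting the fact that $\sum_{z\in\Zc}\eta_z \lambda_z^\Psi(\etab)$ collapses to a very simple expression.

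First, I will recall from Eq.~\eqref{eq:def_generating_function} that the $i$-th coordinate of $\gb^{\lambdab^\Psi}(\etab)$ equals $\sum_{z\in\Zc} \eta_z \,\partial_{\eta_i}\lambda_z^\Psi(\etab)$, where the partial derivatives are taken with respect to the ambient $\etab \in \{1\}\times\Real_+^M$-coordinates. Using the product rule in reverse, this is equal to $\partial_{\eta_i} A(\etab) - \lambda_i^\Psi(\etab)$, where $A(\etab)\defeq \sum_{z\in\Zc}\eta_z\lambda_z^\Psi(\etab) = f^{\lambdab^\Psi}(\etab)$. Hence the claim $\gb^{\lambdab^\Psi}(\etab)\equiv \mathbf{0}$ reduces to verifying the identity $\partial_{\eta_i}A(\etab)=\lambda_i^\Psi(\etab)$ for every $i\in\Zc$.

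Next, I compute $A(\etab)$ directly from the formula for $\lambdab^\Psi$ in Eq.~\eqref{eq:def_induced_scoring_rule}. Writing $\rhob=(1,\eta_1/\eta_0,\ldots,\eta_M/\eta_0)$, I have $\lambda_0^\Psi(\etab)=\sum_{z=1}^M \rho_z\,\partial_z\Psi(\rhob)-\Psi(\rhob)$ and $\lambda_z^\Psi(\etab)=-\partial_z\Psi(\rhob)$ for $z\in[M]$. A direct expansion gives
\begin{align*}
A(\etab)
&= \eta_0\Bigl(\sum_{z=1}^M \rho_z\,\partial_z\Psi(\rhob)-\Psi(\rhob)\Bigr)-\sum_{z=1}^M \eta_z\,\partial_z\Psi(\rhob)\\
&= \sum_{z=1}^M \eta_z\,\partial_z\Psi(\rhob) -\eta_0\Psi(\rhob)-\sum_{z=1}^M \eta_z\,\partial_z\Psi(\rhob)
= -\eta_0\Psi(\rhob),
\end{align*}
recovering the expression $f^{\lambdab^\Psi}(\etab)=-\eta_0\Psi(\rhob)$ mentioned in the appendix.

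Finally, I differentiate $A(\etab)=-\eta_0\Psi(\rhob)$ using the chain rule together with $\partial\rho_j/\partial\eta_i=\delta_{ij}/\eta_0$ for $i,j\in[M]$ and $\partial\rho_j/\partial\eta_0=-\rho_j/\eta_0$. For $i\in[M]$ this yields $\partial_{\eta_i}A=-\eta_0\cdot(1/\eta_0)\,\partial_i\Psi(\rhob)=-\partial_i\Psi(\rhob)=\lambda_i^\Psi(\etab)$, and for $i=0$ it yields $\partial_{\eta_0}A=-\Psi(\rhob)+\sum_{z=1}^M \rho_z\,\partial_z\Psi(\rhob)=\lambda_0^\Psi(\etab)$. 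Combining, $\partial_{\eta_i}A(\etab)=\lambda_i^\Psi(\etab)$ for every $i\in\Zc$, so $[\gb^{\lambdab^\Psi}(\etab)]_i=\partial_{\eta_i}A(\etab)-\lambda_i^\Psi(\etab)=0$, as desired. The only step requiring any care is bookkeeping of the chain rule at $i=0$, where the dependence enters through every component $\rho_1,\ldots,\rho_M$ simultaneously; this is where I expect a reader might slip, but the cancellation is forced by the very definition of $\lambda_0^\Psi$ as the Legendre-like term $\langle\rhob,\nabla\Psi(\rhob)\rangle-\Psi(\rhob)$.
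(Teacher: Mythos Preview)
Your proof is correct and takes a genuinely different, cleaner route than the paper's. The paper proves $\gb^{\lambdab^\Psi}\equiv\mathbf{0}$ by brute force: for each coordinate $j$ it expands every term $\partial_{\eta_j}\lambda_z^\Psi$ via the chain rule, which brings in the Hessian $\nabla^2\Psi$, then sums $\sum_z \eta_z\,\partial_{\eta_j}\lambda_z^\Psi$ and checks that the second-order terms cancel, treating $j=0$ and $j\in[M]$ as separate cases. You instead invoke the product-rule identity $\gb^{\lambdab}=\nabla f^{\lambdab}-\lambdab$ (precisely the identity appearing in the proof of Lemma~\ref{lem:proper_loss}), reduce the claim to $\nabla f^{\lambdab^\Psi}=\lambdab^\Psi$, and verify this after the neat collapse $f^{\lambdab^\Psi}(\etab)=-\eta_0\Psi(\rhob)$. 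The payoff is that you never touch $\nabla^2\Psi$: the sums that would produce second-order terms cancel \emph{before} you differentiate, so only first derivatives of $\Psi$ enter the computation. Your argument also makes transparent why the construction works --- $\lambdab^\Psi$ is, by design, exactly the ambient gradient of the perspective $\etab\mapsto-\eta_0\Psi(\rhob)$. One cosmetic slip: you write ``ambient $\etab\in\{1\}\times\Real_+^M$-coordinates'', but that set is where $\rhob$ lives; you mean the ambient $\Real^{M+1}$ coordinates for $\etab$, with $\eta_0,\ldots,\eta_M$ treated as independent, which is what both you and the paper actually do.
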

\begin{proof}[Proof of Proposition~\ref{prop:induced_loss}]
By Lemma~\ref{lem:gradient_induced_loss}, we have $\gb^{\lambdab^\Psi}(\etab)\equiv 0$.
Further, since $\etab\mapsto -f^{\lambdab^{\Psi}}(\etab)
= \eta_0 \Psi(\rhob)$ is a perspective of the function $\rhob\mapsto \Psi(\rhob)$, 
$f^{\lambdab^\Psi}$ must be (strictly) concave if $\Psi$ is (strictly) convex.
Hence, by Theorem~\ref{thm:proper_loss}, 
we conclude that $\lambdab^{\Psi}$ is (strictly) proper.
\end{proof}

We now prove 
Lemma~\ref{lem:gradient_induced_loss}.
\begin{proof}[Proof of Lemma~\ref{lem:gradient_induced_loss}]
Consider 
\[
g_z^{\lambdab^\Psi}(\etab) = \sum_{z=0}^M \eta_z \frac{\partial\lambda_z^{\Psi}(\etab)}{\partial\eta_z}.
\]
Note that $\rho_{z'}=\eta_{z'}/\eta_0$ for $z'=1,\ldots,M$, we have
\[
\frac{\partial\rho_{z'}}{\partial\eta_z} 
= \begin{cases}
-\frac{\rho_{z'}}{\eta_0} & z=0 \\
\frac{\ones\{z=z'\}}{\eta_0} & z=1,\ldots, M.
\end{cases}
\]
\paragraph{Case 1: $z=0$.}
If $z=0$, $\lambda_0^\Psi(\etab)=\langle\nabla \Psi(\rhob),\rhob\rangle - \Psi(\rhob)$. 
Hence, we have
\begin{align*}
\frac{\partial\lambda_0^{\Psi}(\etab)}{\partial\eta_0}
&= 
\sum_{z'=1}^M \frac{\partial\rho_{z'}}{\partial\eta_z} 
\frac{\partial}{\partial\rho_{z'}}(\langle\nabla \Psi(\rhob),\rhob\rangle - \Psi(\rhob))\\
&= \sum_{z'=1}^M
-\frac{\rho_{z'}}{\eta_0}
(\nabla^2\Psi(\rhob)\rhob)_{z'}\\
&= -\frac{1}{\eta_0}(\langle \rhob,\nabla^2\Psi(\rhob)\rhob)\rangle - (\nabla^2\Psi(\rhob)\rhob)_0).
\numberthis\label{eq:yzero_zzero}
\end{align*}
If $1\le z\le M$, $\lambda_z^\Psi(\etab)=-(\nabla \Psi(\rhob))_z$, and thus
\begin{align*}
\frac{\partial\lambda_z^{\Psi}(\etab)}{\partial\eta_0}
&= \sum_{z'=1}^M \frac{\partial\rho_{z'}}{\partial\eta_z} 
\frac{\partial}{\partial\rho_{z'}}\Bigl(-\frac{\partial\Psi(\rhob)}{\partial\rho_z}\Bigr)\\
&= \frac{1}{\eta_0} \sum_{z'=1}^M \rho_{z'} \frac{\partial^2\Psi(\rhob)}{\partial\rho_z \partial\rho_{z'}}\\
&= \frac{1}{\eta_0} (\nabla^2\Psi(\rhob)\rhob)_z.
\numberthis\label{eq:yzero_znonzero}
\end{align*}
From \eqref{eq:yzero_zzero} and \eqref{eq:yzero_znonzero}, we have
\begin{align*}
g_0^{\lambdab^\Psi}(\etab) 
&= \sum_{z=0}^M \eta_z \frac{\partial\lambda_z^{\Psi}(\etab)}{\partial\eta_0}\\
&= -(\langle \rhob,\nabla^2\Psi(\rhob)\rhob)\rangle - (\nabla^2\Psi(\rhob)\rhob)_0)
+ \sum_{z=1}^M \frac{\eta_z}{\eta_0} (\nabla^2\Psi(\rhob)\rhob)_z\\
&= -\langle \rhob,\nabla^2\Psi(\rhob)\rhob)\rangle 
+ \sum_{z=0}^M \rho_z (\nabla^2\Psi(\rhob)\rhob)_z\\
&= 0.
\end{align*}

\paragraph{Case 2: $1\le z\le M$.}
If $z=0$, we have 
\[
\frac{\partial\lambda_0^{\Psi}(\etab)}{\partial\eta_z}
= \frac{1}{\eta_0} (\nabla^2\Psi(\rhob)\rhob)_z.
\numberthis\label{eq:ynonzero_zzero}
\]
If $1\le z\le M$, 
\begin{align*}
\frac{\partial\lambda_z^{\Psi}(\etab)}{\partial\eta_z}
&= -\frac{\partial}{\partial\eta_z} \frac{\partial\Psi(\rhob)}{\partial\rho_z} \\
&= -\frac{\partial\rho_z}{\partial\eta_z} \frac{\partial^2\Psi(\rhob)}{\partial\rho_z \partial\rho_z} \\
&= -\frac{1}{\eta_0} \frac{\partial^2\Psi(\rhob)}{\partial\rho_z \partial\rho_z}.
\numberthis\label{eq:ynonzero_znonzero}
\end{align*}
Therefore, from \eqref{eq:ynonzero_zzero} and \eqref{eq:ynonzero_znonzero}, we have
\begin{align*}
g_z^{\lambdab^\Psi}(\etab) 
&= \sum_{z=0}^M \eta_z \frac{\partial\lambda_z^{\Psi}(\etab)}{\partial\eta_z}\\
&= (\nabla^2\Psi(\rhob)\rhob)_z - \sum_{z=1}^M \rho_z \frac{\partial^2\Psi(\rhob)}{\partial\rho_z \partial\rho_z}\\
&= (\nabla^2\Psi(\rhob)\rhob)_z - (\nabla^2\Psi(\rhob)\rhob)_z\\
&= 0.
\end{align*}
Hence, we conclude that $\gb^{\lambdab^\Psi}(\etab)\equiv 0$.
\end{proof}

\subsection{Proof of Theorem~\ref{thm:bregman}}
We note that, while the following proof is self-contained, a more detailed technical discussion on the general relationship between proper scoring rule and Bregman divergence minimization in Appendix~\ref{app:sec:bregman_connection}.
\begin{proof}[Proof of Theorem~\ref{thm:bregman}]
Note that we can write
\begin{align*}
\Lc_{K;\nu}^{\Psi}(\etab_\th)-\Lc_{K;\nu}^{\Psi}(\etabref)
&= \E_{p(x_{1:K})}\Bigl[
\langle \etabref(x_{1:K}),\lambdab^{\Psi}(\etab_\th(x_{1:K}))\rangle
-\langle \etabref(x_{1:K}),\lambdab^{\Psi}(\etabref(x_{1:K}))\rangle
\Bigr].
\end{align*}
Now, it is easy to check that, we have
\begin{align*}
\langle \etabref,\lambdab^{\Psi}(\etab_\th)\rangle
&= \etaref_0\Bigl(
-\Psi(\rhob_\th) - \langle \nabla_{\rhob}\Psi(\rhob_\th),\rhob^*-\rhob_\th\rangle\Bigr).
\end{align*}
In particular, 
\begin{align*}
\langle \etabref,\lambdab^{\Psi}(\etabref)\rangle
&= -\etaref_0\Psi(\rhobref).
\end{align*}
Hence, we have
\begin{align*}
\langle \etabref,\lambdab^{\Psi}(\etab_\th)\rangle
-\langle \etabref,\lambdab^{\Psi}(\etabref)\rangle
&= \etaref_0B_{\Psi}(\rhobref,\rhob_\th).
\end{align*}
From this expression, we have
\begin{align*}
\Lc_{K;\nu}^{\Psi}(\etab_\th)-\Lc_{K;\nu}^{\Psi}(\etabref)
&= \E_{p(x_{1:K})}\Bigl[\etaref_0B_{\Psi}(\rhobref,\rhob_\th)\Bigr]\\
&= \E_{p(x_{1:K})}\Bigl[p(z=0|x_{1:K})B_{\Psi}(\rhobref,\rhob_\th)\Bigr]\\
&= p(z=0)\E_{p(x_{1:K}|z=0)}\Bigl[B_{\Psi}(\rhobref,\rhob_\th)\Bigr].
\end{align*}
Since $p(z=0)=\frac{\nu}{K+\nu}$ and $p(x_{1:K}|z=0)=\pnoise(x_1)\pnoise(x_2)\cdots \pnoise(x_K)$ by definition, this concludes the proof.
\end{proof}

\section{Experiment Details}
\label{app:sec:exp_details}
This section provides the details on the experiments in the main text. 
All implementations are based on PyTorch and all experiments were performed on a single NVIDIA GeForce RTX 3090.

\subsection{MI Estimation}
We conducted a series of mutual information (MI) estimation experiments across three distinct data modalities: synthetic Gaussian variables, image-based representations from MNIST, and text embeddings derived from the IMDB dataset, using the standardized mibenchmark framework~\citep{Lee--Rhee2024}. Each experiment paired a 10-dimensional synthetic source variable $X \in \mathbb{R}^{10}$ with a modality-specific target variable $Y$, varying in dimensionality depending on the data type. Across all experiments, we used a consistent training configuration: models were optimized using Adam with a learning rate of 1e-4, trained in stepwise mode for 20,000 iterations.

Across all setups, we evaluated a fixed set of mutual information estimators, including NWJ, NWJ-Plugin, JS, JS-Plugin, InfoNCE, InfoNCE-Anchor, Density Ratio Fitting, and Spherical, with both joint and separable critic types. The critic network in all cases was an MLP composed of two hidden layers with 512 units, ReLU activations, and no normalization or dropout layers. Critic architectures projected inputs into a shared 16-dimensional embedding space. For joint critics, $X$ and $Y$ pairs were concatenated and passed through a single encoder, whereas for separable critics, independent encoders $g(x)$ and $h(y)$ were used. 
Batch size varied by dataset: 16 for Gaussian data and 64 for MNIST and IMDB.
We refer the readers to \citep{Lee--Rhee2024} and their codebase for the rest of the details including the data generation mechanism.

Figure~\ref{fig:mi_estimation_gaussian_cubic} summarizes the result of MI estimation for the Gaussian experiment with cubic transformation with varying batch sizes.
It clearly shows that InfoNCE-anchor exhibits a consistent performance, but we note that $\mathsf{JS}_{\mathsf{plugin}}$ also performs remarkably well in this simple benchmark.

\begin{figure}[ht]
    \centering
    \begin{subfigure}{\textwidth}
        \centering
        \includegraphics[width=\linewidth]{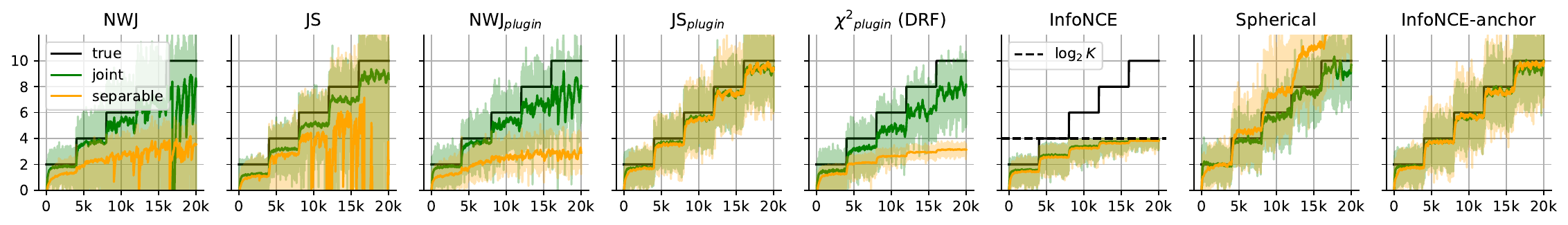}\vspace{-.25em}
        \caption{Gaussian (cubic) with batch size 16.}
    \end{subfigure}
    
    \begin{subfigure}{\textwidth}
        \centering
        \includegraphics[width=\linewidth]{figs/mi_gaussian_cubic_bs64.pdf}\vspace{-.25em}
        \caption{Gaussian (cubic) with batch size 64.}
    \end{subfigure}
    
    \begin{subfigure}{\textwidth}
        \centering
        \includegraphics[width=\linewidth]{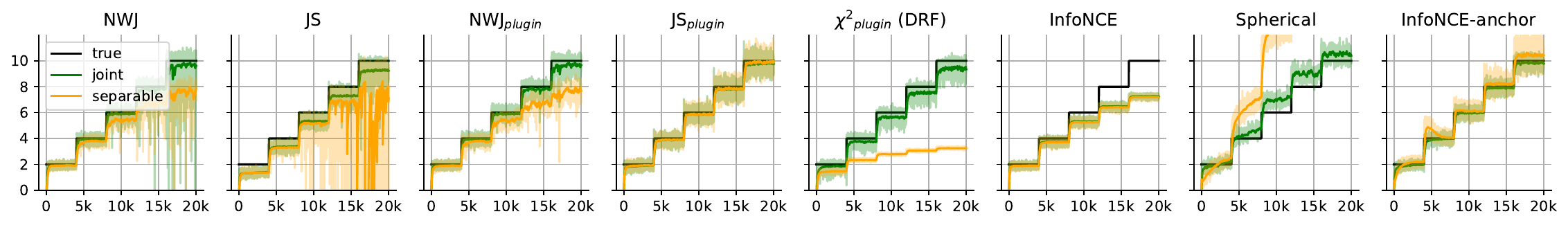}\vspace{-.25em}
        \caption{Gaussian (cubic) with batch size 256.}
    \end{subfigure}

    \caption{Summary of MI estimation results on the standard benchmark on the Gaussian cubic data, with different batch sizes.}
    \label{fig:mi_estimation_gaussian_cubic}
\end{figure}

\subsection{Protein Interaction Prediction}

We followed the same setup of \citet{Gowri--Lun--Klein--Yin2024}, and here we briefly overview the essential part.
We conducted experiments on two datasets derived from ProtTrans5-encoded protein embeddings: one composed of 22,229 kinase–target pairs and another with 1,702 ligand–receptor pairs. Each protein is represented by a 1,024-dimensional vector, and all embeddings were whitened and clipped to the range $[-10, 10]$. Across 20 trials, 170 proteins were randomly selected and held out per trial, ensuring that no interaction in the training set included any of the held-out proteins. The remaining interactions were used for training a mutual information estimator.

Our approach trains a separable critic network to estimate the density ratio via the InfoNCE-anchor objective. 
The critic architecture is a MLP with 4 hidden layers, each containing 256 units, and outputs 32-dimensional embeddings for each input protein, separate encoders  $f(x)$ and $g(y)$ for each side of the pair. 
ReLU activation was used, and no normalization layers were applied by default. We used the Adam optimizer with a learning rate of 1e-4, batch size of 64, and 10,000 training steps. 
We implement early stopping with a patience of 500 steps, based on validation loss, which is monitored every 500 iterations. 
The final model is selected based on the best validation performance and is then used to estimate pointwise mutual information (PMI) for held-out protein pairs. 

We present ROC curves (Figure~\ref{fig:roc_each_problem}) and histograms of learned PMI values (Figure~\ref{fig:histograms}) for each estimator. These two figures clearly demonstrate that InfoNCE-anchor exhibit the best discriminative power.

\begin{figure}[ht]
    \centering
    \begin{subfigure}{\textwidth}
        \centering
        \includegraphics[width=\textwidth]{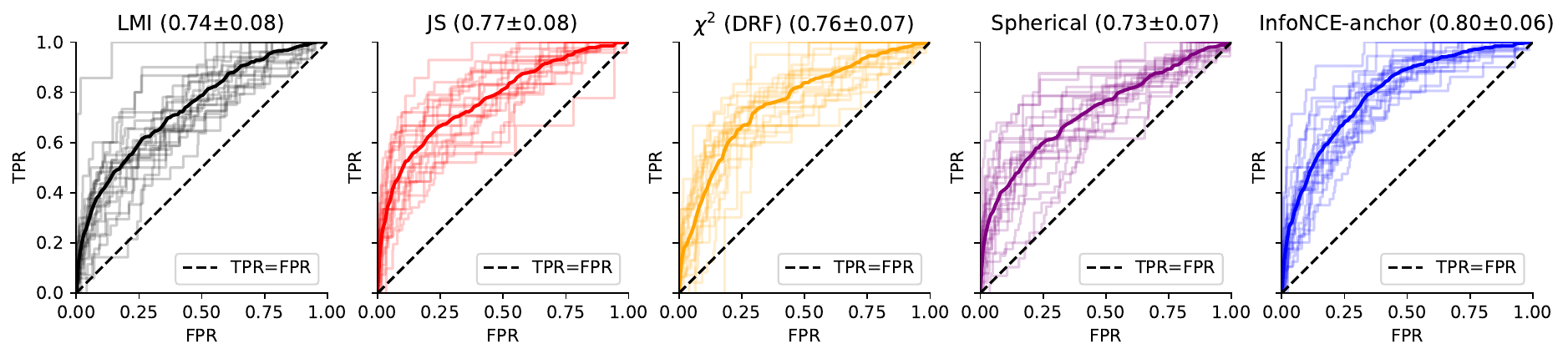}\vspace{-.25em}
        \caption{Kinase}
    \end{subfigure}
    
    \begin{subfigure}{\textwidth}
        \centering
        \includegraphics[width=\textwidth]{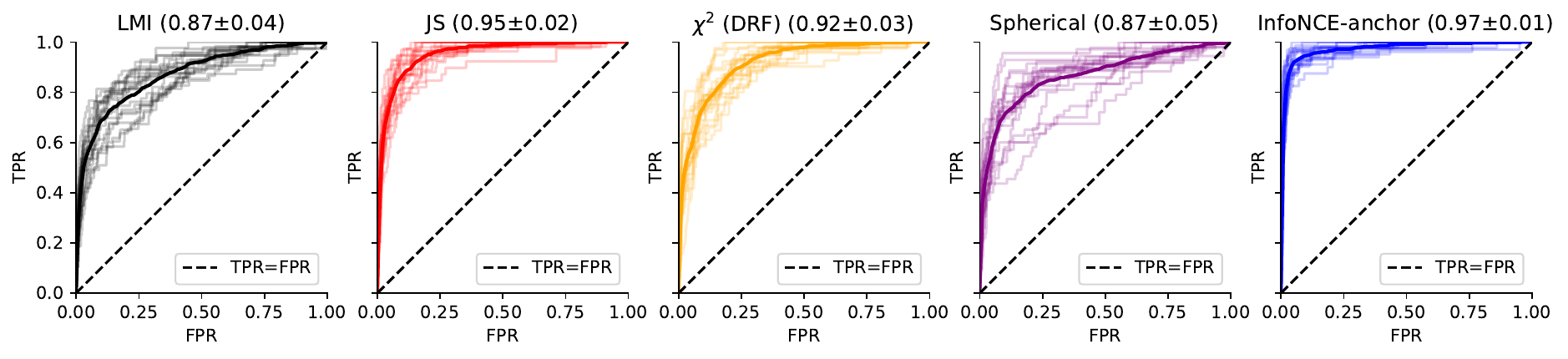}\vspace{-.25em}
        \caption{Ligand}
    \end{subfigure}
    
    \caption{ROC curves from different estimators.}
    \label{fig:roc_each_problem}
\end{figure}

\begin{figure}[ht]
    \centering
    \begin{subfigure}{\textwidth}
        \centering
        \includegraphics[width=\textwidth]{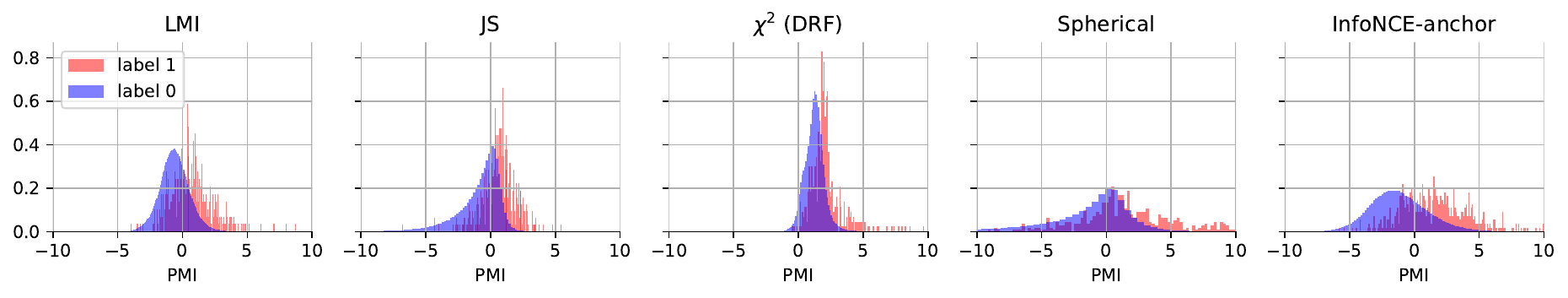}\vspace{-.25em}
        \caption{Kinase--target pair prediction.}
    \end{subfigure}
    
    \begin{subfigure}{\textwidth}
        \centering
        \includegraphics[width=\textwidth]{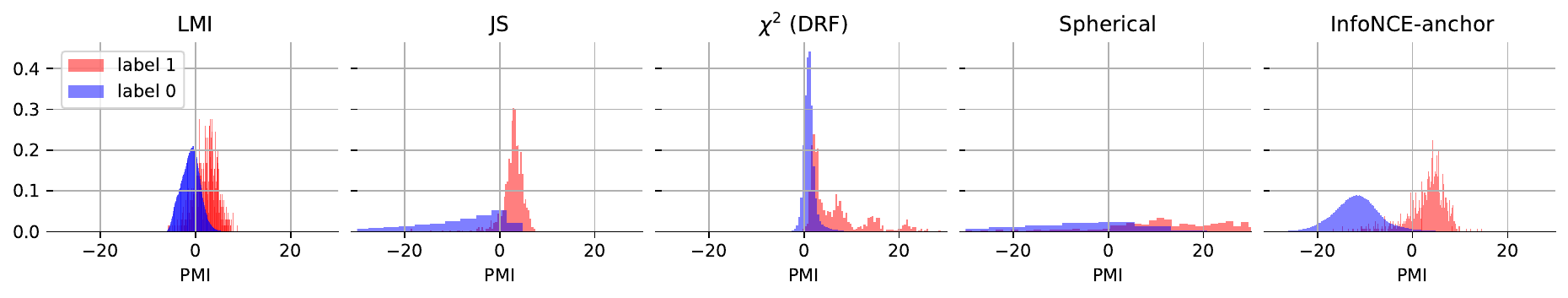}\vspace{-.25em}
        \caption{Ligand--receptor pair prediction.}
    \end{subfigure}
    
    \caption{Histograms of pointwise MI ($\log\frac{p(x,y)}{p(x)p(y)}$) from different estimators.}
    \label{fig:histograms}
\end{figure}

\subsection{Self-supervised Representation Learning}
Here, we provide details on the objective functions we considered in the experiment.
We used the temperature parameter $\tau=0.2$ throughout, unless stated otherwise. 

\begin{itemize}
\item InfoNCE: Log score, $K=B-1$, $\nu=0$, PMI factorization.
\item InfoNCE-anchor: Log score, $K=B-1$, $\nu=1$, PMI factorization.
\item JS: Log score, $K=1$, $\nu=1$, PMI factorization.
\item Spherical: Spherical score, $K=B-1$, $\nu=1$, PD factorization.
\item $\chi^2$: Asymmetric power score with $\a=2$, $K=1$, $\nu=1$. In this case, $\tau=0.1$ was used.
\end{itemize}

We found that the PMI factorization was not effective for all scoring rules other than the log score.
The rest of the experimental details can be found from the codebase of \citet{JMLR:v23:21-1155}.

\end{document}